\def\eqref#1{equation~\ref{#1}}
\def\1{\bm{1}}
\DeclareMathAlphabet{\mathsfit}{\encodingdefault}{\sfdefault}{m}{sl}
\SetMathAlphabet{\mathsfit}{bold}{\encodingdefault}{\sfdefault}{bx}{n}
\newcommand{\E}{\mathbb{E}}
\newcommand{\R}{\mathbb{R}}
\newcommand{\Var}{\mathrm{Var}}
\newtheorem{theorem}{Theorem}[section]
\newtheorem{lemma}[theorem]{Lemma}
\newtheorem{proposition}[theorem]{Proposition}
\newcommand{\e}[1]{\mathbb{E}\left[#1\right]}
\newcommand{\var}[1]{\mathrm{Var}\left[#1\right]}
\newcommand{\cov}[1]{\mathrm{Cov}\left[#1\right]}
\newtheorem{assumption}{Assumption}[section]
\newcommand{\noisevector}{\begin{bmatrix} \epsilon_i \\ 0 \end{bmatrix}}
\title{Reducing the variance in online optimization by transporting past gradients}
\author{%
  Sébastien M. R. Arnold
  \thanks{Work done while at Mila.} \\
  University of Southern California \\
  Los Angeles, CA \\
  \texttt{seb.arnold@usc.edu} \\
  \And
  Pierre-Antoine Manzagol \\
  Google Brain \\
  Montréal, QC \\
  \texttt{manzagop@google.com} \\
  \And
  Reza Babanezhad \\
  University of British Columbia \\
  Vancouver, BC \\
  \texttt{rezababa@cs.ubc.ca} \\
  \And
  Ioannis Mitliagkas \\
  Mila, Université de Montréal \\
  Montréal, QC \\
  \texttt{ioannis@iro.umontreal.ca} \\
  \And
  Nicolas Le Roux \\
  Mila, Google Brain \\
  Montréal, QC \\
  \texttt{nlr@google.com} \\
}
\begin{document}

\maketitle

\begin{abstract}
Most stochastic optimization methods use gradients once before discarding them. While variance reduction methods have shown that reusing past gradients can be beneficial when there is a finite number of datapoints, they do not easily extend to the online setting. One issue is the staleness due to using past gradients. We propose to correct this staleness using the idea of {\em implicit gradient transport} (IGT) which transforms gradients computed at previous iterates into gradients evaluated at the current iterate without using the Hessian explicitly. In addition to reducing the variance and bias of our updates over time, IGT can be used as a drop-in replacement for the gradient estimate in a number of well-understood methods such as heavy ball or Adam. We show experimentally that it achieves state-of-the-art results on a wide range of architectures and benchmarks. Additionally, the IGT gradient estimator yields the optimal asymptotic convergence rate for online stochastic optimization in the restricted setting where the Hessians of all component functions are equal.\footnote{Open-source implementation available at: \url{https://github.com/seba-1511/igt.pth}}
\end{abstract}

    \section{Introduction}
    We wish to solve the following minimization problem:
    \begin{align}
        \label{eq:stochastic_optim}
        \theta^\ast 	&= \arg\min_\theta E_{x\sim p} [f(\theta, x)] \; ,
    \end{align}
    where we only have access to samples $x$ and to a first-order oracle that gives us, for a given $\theta$ and a given $x$, the derivative of $f(\theta, x)$ with respect to $\theta$, i.e. $\frac{\partial f(\theta, x)}{\partial \theta} = g(\theta, x)$.
    It is known~\cite{robbins1951stochastic} that, when $f$ is smooth and strongly convex, there is a converging algorithm for Problem~\ref{eq:stochastic_optim} that takes the form $\theta_{t+1}	= \theta_t - \alpha_t g(\theta_t, x_t)$, where $x_t$ is a sample from $p$. This algorithm, dubbed stochastic gradient (SG), has a convergence rate of $O(1/t)$ (see for instance~\cite{bubeck2015convex}), within a constant factor of the minimax rate for this problem. When one has access to the true gradient $g(\theta) = E_{x \sim p} [g(\theta, x)]$ rather than just a sample, this rate dramatically improves to $O(e^{-\nu t})$ for some $\nu > 0$.

    In addition to hurting the convergence speed, noise in the gradient makes optimization algorithms harder to tune. Indeed, while full gradient descent is convergent for constant stepsize $\alpha$, and also amenable to line searches to find a good value for that stepsize, the stochastic gradient method from~\cite{robbins1951stochastic} with a constant stepsize only converges to a ball around the optimum~\cite{schmidt2014convergence}.\footnote{Under some conditions, it does converge linearly to the optimum~\citep[e.g.,][]{Vaswani19}} Thus, to achieve convergence, one needs to use a decreasing stepsize. While this seems like a simple modification, the precise decrease schedule can have a dramatic impact on the convergence speed. While theory prescribes $\alpha_t = O(t^{-\alpha})$ with $\alpha \in (1/2, 1]$ in the smooth case, practictioners often use larger stepsizes like $\alpha_t = O(t^{-1/2})$ or even constant stepsizes.

    When the distribution $p$ has finite support, Eq.~\ref{eq:stochastic_optim} becomes a finite sum and, in that setting, it is possible to achieve efficient variance reduction and drive the noise to zero, allowing stochastic methods to achieve linear convergence rates~\cite{leroux2012stochastic,johnson2013accelerating,zhang2013linear,mairal2013optimization,Shalev-Shwartz2013sdca,defazio2014saga}. Unfortunately, the finite support assumption is critical to these algorithms which, while valid in many contexts, does not have the broad applicability of the standard SG algorithm. Several works have extended these approaches to the online setting by applying these algorithms while increasing $N$~\cite{babanezhad2015wasting,hofmann2015variance} but they need to revisit past examples mutiple times and are not truly online.
    
    Another line of work reduces variance by averaging iterates~\cite{polyak1992acceleration,lacoste2012simpler,bach2013non,flammarion2015averaging,dieuleveut2017harder,dieuleveut2017bridging,JMLR:v18:16-595}. While these methods converge for a constant stepsize in the stochastic case\footnote{Under some conditions on $f$.}, their practical speed is heavily dependent on the fraction of iterates kept in the averaging, a hyperparameter that is thus hard to tune, and they are rarely used in deep learning.

    Our work combines two existing ideas and adds a third: a) At every step, it updates the parameters using a weighted average of past gradients, like in SAG~\citep{leroux2012stochastic,schmidt2017minimizing}, albeit with a different weighting scheme; b) It reduces the bias and variance induced by the use of these old gradients by transporting them to ``equivalent'' gradients computed at the current point, similar to~\cite{gower2017tracking}; c) It does so implicitly by computing the gradient at a parameter value different from the current one. The resulting gradient estimator can then be used as a plug-in replacement of the stochastic gradient within any optimization scheme. Experimentally, both SG using our estimator and its momentum variant outperform the most commonly used optimizers in deep learning.
    
    % We propose a gradient estimator which uses all previous stochastic gradients computed at specific points to jointly minimize the bias and variance. We show how replacing the stochastic gradient with this estimator leads to a converging algorithm with a constant stepsize for both noisy and noiseless problems, at a sublinear and linear rate respectively, in the particular case where the Hessians are identical for all $x$ and when the noise is bounded.

    % As, in practice, the assumption of identical Hessians is never satisfied, leading to an increase in the bias of our estimator, we show how a forgetting scheme giving more weight to more recent examples can lead to state-of-the-art empirical performance on common datasets.

    % \section{Momentum and its effect on variance}
    \section{Momentum and other approaches to dealing with variance}
    \label{sec:momentum-effect-variance}
    
    Stochastic variance reduction methods use an average of past gradients to reduce the variance of the gradient estimate. At first glance, it seems like their updates are similar to that of momentum~\cite{polyak1964some}, also known as the heavy ball method, which performs the following updates\footnote{This is slightly different from the standard formulation but equivalent for constant $\gamma_t$.}:
    \begin{align*}
        v_t 	&= \gamma_t v_{t-1} + (1 - \gamma_t) g(\theta_t, x_t), \qquad v_{0}  = g(\theta_0, x_0) \\
        \theta_{t+1}	&= \theta_t - \alpha_t v_t\; .
    \end{align*}
    When $\gamma_t = \gamma$, this leads to $\displaystyle \theta_{t+1}	= \theta_t - \alpha_t \left(\gamma^t g(\theta_0, x_0) + (1 - \gamma) \sum_{i=1}^t  \gamma^{t-i} g(\theta_i, x_i)\right)$.
    Hence, the heavy ball method updates the parameters of the model using an average of past gradients, bearing similarity with SAG~\cite{leroux2012stochastic}, albeit with exponential instead of uniform weights.
    
    Interestingly, while momentum is a popular method for training deep networks, its theoretical analysis in the stochastic setting is limited~\citep{sutskever2013importance}, except in the particular setting when the noise converges to 0 at the optimum~\citep{loizou2017momentum}. Also surprising is that, despite the apparent similarity with stochastic variance reduction methods, current convergence rates are slower when using $\gamma > 0$ in the presence of noise~\cite{Schmidt11_inexact}, although this might be a limitation of the analysis.
    
    \subsection{Momentum and variance}
    We propose here an analysis of how, on quadratics, using past gradients as done in momentum does not lead to a decrease in variance. If gradients are stochastic, then $\Delta_t = \theta_t - \theta^\ast$ is a random variable. Denoting $\epsilon_i$ the noise at timestep $i$, i.e. $\displaystyle g(\theta_i, x_i) = g(\theta_i) + \epsilon_i$, and writing $\Delta_t - E[\Delta_t] = \alpha \sum_{i=0}^t N_{i, t}\epsilon_i$, with $N_{i,t}$ the impact of the noise of the $i$-th datapoint on the $t$-th iterate, we may now analyze the total impact of each $\epsilon_i$ on the iterates.
    Figure~\ref{fig:N-gamma} shows the impact of $\epsilon_i$ on $\Delta_t - E[\Delta_t]$ as measured by $N_{i,t}^2$ for three datapoints ($i=1$, $i=25$ and $i=50$) as a function of $t$ for stochastic gradient ($\gamma=0$, left) and momentum ($\gamma=0.9$, right).
    As we can see, when using momentum, the variance due to a given datapoint first increases as the noise influences both the next iterate (through the parameter update) and the subsequent updates (through the velocity).
    Due to the weight $1-\gamma$ when a point is first sampled, a larger value of $\gamma$ leads to a lower immediate impact of the noise of a given point on the iterates. However, a larger $\gamma$ also means that the noise of a given gradient is kept longer, leading to little or no decrease of the total variance (dashed blue curve). Even in the case of stochastic gradient, the noise at a given timestep carries over to subsequent timesteps, even if the old gradients are not used for the update, as the iterate itself depends on the noise.
    
\begin{figure}
\begin{center}
\begin{subfigure}{.42\textwidth}
    \includegraphics[width=\textwidth]{./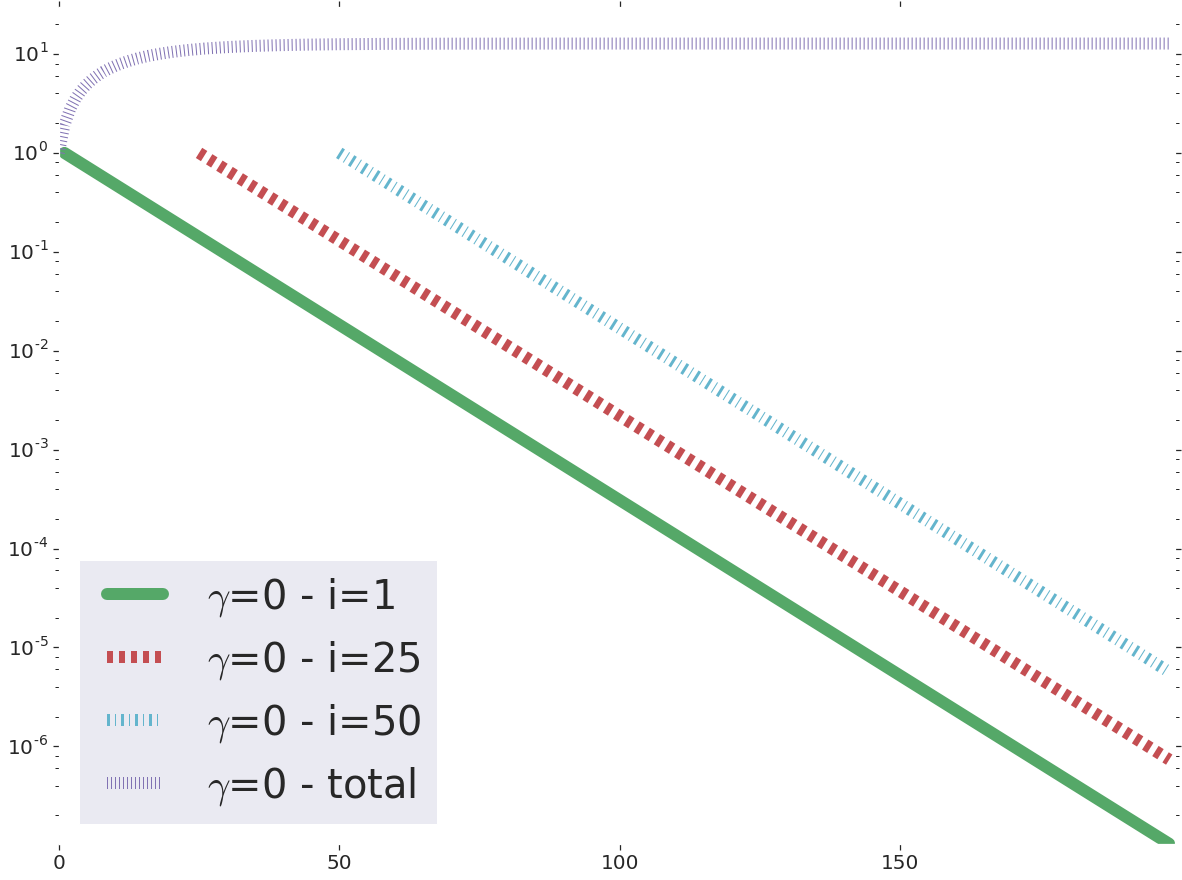}
    \caption{Stochastic gradient \label{fig:N_gamma_stoch}}
\end{subfigure}
\hspace{15mm}
\begin{subfigure}{.42\textwidth}
    \includegraphics[width=\textwidth]{./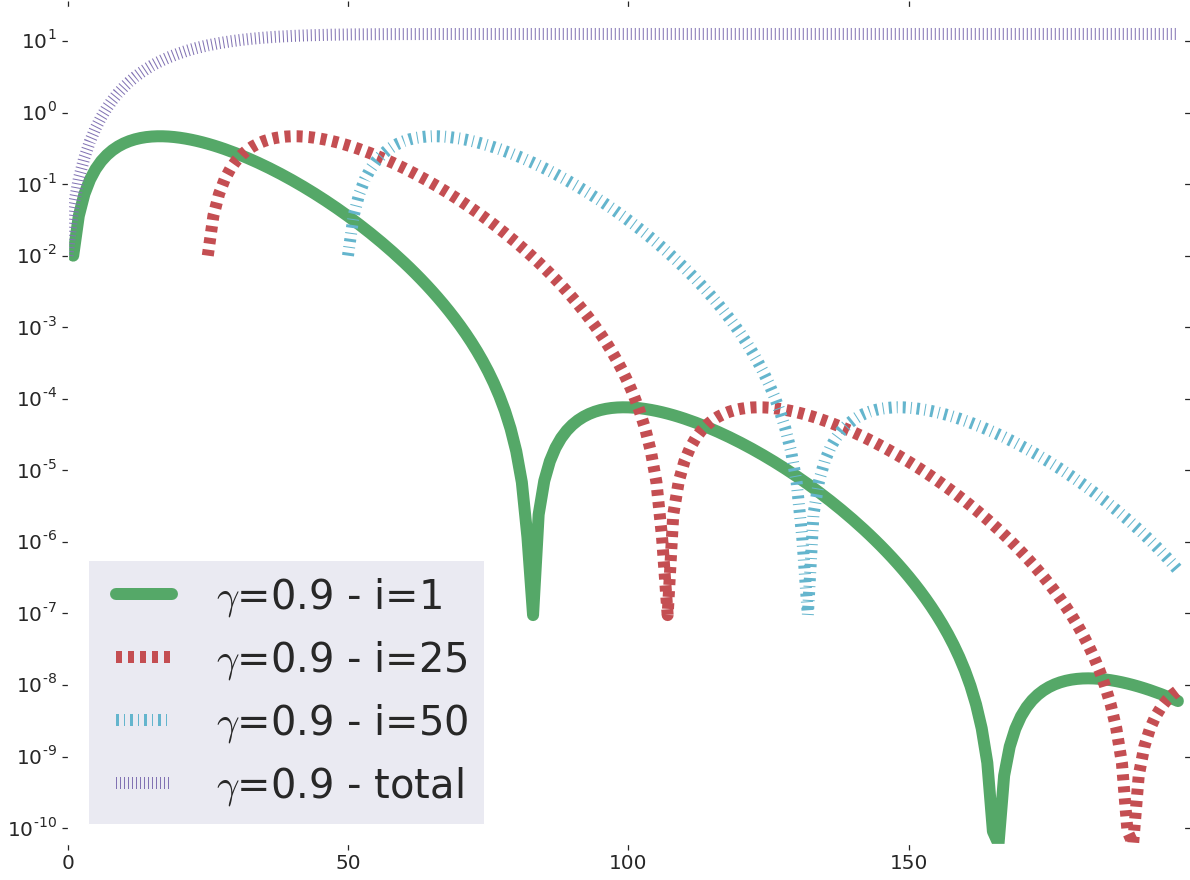}
    \caption{Momentum - $\gamma = 0.9$ \label{fig:N_gamma_mom}}
\end{subfigure}\\
\begin{subfigure}{.42\textwidth}
    \includegraphics[width=\textwidth]{./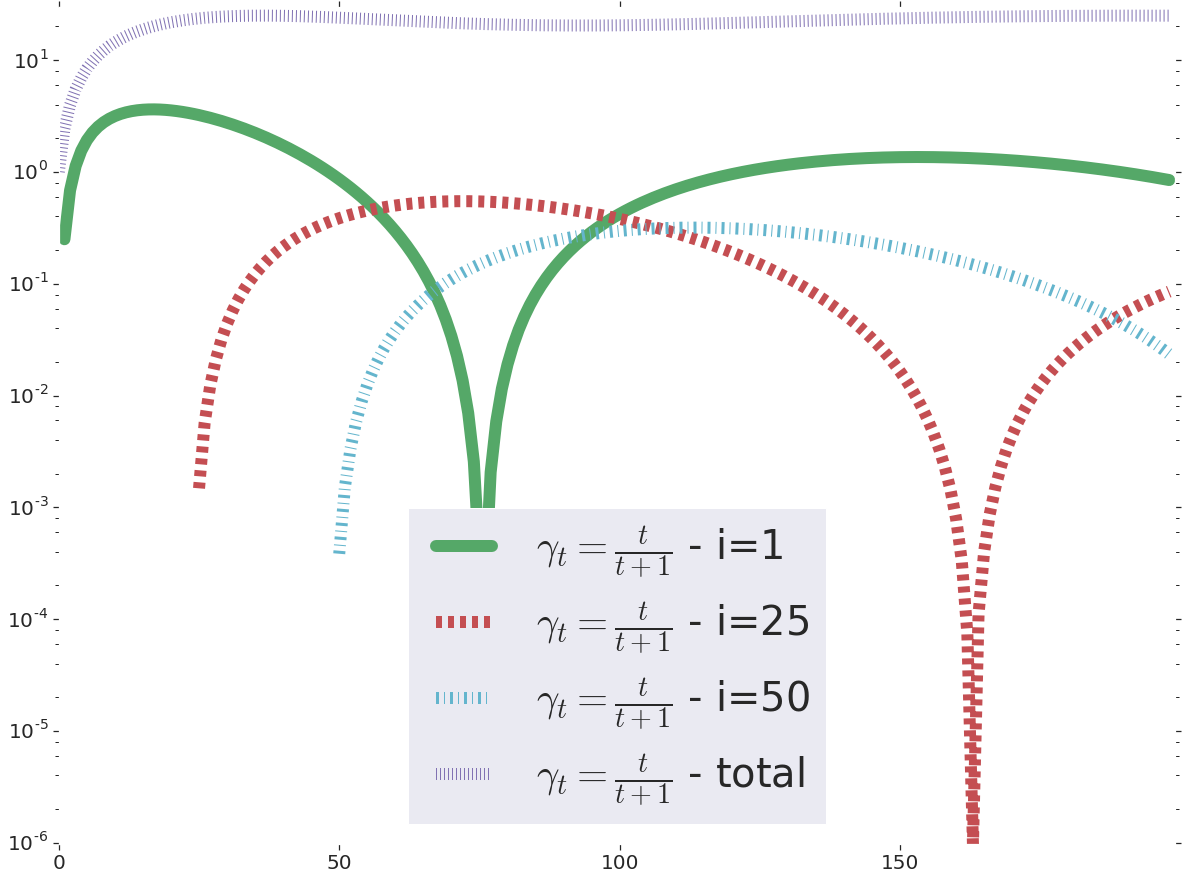}
    \caption{Momentum - $\gamma_t = 1 - \frac{1}{t}$ \label{fig:N_gamma_mominc}}
\end{subfigure}
\hspace{15mm}
\begin{subfigure}{.42\textwidth}
\includegraphics[width=\textwidth]{./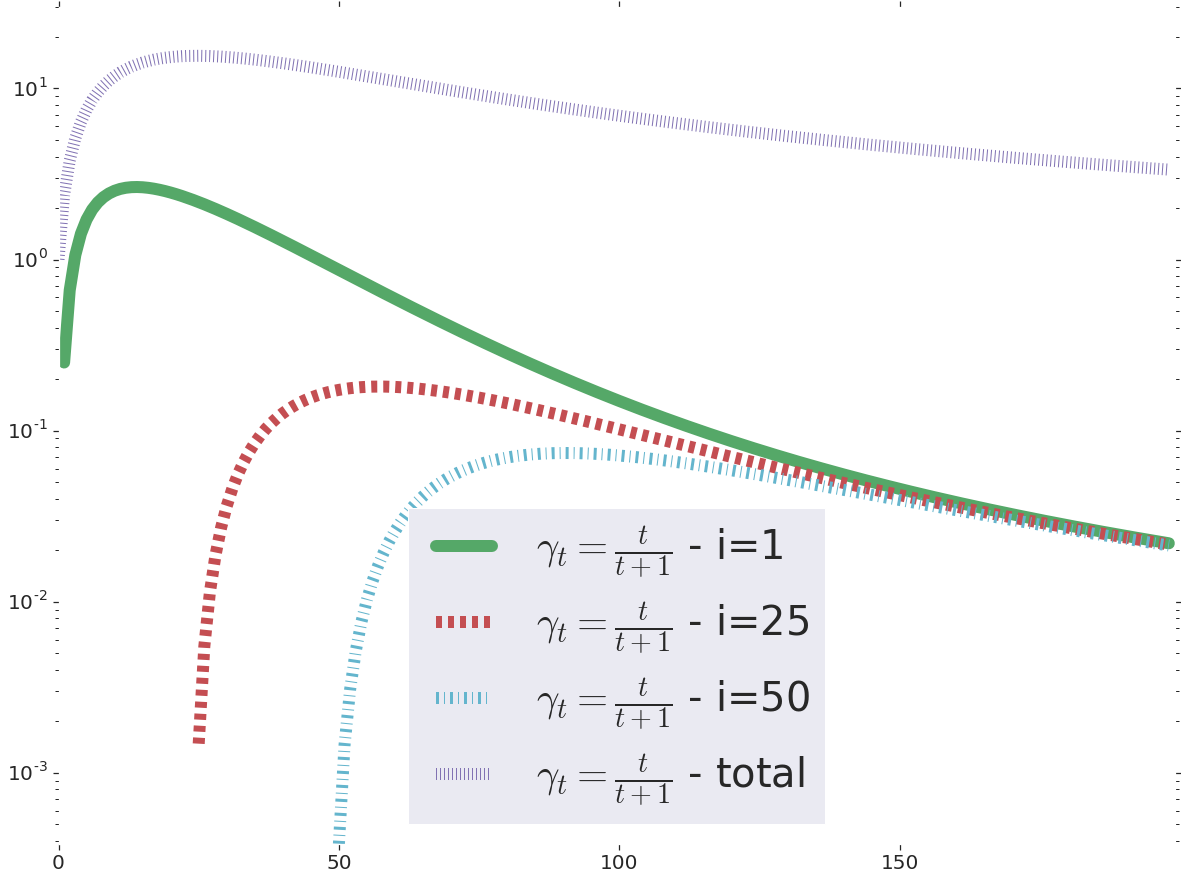}
\caption{Momentum - $\gamma_t = 1 - \frac{1}{t}$ with IGT.\label{fig:N_gamma_mominc_shift}}
\end{subfigure}
\end{center}
\caption{Variance induced over time by the noise from three different datapoints ($i=1$, $i=25$ and $i=50$) as well as the total variance for SG ($\gamma=0$, \emph{top left}), momentum with fixed $\gamma=0.9$ (\emph{top right}), momentum with increasing $\gamma_t = 1 - \frac{1}{t}$ without (\emph{bottom left}) and with (\emph{bottom right}) transport. The impact of the noise of each gradient $\epsilon_i$ increases for a few iterations then decreases. Although a larger $\gamma$ reduces the maximum impact of a given datapoint, the total variance does not decrease. With transport, noises are now equal and total variance decreases. The y-axis is on a log scale.} \label{fig:N-gamma}
\end{figure}
    
    At every timestep, the contribution to the noise of the 1st, the 25th and the 50th points in Fig.~\ref{fig:N-gamma} is unequal. If we assume that the $\epsilon_i$ are i.i.d., then the total variance would be minimal if the contribution from each point was equal. Further, one can notice that the impact of datapoint $i$ is only a function of $t-i$ and not of $t$. This guarantees that the total noise will not decrease over time.

    To address these two points, one can increase the momentum parameter over time. In doing so, the noise of new datapoints will have a decreasing impact on the total variance as their gradient is multiplied by $1 - \gamma_t$. Figure~\ref{fig:N_gamma_mominc} shows the impact $N_{i,t}^2$ of each noise $\epsilon_i$ for an increasing momentum $\gamma_t = 1 - \frac{1}{t}$. The peak of noise for $i=25$ is indeed lower than that of $i=1$. However, the variance still does not go to 0. This is because, as the momentum parameter increases, the update is an average of many gradients, including stale ones. Since these gradients were computed at iterates already influenced by the noise over previous datapoints, that past noise is amplified, as testified by the higher peak at $i=1$ for the increasing momentum. Ultimately, increasing momentum does not lead to a convergent algorithm in the presence of noise when using a constant stepsize.

    % Hence, despite the similarity of momentum with SAG, an increasing momentum does not lead to a decreasing total variance and thus does not offer convergence for a constant stepsize. Indeed, although the increasing momentum decreases the immediate noise induced by new datapoints, a requirement to get convergence with a fixed stepsize, it also increases the cumulative noise of past datapoints.
    
    % We now present how to mitigate this cumulating effect while still benefitting from the low immediate variance of new datapoints, allowing us to use an increasing momentum and enjoy a decreasing variance. After the increasing momentum, this modification will be the second main element of our algorithm.

    \subsection{SAG and Hessian modelling}
    The impact of the staleness of the gradients on the convergence is not limited to momentum. In SAG, for instance, the excess error after $k$ updates is proportional to $\left(1 - \min\left\{\frac{1}{16\widehat{\kappa}}, \frac{1}{8N}\right\}\right)^k$, compared to the excess error of the full gradient method which is $\left(1 - \frac{1}{\kappa}\right)^k$ where $\kappa$ is the condition number of the problem.~\footnote{The $\widehat{\kappa}$ in the convergence rate of SAG is generally larger than the $\kappa$ in the full gradient algorithm.} The difference between the two rates is larger when the minimum in the SAG rate is the second term. This happens either when $\widehat{\kappa}$ is small, i.e. the problem is well conditioned and a lot of progress is made at each step, or when $N$ is large, i.e. there are many points to the training set. Both cases imply that a large distance has been travalled between two draws of the same datapoint.

    Recent works showed that correcting for that staleness by modelling the Hessian~\cite{wai2017curvature,gower2017tracking} leads to improved convergence. As momentum uses stale gradients, the velocity is an average of current and past gradients and thus can be seen as an estimate of the true gradient at a point which is not the current one but rather a convex combination of past iterates. As past iterates depend on the noise of previous gradients, this bias in the gradients amplifies the noise and leads to a non-converging algorithm. We shall thus ``transport'' the old stochastic gradients  $g(\theta_i, x_i)$ to make them closer to their corresponding value at the current iterate, $g(\theta_t, x_i)$. Past works did so using the Hessian or an explicit approximation thereof, which can be expensive and difficult to compute and maintain. We will resort to using \textit{implicit transport}, a new method that aims at compensating the staleness of past gradients without making explicit use of the Hessian.

    \section{Converging optimization through implicit gradient transport}
    \label{sec:implicit_transport}
    Before showing how to combine the advantages of both increasing momentum and gradient transport, we demonstrate how to transport gradients implicitly. This transport is only exact under a strong assumption that will not hold in practice. However, this result will serve to convey the intuition behind implicit gradient transport. We will show in Section~\ref{sec:ATA} how to mitigate the effect of the unsatisfied assumption.

    \subsection{Implicit gradient transport}
    Let us assume that we received samples $x_0, \ldots, x_t$ in an online fashion. We wish to approach the full gradient $g_t(\theta_t)= \frac{1}{t+1} \sum_{i=0}^t g(\theta_t, x_i)$ as accurately as possible. We also assume here that a) We have a noisy estimate $\widehat{g}_{t-1}(\theta_{t-1})$ of $g_{t-1}(\theta_{t-1})$; b) We can compute the gradient $g(\theta, x_t)$ at any location $\theta$. We shall seek a $\theta$ such that
    \begin{align*}
        \frac{t}{t+1}\widehat{g}_{t-1}(\theta_{t-1}) + \frac{1}{t+1} g(\theta, x_t) \approx g_t(\theta_t) \; .
    \end{align*}
    To this end, we shall make the following assumption:
    \begin{assumption}
        \label{ass:quad}
        All individual functions $f(\cdot, x)$ are quadratics with the same Hessian $H$.
    \end{assumption}
    This is the same assumption as~\cite[Section 4.1]{flammarion2015averaging}. Although it is unlikely to hold in practice, we shall see that our method still performs well when that assumption is violated.

    Under Assumption~\ref{ass:quad}, we then have (see details in Appendix)
    \begin{align*}
        g_t(\theta_t)		&= \frac{t}{t+1} g_{t-1}(\theta_{t}) + \frac{1}{t+1} g(\theta_t, x_t)\\
        %&= \frac{t}{t+1} \left(g_{t-1}(\theta_{t-1}) + H(\theta_t - \theta_{t-1})\right) + \frac{1}{t+1} g(\theta_t, x_t)\\
        %&= \frac{t}{t+1} g_{t-1}(\theta_{t-1})+ \frac{1}{t+1} \left(g(\theta_t, x_t) + t H (\theta_t - \theta_{t-1})\right)\\
        %&= \frac{t}{t+1} g_{t-1}(\theta_{t-1}) + \frac{1}{t+1} g(\theta_t + t (\theta_t - \theta_{t-1}), x_t) \\
        &\approx \frac{t}{t+1} \widehat{g}_{t-1}(\theta_{t-1}) + \frac{1}{t+1} g(\theta_t + t (\theta_t - \theta_{t-1}), x_t) \; .
    \end{align*}
    
    Thus, we can transport our current estimate of the gradient by computing the gradient on the new point at a shifted location $\theta = \theta_t + t (\theta_t - \theta_{t-1})$. This extrapolation step is reminiscent of Nesterov's acceleration with the difference that the factor in front of $\theta_t - \theta_{t-1}$, $t$, is not bounded.

    \subsection{Combining increasing momentum and implicit gradient transport}
    \label{sec:igt}
    We now describe our main algorithm, Implicit Gradient Transport (IGT). IGT uses an increasing momentum $\gamma_t = \frac{t}{t + 1}$. At each step, when updating the velocity, it computes the gradient of the new point at an extrapolated location so that the velocity $v_t$ is a good estimate of the true gradient $g(\theta_t)$. %The pseudo-code can be found in Alg.~\ref{alg:implicit}.

    % \begin{algorithm}
    %     \caption{Implicit gradient transport}
    %     \label{alg:implicit}
    %     \begin{algorithmic}[1] % The number tells where the line numbering should start
    %         \Procedure{IGT}{Stepsize $\alpha$, Initial parameters $\theta_0$}
    %         \State $v_0 \gets g(\theta_0, x_0)$
    %         \State $\theta_1 \gets \theta_0 - \alpha v_0$
    %         \For{$t=1, \ldots, T-1$}
    %         \State $\gamma_t \gets \frac{t}{t+1}$
    %         \State $v_t \gets \gamma_t v_{t-1} + (1 - \gamma_t) g\left(\theta_t + \frac{\gamma_t}{1 - \gamma_t} (\theta_t - \theta_{t-1}), x_t\right)$
    %         \State $\theta_{t+1} \gets \theta_t - \alpha v_t$
    %         \EndFor\label{igtendfor}
    %         \State \textbf{return} $\theta_T$
    %         \EndProcedure
    %     \end{algorithmic}
    % \end{algorithm}

    We can rewrite the updates to eliminate the velocity $v_t$, leading to the update:
    \begin{align}
        \label{eq:igt_update}
        \tag{IGT}
        \theta_{t+1} &= \frac{2t + 1}{t+1} \theta_t - \frac{t}{t+1} \theta_{t-1}- \frac{\alpha}{t+1}  g\left(\theta_t + t (\theta_t - \theta_{t-1}), x_t\right) \; .
    \end{align}

    We see in Fig.~\ref{fig:N_gamma_mominc_shift} that IGT allows a reduction in the total variance, thus leading to convergence with a constant stepsize. This is captured by the following proposition:
    \begin{proposition}
        \label{prop:cvg}
        If $f$ is a quadratic function with positive definite Hessian $H$ with largest eigenvalue $L$ and condition number $\kappa$ and if the stochastic gradients satisfy: $g(\theta, x) = g(\theta) + \epsilon$ with $\epsilon$ a random i.i.d. noise with covariance bounded by $BI$, then Eq.~\ref{eq:igt_update} with stepsize $\alpha = 1 / L$ leads to iterates $\theta_t$ satisfying
        \begin{align*}
            E[\|\theta_t - \theta^\ast\|^2] &\leq \left(1 - \frac{1}{\kappa}\right)^{2t} \|\theta_0 - \theta^\ast\|^2 + \frac{d\alpha^2 B \bar{\nu}_0^2}{t} \; ,
        \end{align*}
        with $\nu = (2 + 2\log\kappa)\kappa$ for every $t > 2\kappa$.
    \end{proposition}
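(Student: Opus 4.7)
The plan is to exploit Assumption~\ref{ass:quad} to collapse the IGT velocity into the true gradient at $\theta_t$ plus a clean running-average of the noise, after which the analysis becomes a linear stochastic recurrence. Writing the velocity form $v_t = \frac{t}{t+1}v_{t-1} + \frac{1}{t+1}g(\hat\theta_t, x_t)$ with shifted point $\hat\theta_t := \theta_t + t(\theta_t - \theta_{t-1})$ and unrolling gives $v_t = \frac{1}{t+1}\sum_{i=0}^t g(\hat\theta_i, x_i)$. Under the common-Hessian assumption, $g(\theta, x_i) = H(\theta - \theta^\ast) + \epsilon_i$, so with $\Delta_i := \theta_i - \theta^\ast$ we obtain $g(\hat\theta_i, x_i) = H\bigl((i+1)\Delta_i - i\Delta_{i-1}\bigr) + \epsilon_i$. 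The key observation is that $\sum_{i=0}^t \bigl[(i+1)\Delta_i - i\Delta_{i-1}\bigr]$ telescopes to $(t+1)\Delta_t$, yielding the identity $v_t = H\Delta_t + \bar\epsilon_t$ with $\bar\epsilon_t := \frac{1}{t+1}\sum_{i=0}^t \epsilon_i$. This is the analytic statement that IGT recovers the true gradient at the current iterate plus a noise term whose variance decays as $1/t$.

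Substituting into $\theta_{t+1} = \theta_t - \alpha v_t$ gives the linear recurrence $\Delta_{t+1} = (I - \alpha H)\Delta_t - \alpha \bar\epsilon_t$. Since the noise is zero-mean I would use $E\|\Delta_t\|^2 = \|E\Delta_t\|^2 + E\|\Delta_t - E\Delta_t\|^2$. Iterating expectations, $E\Delta_t = (I - \alpha H)^t \Delta_0$, and with $\alpha = 1/L$ the operator norm of $I - \alpha H$ is exactly $1 - 1/\kappa$, so $\|E\Delta_t\|^2 \leq (1 - 1/\kappa)^{2t}\|\Delta_0\|^2$. This is already the first term of the claim, and the nontrivial work lies in the second moment of the centered iterate.

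For the variance, I would unroll $\Delta_t - E\Delta_t = -\alpha\sum_{k=0}^{t-1}(I-\alpha H)^{t-1-k}\bar\epsilon_k$ and exchange the order of summation to isolate the independent $\epsilon_i$, obtaining $\Delta_t - E\Delta_t = -\alpha\sum_{i=0}^{t-1} A_i \epsilon_i$ with $A_i := \sum_{k=i}^{t-1}(I-\alpha H)^{t-1-k}/(k+1)$. Using i.i.d.\ noise with covariance at most $B I$ and $\|A_i\|_F^2 \leq d\|A_i\|_{\mathrm{op}}^2$, the variance is bounded by $\alpha^2 B d \sum_{i=0}^{t-1}\|A_i\|_{\mathrm{op}}^2$. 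Diagonalizing $H$ reduces the task to showing, for each eigenvalue $\rho \in [0, 1-1/\kappa]$ of $I - \alpha H$, that $\sum_i (A_i^{(\rho)})^2 \leq \bar\nu_0^2/t$, where $A_i^{(\rho)} = \sum_{j=0}^{t-1-i}\rho^j/(t-j)$.

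The main obstacle is this last uniform bound. The elementary estimate $A_i^{(\rho)} \leq \kappa/(i+1)$, obtained by pulling out $1/(i+1)$ and summing a geometric series, is tight for large $i$ but, summed over all $i$, gives only the $t$-independent $O(\kappa^2)$ and misses the $1/t$ decay entirely. The fix is a split-sum argument at $j^\ast \sim \kappa\log t$: for $j \leq j^\ast$ the denominator $t - j$ is bounded below by $t/2$ (so the head contributes $O(\kappa/t)$), while for $j > j^\ast$ the tail $\rho^j$ is summable against $1/(t-j) \leq 1$ and contributes $O(\log t / t)$. Combining the two estimates across the cases $i \leq t/2$ and $i > t/2$ yields a uniform bound $A_i^{(\rho)} \leq \bar\nu_0/t$ with $\bar\nu_0 = (2+2\log\kappa)\kappa$ whenever $t > 2\kappa$; squaring and summing the $t$ terms then produces the advertised $\bar\nu_0^2/t$ variance bound. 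The logarithmic factor in $\bar\nu_0$ is essentially unavoidable: for $\rho$ near $1 - 1/\kappa$ and $t$ only mildly larger than $\kappa$, $\rho^{t/2}$ is still $\Theta(1)$, so the split point must sit at $j^\ast \sim \kappa\log\kappa$ rather than at a constant, and this cost is precisely what the $\log\kappa$ in $\bar\nu_0$ absorbs.
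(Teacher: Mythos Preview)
Your proposal is correct and mirrors the paper's proof almost exactly: both establish the velocity identity $v_t = g(\theta_t) + \frac{1}{t+1}\sum_{i\le t}\epsilon_i$, unroll the linear recurrence $\Delta_{t+1} = (I-\alpha H)\Delta_t - \alpha\bar\epsilon_t$ to isolate the independent noise contributions (your $A_i$ is precisely the paper's $N_{i,t}$, satisfying $N_{i,t} = (I-\alpha H)N_{i,t-1} + \mathbf 1_{i<t}\frac{1}{t}I$), and then bound these scalar weights per eigendirection of $I-\alpha H$. The only tactical divergence is in that last bound: where you sketch a direct split-sum estimate on the closed form $\sum_j \rho^j/(t-j)$, the paper instead works from the recursion, first using $r_h\le 1$ to get the harmonic bound $N_{i,t}\le\log(t/i)$, evaluating it at $t=2/(1-r_h)$ as a base case, and then showing by a one-line induction that $N_{i,t}\le\nu_i/t$ propagates for all $t\ge 2/(1-r_h)$---this delivers the exact constant $(2+2\log\kappa)\kappa$ cleanly, whereas your split-sum (with its slight inconsistency between $j^\ast\sim\kappa\log t$ and $j^\ast\sim\kappa\log\kappa$) would still need tighter bookkeeping to recover that constant rather than a multiple of it.
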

    The proof of Prop.~\ref{prop:cvg} is provided in the appendix.

    % We see that IGT performs updates similar to those of iterate averaging~\citep[see, e.g., Section 2.2]{flammarion2015averaging}:
    % \begin{align*}
    %     \theta_{t+1} &= \frac{2t}{t+1} \theta_t - \frac{t-1}{t+1} \theta_{t-1} - \frac{\alpha}{t+1}  g\left(\theta_t + (t-1) (\theta_t - \theta_{t-1}), x_t\right) \tag{IA}\; ,
    % \end{align*}
    % but that the difference is enough to guarantee linear convergence in the noiseless case.

    Despite this theoretical result, two limitations remain: First, Prop.~\ref{prop:cvg} shows that IGT does not improve the dependency on the conditioning of the problem; Second, the assumption of equal Hessians is unlikely to be true in practice, leading to an underestimation of the bias. We address the conditioning issue in the next section and the assumption on the Hessians in Section~\ref{sec:ATA}.

    \subsection{IGT as a plug-in gradient estimator}
    \label{sec:plug-in}

    We demonstrated that the IGT estimator has lower variance than the stochastic gradient estimator for quadratic objectives. IGT can also be used as a drop-in replacement for the stochastic gradient in an existing, popular first order method: the heavy ball (HB). This is captured by the following two propositions:
    \begin{proposition}[Non-stochastic]
    \label{prop:hb-igt-nonstoch}
    In the non-stochastic case, where $B=0$, variance is equal to $0$ and Heavyball-IGT achieves the accelerated linear rate $O\big(\left(\frac{\sqrt{\kappa}-1}{\sqrt{\kappa}+1}\right)^t\big)$ using the known, optimal heavy ball tuning, $\mu=\left( \frac{\sqrt{\kappa}-1}{\sqrt{\kappa}+1} \right)^2$, 
    $\alpha = (1+\sqrt{\mu})^2/L$.
    \end{proposition}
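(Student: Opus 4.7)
The plan is to reduce the statement to the classical deterministic heavy ball analysis by showing that, in the noise-free regime, the IGT velocity exactly equals the true gradient at the current iterate.

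First I would formalize Heavyball-IGT as the update
\begin{align*}
v_t &= \gamma_t v_{t-1} + (1-\gamma_t)\, g\bigl(\theta_t + t(\theta_t-\theta_{t-1}),\, x_t\bigr), \quad \gamma_t = \tfrac{t}{t+1},\\
\theta_{t+1} &= \theta_t - \alpha v_t + \mu(\theta_t - \theta_{t-1}),
\end{align*}
initialized so that $v_0 = g(\theta_0)$. Under Assumption~\ref{ass:quad}, all component gradients take the form $g(\theta,x) = H\theta - b(x)$ for a common Hessian $H$, so in the non-stochastic case $g(\theta,x_t) = g(\theta) = H\theta - b$ for the deterministic limit, where $b=\mathbb{E}[b(x)]$.

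Next I would prove by induction on $t$ that $v_t = g(\theta_t)$ exactly. The base case is the initialization. For the induction step, I use the identity from Section~\ref{sec:implicit_transport}, which under Assumption~\ref{ass:quad} is exact rather than approximate: since $g(\theta + s(\theta-\theta_{t-1})) = g(\theta) + sH(\theta-\theta_{t-1})$, choosing the shift $s=t$ gives
\begin{align*}
\tfrac{t}{t+1} g(\theta_{t-1}) + \tfrac{1}{t+1} g\bigl(\theta_t + t(\theta_t-\theta_{t-1})\bigr) = g(\theta_t),
\end{align*}
which is exactly the IGT update of $v_t$ with $v_{t-1}=g(\theta_{t-1})$ substituted by the induction hypothesis. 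Hence $v_t=g(\theta_t)$ for all $t$.

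With $v_t=g(\theta_t)$, Heavyball-IGT degenerates to the exact deterministic Polyak heavy ball recursion $\theta_{t+1} = \theta_t - \alpha g(\theta_t) + \mu(\theta_t-\theta_{t-1})$ on a strongly convex quadratic with Hessian $H$. I would then invoke Polyak's classical spectral analysis: writing $\Delta_t = \theta_t-\theta^\ast$ and diagonalizing $H$, each eigen-coordinate satisfies a linear two-step recursion whose characteristic polynomial has roots of modulus $\sqrt{\mu}$ when $(1-\sqrt{\alpha\lambda})^2 \le \mu \le (1+\sqrt{\alpha\lambda})^2$ for every eigenvalue $\lambda\in[L/\kappa,L]$. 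With $\alpha=(1+\sqrt{\mu})^2/L$ and $\mu = \bigl((\sqrt{\kappa}-1)/(\sqrt{\kappa}+1)\bigr)^2$ the two extreme eigenvalues saturate this bound simultaneously, so the spectral radius of the coupled iteration equals $\sqrt{\mu} = (\sqrt{\kappa}-1)/(\sqrt{\kappa}+1)$, yielding $\|\theta_t-\theta^\ast\| = O\bigl(((\sqrt{\kappa}-1)/(\sqrt{\kappa}+1))^t\bigr)$.

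The only genuine work is the induction step; the rest is a citation of a standard result. The main thing to be careful about is that the exact transport identity requires the common-Hessian assumption (made explicit in Assumption~\ref{ass:quad}) and an initialization consistent with $v_0 = g(\theta_0)$, so the proposition is really a statement about the restricted setting of Prop.~\ref{prop:cvg} in the limit $B=0$; I would state this explicitly rather than hiding it.
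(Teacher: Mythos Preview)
Your proposal is correct and follows essentially the same route as the paper: the paper's proof simply notes that when $\epsilon_t=0$, Lemma~\ref{lemma:velocity} (equivalently your induction) gives $v_t=g(\theta_t)$, so Heavyball-IGT reduces to deterministic heavy ball, and then cites the optimal heavy ball tuning result. You have supplied more detail (reproving the induction and sketching Polyak's spectral analysis rather than citing), but the logical skeleton is identical.
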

    \begin{proposition}[Online, stochastic]
    \label{prop:hb-igt-stoch}
    When $B>0$, there exist constant hyperparameters $\alpha>0$, $\mu>0$ such that $\|E[\theta_t - \theta^\ast]\|^2$ converges to zero linearly, and the variance is $\tilde{O}(1/t)$.
    \end{proposition}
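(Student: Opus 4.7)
The plan is to use Assumption~\ref{ass:quad} to linearize the dynamics and then decompose the iterates into a deterministic mean trajectory $\bar\theta_t := \mathbb{E}[\theta_t]$ and a zero-mean fluctuation $\eta_t := \theta_t - \bar\theta_t$, treating the two assertions separately.

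For the first assertion, a short induction on the IGT recursion shows that under Assumption~\ref{ass:quad} the estimator separates cleanly as $\tilde g_t = H(\theta_t-\theta^\ast) + \xi_t$ with $\xi_t = \frac{1}{t+1}\sum_{i=0}^t\epsilon_i$, where the $\epsilon_i$ are the i.i.d.\ per-sample gradient noises with covariance bounded by $BI$; this is precisely the content of exact implicit transport under a common Hessian (the inductive step cancels the $H$-dependent cross terms introduced by the extrapolation). Taking expectations in the Heavyball--IGT update then eliminates the noise and leaves the deterministic heavy-ball recursion $\bar\theta_{t+1} = \bar\theta_t - \alpha H(\bar\theta_t - \theta^\ast) + \mu(\bar\theta_t - \bar\theta_{t-1})$ on $\bar\theta_t$. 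Plugging in the optimal tuning from Proposition~\ref{prop:hb-igt-nonstoch} yields the accelerated linear rate $\|\bar\theta_t - \theta^\ast\|^2 = O\bigl(((\sqrt\kappa-1)/(\sqrt\kappa+1))^{2t}\bigr)$, proving the first claim.

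For the variance claim, subtracting the two recursions gives the noise-driven second-order linear system
\begin{equation*}
\eta_{t+1} = (I+\mu I - \alpha H)\eta_t - \mu\,\eta_{t-1} - \alpha \xi_t.
\end{equation*}
Diagonalizing $H=U\Lambda U^\top$ reduces this to $d$ decoupled scalar recursions whose $2{\times}2$ companion matrices have spectral radius $\sqrt\mu<1$, so $\|M^k\|\le C\rho^k$ for some $\rho\in(\sqrt\mu,1)$ and $C>0$; unrolling yields $\eta_t = -\alpha\sum_{s=0}^{t-1}\Phi_{t-1-s}\,\xi_s$. Because the $\xi_s$ are highly correlated across time, I re-express $\eta_t$ in terms of the truly independent $\epsilon_i$ by swapping the order of summation,
\begin{equation*}
\eta_t = -\alpha\sum_{i=0}^{t-1} W_{i,t}\,\epsilon_i, \qquad W_{i,t} := \sum_{s=i}^{t-1}\frac{\Phi_{t-1-s}}{s+1},
\end{equation*}
so that $\mathbb{E}\|\eta_t\|^2 \le \alpha^2 B d \sum_i \|W_{i,t}\|^2$ by independence. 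To bound $\|W_{i,t}\|$, substitute $u=t-1-s$ and split the inner sum at $u^\ast := \lceil K\log t\rceil$ with $K$ chosen so that $\rho^{u^\ast}\le 1/t$: for $u\le u^\ast$ the denominator $t-u$ is at least $t/2$ and the geometric sum over $\rho^u$ contributes $O(1/t)$; for $u>u^\ast$ the decay factor $\rho^u$ is already $O(1/t)$ and the remaining harmonic tail contributes an extra $O(\log t)$. This gives $\|W_{i,t}\| = \tilde O(1/t)$ uniformly in $i$, and summing over the $t$ noise indices yields $\mathbb{E}\|\eta_t\|^2 = \tilde O(1/t)$.

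The main obstacle is the correlation structure of the driving noise: although $\mathrm{Var}(\xi_s) = O(1/s)$, successive $\xi_s$ share all underlying $\epsilon_i$ with small index, so a direct application of a stationary linear-system noise-covariance bound would overstate the variance. Unrolling back to the independent $\epsilon_i$ and controlling the convolutional weights $W_{i,t}$ by the two-regime split is what delivers the advertised $\tilde O(1/t)$ rate; the logarithmic factor appears genuine, arising from this split, and any constant tuning $(\alpha,\mu)$ for which the HB companion matrix is a strict contraction suffices to make the argument go through.
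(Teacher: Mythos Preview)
Your proof is correct but takes a genuinely different route from the paper's. Two main differences:

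\textbf{Choice of hyperparameters.} The paper does not use the optimal heavy-ball tuning. Instead it argues pure existence: it starts from $\mu_0=0$ with $0<\alpha<2/(3h)$ (where both the bias matrix $A$ and a separate $3\times3$ variance matrix $D$ are easily seen to be contractions), and then invokes the Bauer--Fike perturbation theorem to conclude that for some sufficiently small $\mu>0$ both spectral radii remain below $1$. This yields only a non-explicit $\mu$. You instead plug in the optimal tuning from Proposition~\ref{prop:hb-igt-nonstoch}, which gives an explicit $\mu$ and even the accelerated rate on the bias---strictly more than the statement asks.

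\textbf{Variance analysis.} The paper tracks the second moments $U_t=\Var(\theta_t)$ and $V_t=\Cov(\theta_t,\theta_{t-1})$ directly, obtaining a linear recursion driven by a $3\times3$ matrix $D$ (with an extra $2\alpha^2h^2$ term coming from a Cauchy--Schwarz step in Lemma~\ref{lem:igt-grad-variance}); it then needs $\rho(D)<1$, which is a \emph{different} condition from $\rho(A)<1$ and is what forces the Bauer--Fike argument. You instead work at the level of the fluctuation $\eta_t$ itself: unroll the second-order linear recursion, swap summation order to express $\eta_t$ as a linear combination of the independent $\epsilon_i$, and bound the convolutional weights $W_{i,t}$ by the two-regime split. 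This only requires $\rho(A)<1$---the same condition as for the bias---and is in spirit much closer to the paper's proof of Proposition~\ref{prop:cvg} (the $N_{i,t}$ analysis) than to its own proof of Proposition~\ref{prop:hb-igt-stoch}.

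What each buys: the paper's second-moment approach is more mechanical once $\rho(D)<1$ is granted, but the Cauchy--Schwarz slack makes that condition strictly harder to verify and necessitates the perturbation machinery. Your approach is more hands-on but more elementary, and it delivers the stronger conclusion that \emph{any} $(\alpha,\mu)$ making the heavy-ball companion matrix a strict contraction---in particular the optimal tuning---controls both bias and variance simultaneously.
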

    
    The pseudo-code can be found in Algorithm~\ref{alg:heavyball-igt}.
    
    \begin{algorithm}
    \caption{Heavyball-IGT}
    \label{alg:heavyball-igt}
    \begin{algorithmic}[1] % The number tells where the line numbering should start
        \Procedure{Heavyball-IGT}{Stepsize $\alpha$, Momentum $\mu$, Initial parameters $\theta_0$}
        \State $v_0 \gets g(\theta_0, x_0) \quad , \quad w_0 \gets - \alpha v_0 \quad, \quad \theta_1 \gets \theta_0 + w_0$
        \For{$t=1, \ldots, T-1$}
        	\State $\gamma_t \gets \frac{t}{t+1}$
        	\State $v_t \gets \gamma_t v_{t-1} + (1 - \gamma_t) g\left(\theta_t + \frac{\gamma_t}{1 - \gamma_t} (\theta_t - \theta_{t-1}), x_t\right)$
	\State $w_t \gets \mu w_{t-1} - \alpha v_t$
	\State $\theta_{t+1} \gets \theta_t + w_t$
        \EndFor\label{hbigtendfor}
        \State \textbf{return} $\theta_T$
        \EndProcedure
    \end{algorithmic}
    \end{algorithm}
    
    \section{IGT and Anytime Tail Averaging}
    \label{sec:ATA}
    So far, IGT weighs all gradients equally. This is because, with equal Hessians, one can perfectly transport these gradients irrespective of the distance travelled since they were computed. In practice, the individual Hessians are not equal and might change over time. In that setting, the transport induces an error which grows with the distance travelled. We wish to average a linearly increasing number of gradients, to maintain the $O(1/t)$ rate on the variance, while forgetting about the oldest gradients to decrease the bias. To this end, we shall use \emph{anytime tail averaging}~\cite{leroux2019anytime}, named in reference to the tail averaging technique used in optimization~\cite{JMLR:v18:16-595}.

    Tail averaging is an online averaging technique where only the last points, usually a constant fraction $c$ of the total number of points seen, is kept. Maintaining the exact average at every timestep is memory inefficient and anytime tail averaging performs an approximate averaging using $\gamma_t    = \frac{c(t-1)}{1 + c(t-1)}\left(1 - \frac{1}{c}\sqrt{\frac{1-c}{t(t-1)}}\right)$. We refer the reader to~\cite{leroux2019anytime} for additional details.

\section{Impact of IGT on bias and variance in the ideal case}
\label{sec:igt_ideal}
To understand the behaviour of IGT when Assumption~\ref{ass:quad} is verified, we minimize a strongly convex quadratic function with Hessian $Q \in \R^{100\times 100}$ with condition number $1000$, and we have access to the gradient corrupted by noise $\epsilon_t$, where $\epsilon_t \sim N(0, 0.3 \cdot I_{100})$. In that scenario where all Hessians are equal and implicit gradient transport is exact, Fig.~\ref{fig:quad} confirms the $O(1/t)$ rate of IGT with constant stepsize while SGD and HB only converge to a ball around the optimum.

To further understand the impact of IGT, we study the quality of the gradient estimate. Standard stochastic methods control the variance of the parameter update by scaling it with a decreasing stepsize, which slows the optimization down. With IGT, we hope to have a low variance while maintaining a norm of the update comparable to that obtained with gradient descent. To validate the quality of our estimator, we optimized a quadratic function using IGT, collecting iterates $\theta_t$. For each iterate, we computed the squared error between the true gradient and either the stochastic or the IGT gradient. In this case where both estimators are unbiased, this is the trace of the noise covariance of our estimators. The results in Figure~\ref{fig:noise} show that, as expected, this noise decreases linearly for IGT and is constant for SGD.

We also analyse the direction and magnitude of the gradient of IGT on the same quadratic setup.
Figure~\ref{fig:cosine} displays the cosine similarity between the true gradient and either the stochastic or the IGT gradient, as a function of the distance to the optimum. We see that, for the same distance, the IGT gradient is much more aligned with the true gradient than the stochastic gradient is, confirming that variance reduction happens without the need for scaling the estimate.

\begin{figure}
    \centering
    \begin{subfigure}[b]{0.32\textwidth}
        \includegraphics[width=\textwidth]{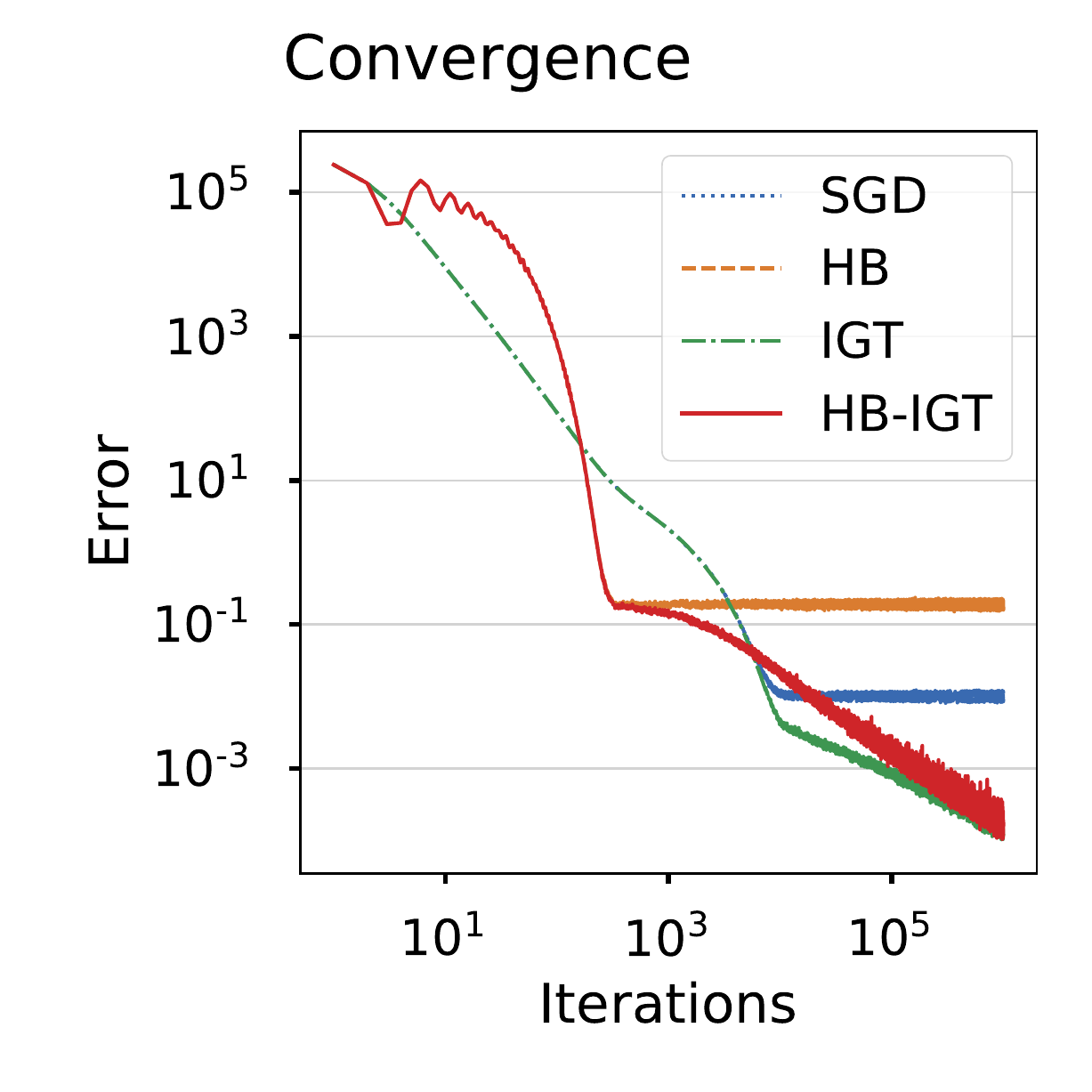}
        \caption{\label{fig:quad}}
    \end{subfigure}
    \begin{subfigure}[b]{0.32\textwidth}
        \includegraphics[width=\textwidth]{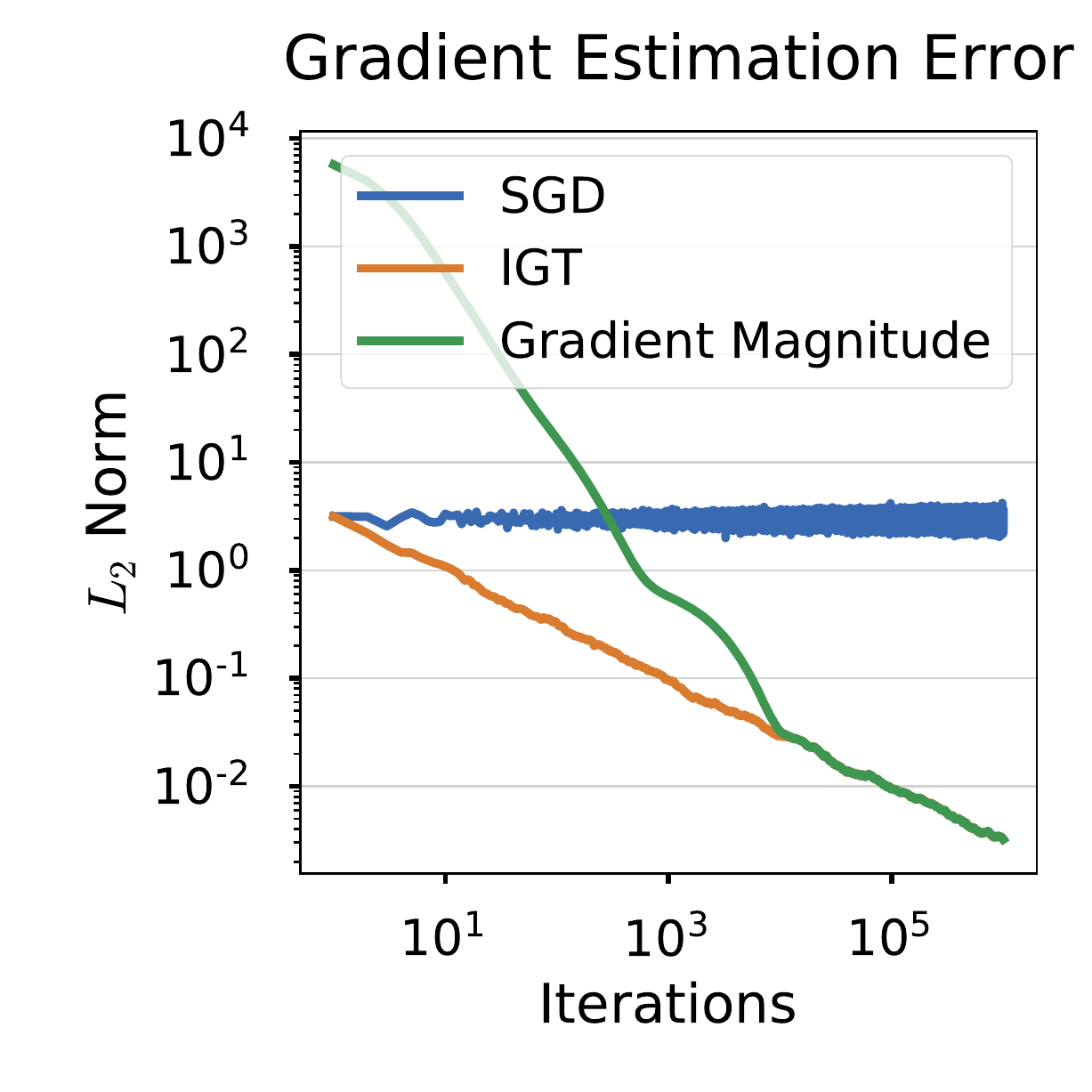}
        \caption{\label{fig:noise}}
    \end{subfigure}
    \begin{subfigure}[b]{0.32\textwidth}
        \includegraphics[width=\textwidth]{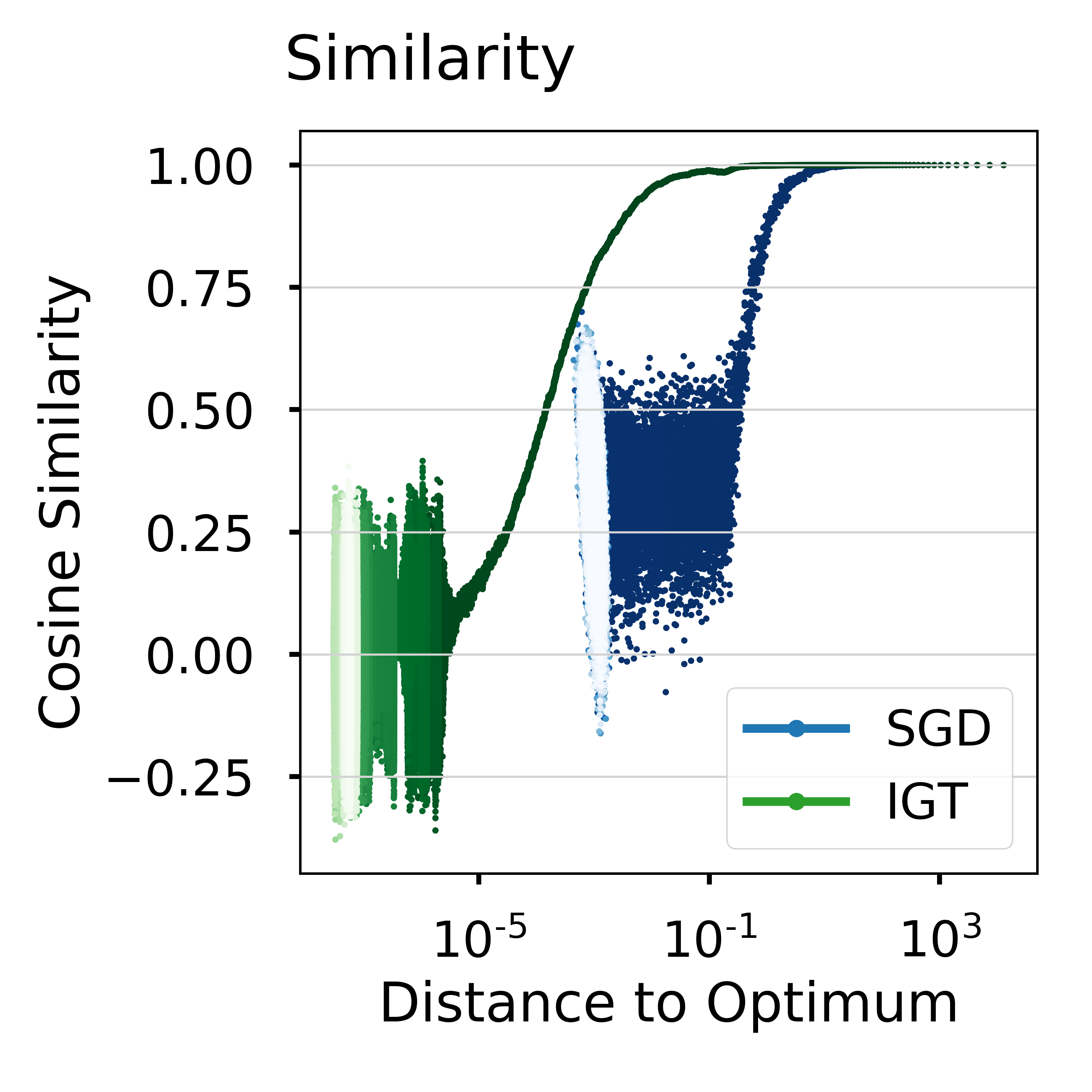} % PDF too large
        \caption{\label{fig:cosine}}
    \end{subfigure}

    % \begin{subfigure}[b]{0.23\textwidth}
    %     \includegraphics[width=\textwidth]{figs/step_analysis/lr.pdf}
    %     \caption{\label{fig:optlr}}
    % \end{subfigure}
    \caption{
        Analysis of IGT on quadratic loss functions.
        (\subref{fig:quad}) Comparison of convergence curves for multiple algorithms.
        As expected, the IGT family of algorithms converges to the solution while stochastic gradient algorithms can not.
        (\subref{fig:noise}) The blue and orange curves show the norm of the noise component in the SGD and IGT gradient estimates, respectively.
        The noise component of SGD remains constant, while it decreases at a rate $1/\sqrt{t}$ for IGT.
        The green curve shows the norm of the IGT gradient estimate.
        (\subref{fig:cosine}) Cosine similarity between the full gradient and the SGD/IGT estimates.
        % (\subref{fig:optlr}) The optimal stepsize value for both SGD and IGT as a function of iteration $t$.
        \label{fig:quadratic}}
\end{figure}

\section{Experiments}
\label{sec:experiments}

While Section~\ref{sec:igt_ideal} confirms the performance of IGT in the ideal case, the assumption of identical Hessians almost never holds in practice. In this section, we present results on more realistic and larger scale machine learning settings.
All experiments are extensively described in the Appendix \ref{sec:exp-details} and additional baselines compared in Appendix \ref{sec:additional-experiments}.

\subsection{Supervised learning}

\paragraph{CIFAR10 image classification}
We first consider the task of training a ResNet-56 model~\cite{DBLP:journals/corr/HeZRS15} on the CIFAR-10 image classification dataset~\cite{cifar10}. We use TF official models code and setup~\cite{TfResnet}, varying only the optimizer: SGD, HB, Adam and our algorithm with anytime tail averaging both on its own (ITA) and combined with Heavy Ball (HB-ITA). We tuned the step size for each algorithm by running experiments using a logarithmic grid. To factor in ease of tuning~\cite{wilson2017marginal}, we used Adam's default parameter values and a value of 0.9 for HB's parameter. We used a linearly decreasing stepsize as it was shown to be simple and perform well~\cite{shallue2018measuring}. For each optimizer we selected the hyperparameter combination that is fastest to reach a consistently attainable target train loss~\cite{shallue2018measuring}. Selecting the
hyperparameter combination reaching the lowest training loss yields qualitatively identical curves.
Figure~\ref{fig:cifar10-train-test} presents the results, showing that IGT with the exponential anytime tail average
performs favourably, both on its own and combined with Heavy Ball: the learning curves show faster improvement
and are much less noisy.

\begin{figure}
\begin{center}
\begin{subfigure}{.48\textwidth}
    \includegraphics[width=\textwidth]{./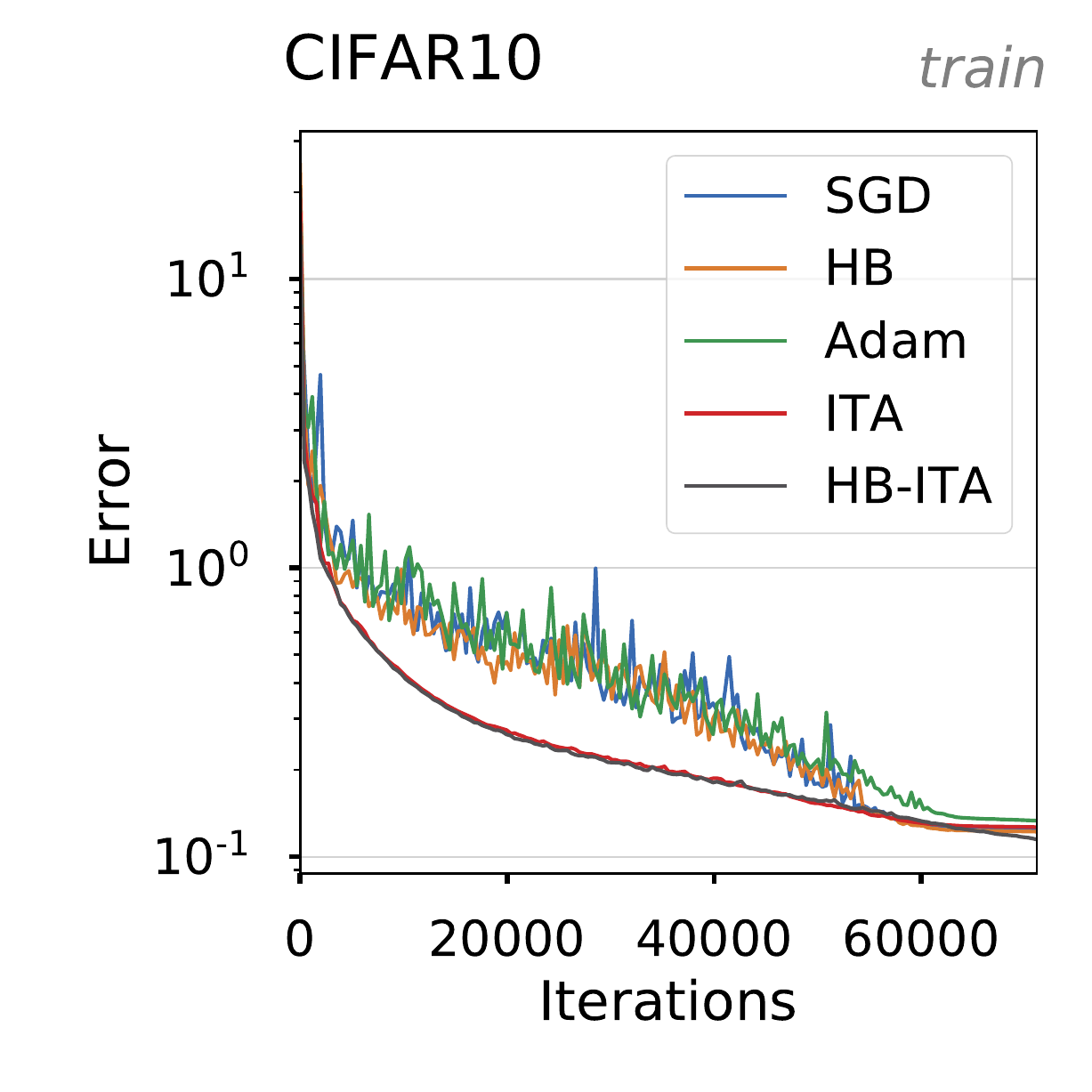}
\end{subfigure}
\begin{subfigure}{.48\textwidth}
    \includegraphics[width=\textwidth]{./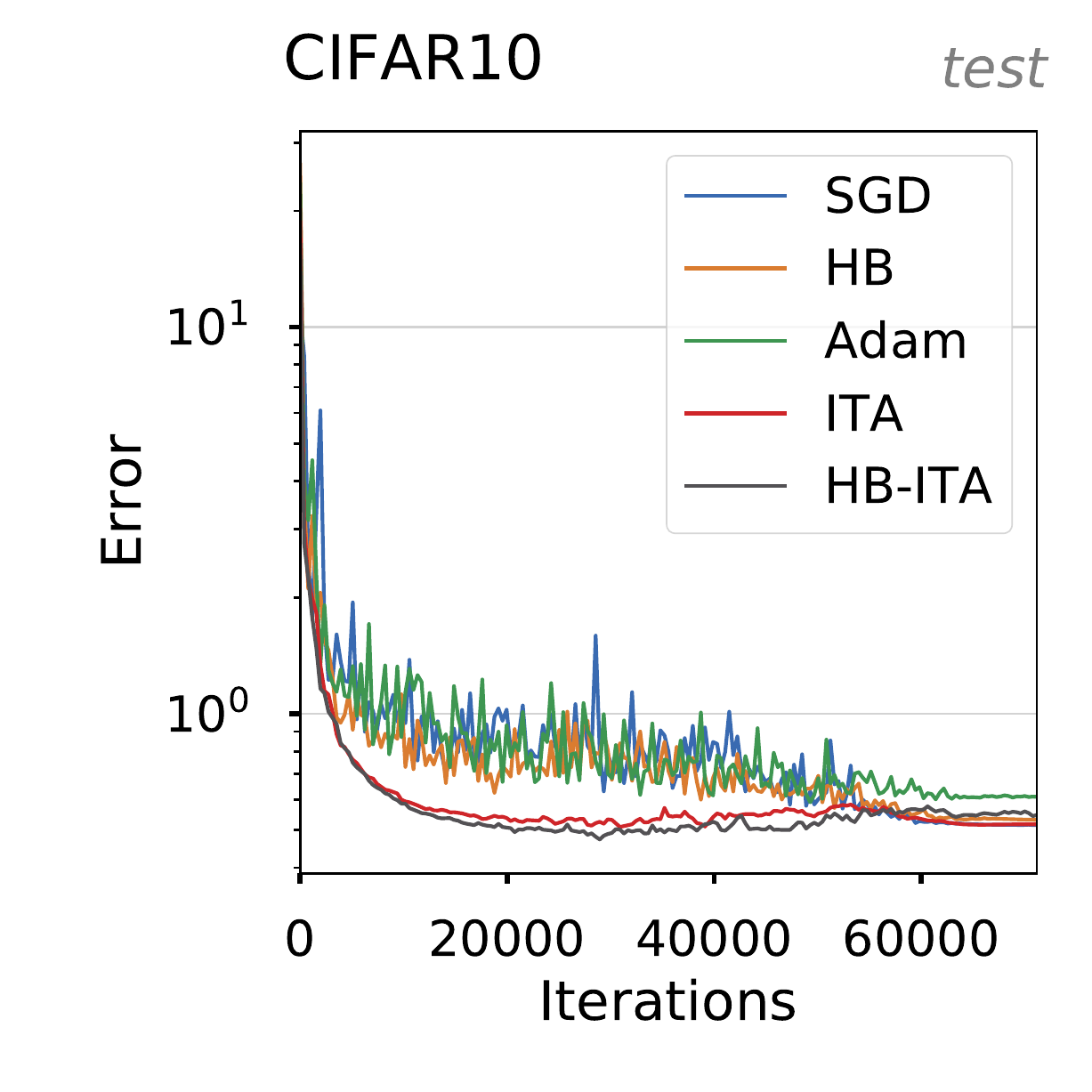}
\end{subfigure}
\\
\begin{subfigure}{.48\textwidth}
    \includegraphics[width=\textwidth]{./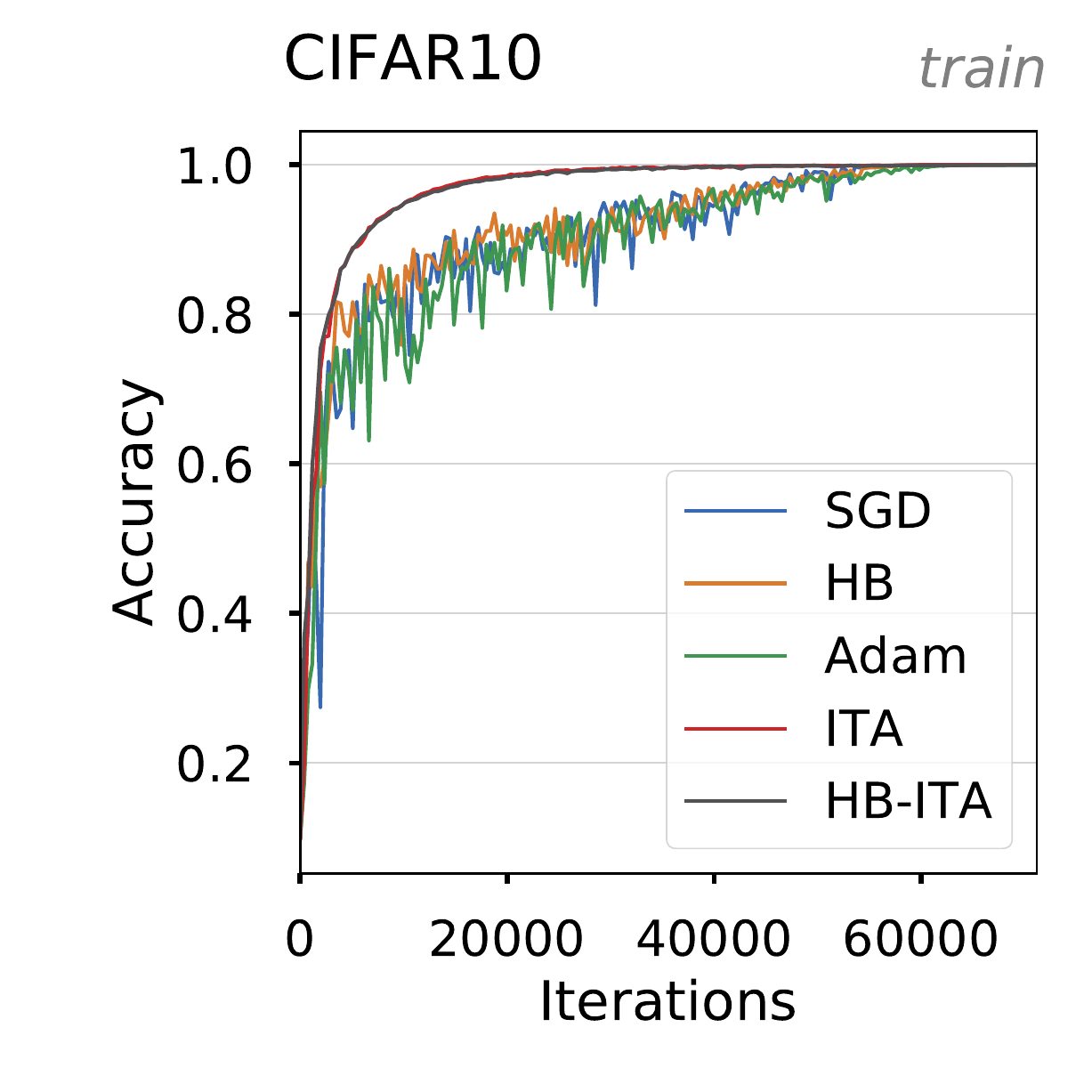}
\end{subfigure}
\begin{subfigure}{.48\textwidth}
    \includegraphics[width=\textwidth]{./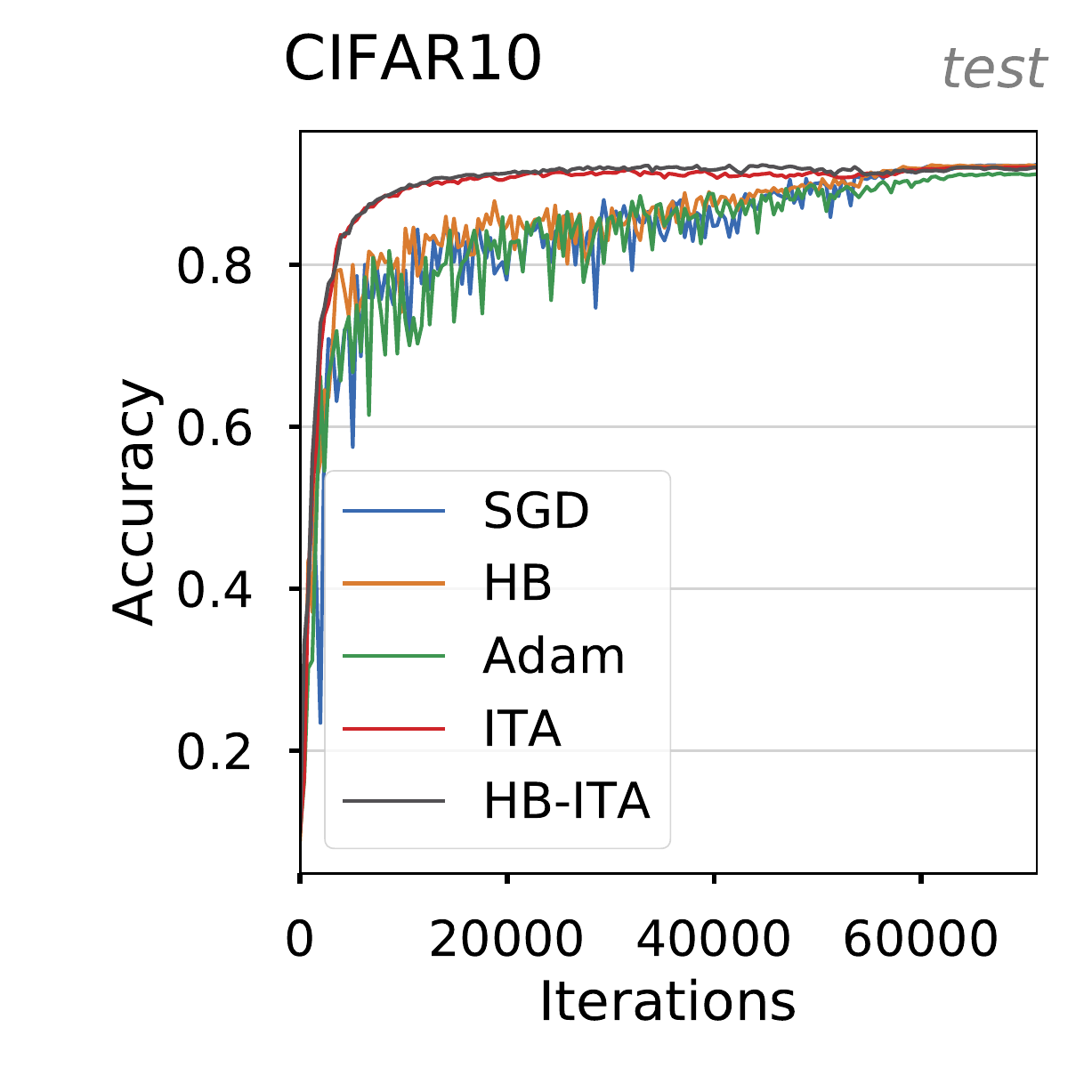}
\end{subfigure}
\end{center}
\caption{Resnet-56 on CIFAR10. \textbf{Left}: Train loss. \textbf{Center}: Train accuracy.
\textbf{Right}: Test accuracy.}  \label{fig:cifar10-train-test}
\end{figure}

\paragraph{ImageNet image classification}
We also consider the task of training a ResNet-50 model\cite{DBLP:journals/corr/HeZRS15} on the larger ImageNet dataset~\cite{ILSVRC15}. The setup is similar to the one used for CIFAR10 with the difference that we trained using larger minibatches (1024 instead of 128). In Figure~\ref{fig:imagenet-train-test}, one can see that IGT is as fast as Adam for the train loss, faster for the train accuracy and reaches the same final performance, which Adam does not. We do not see the noise reduction we observed with CIFAR10, which could be explained by the larger batch size (see Appendix~\ref{sec:appendix_cifar}).

\begin{figure}
\begin{center}
\begin{subfigure}{.48\textwidth}
    \includegraphics[width=\textwidth]{./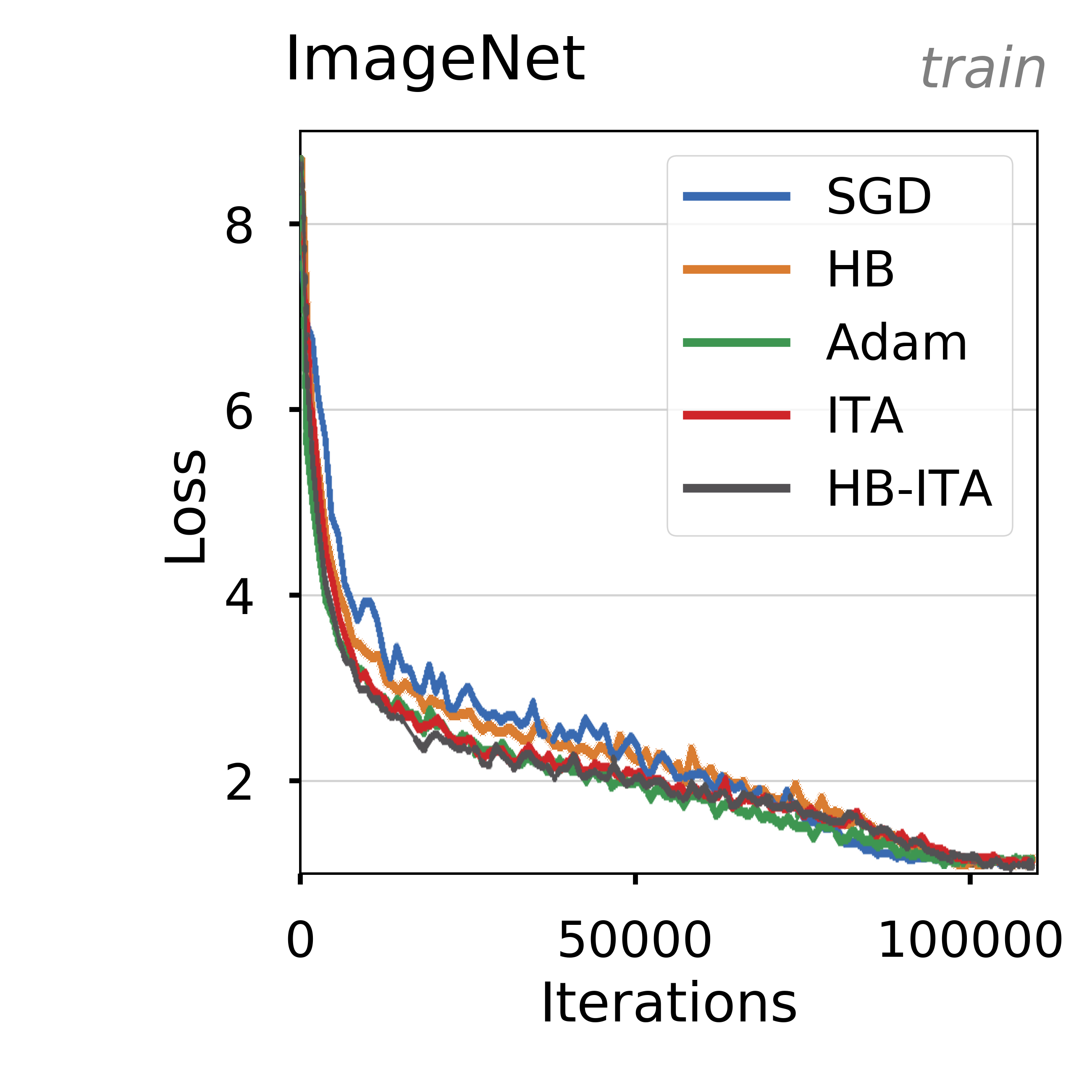}
\end{subfigure}
\begin{subfigure}{.48\textwidth}
    \includegraphics[width=\textwidth]{./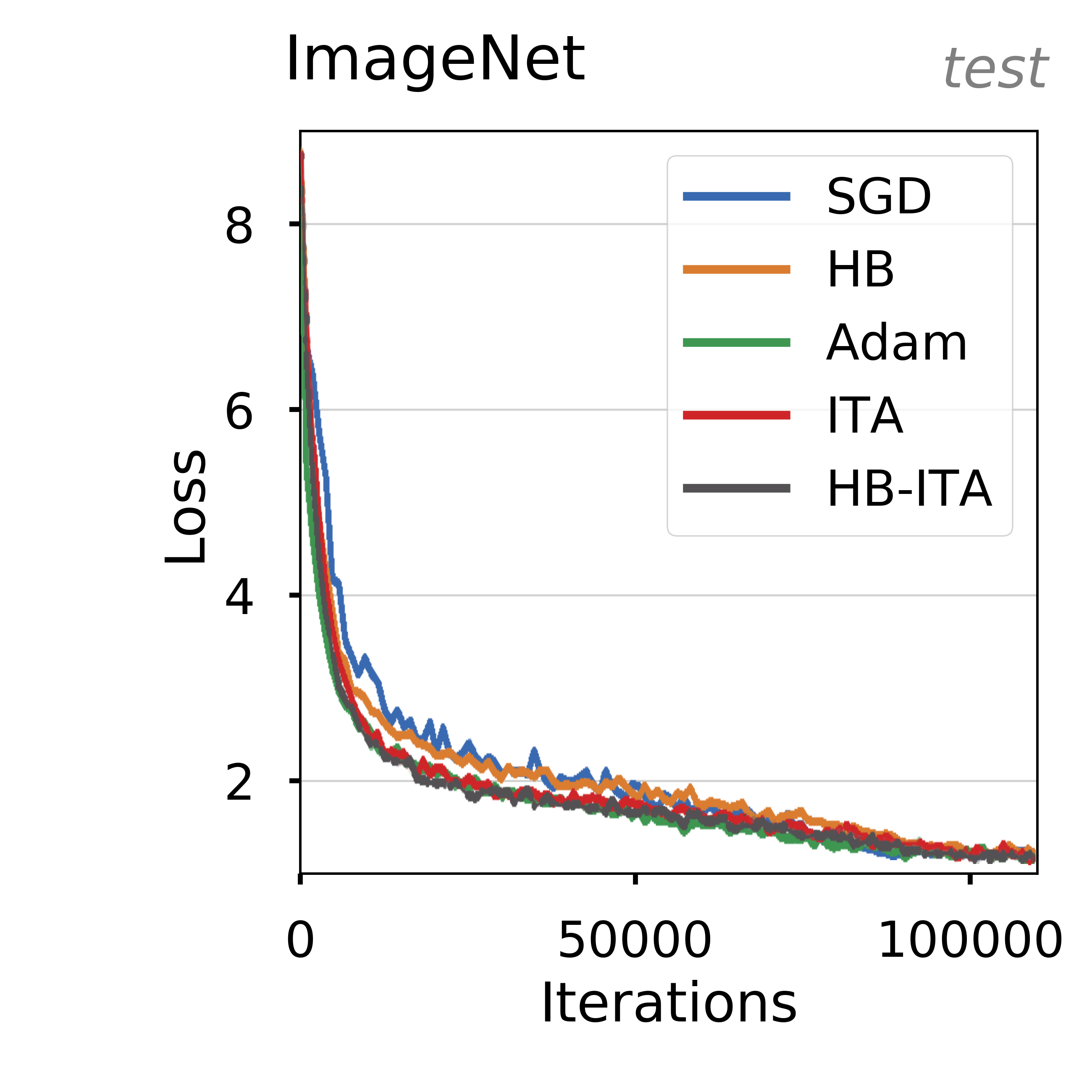}
\end{subfigure}
\\
\begin{subfigure}{.48\textwidth}
    \includegraphics[width=\textwidth]{./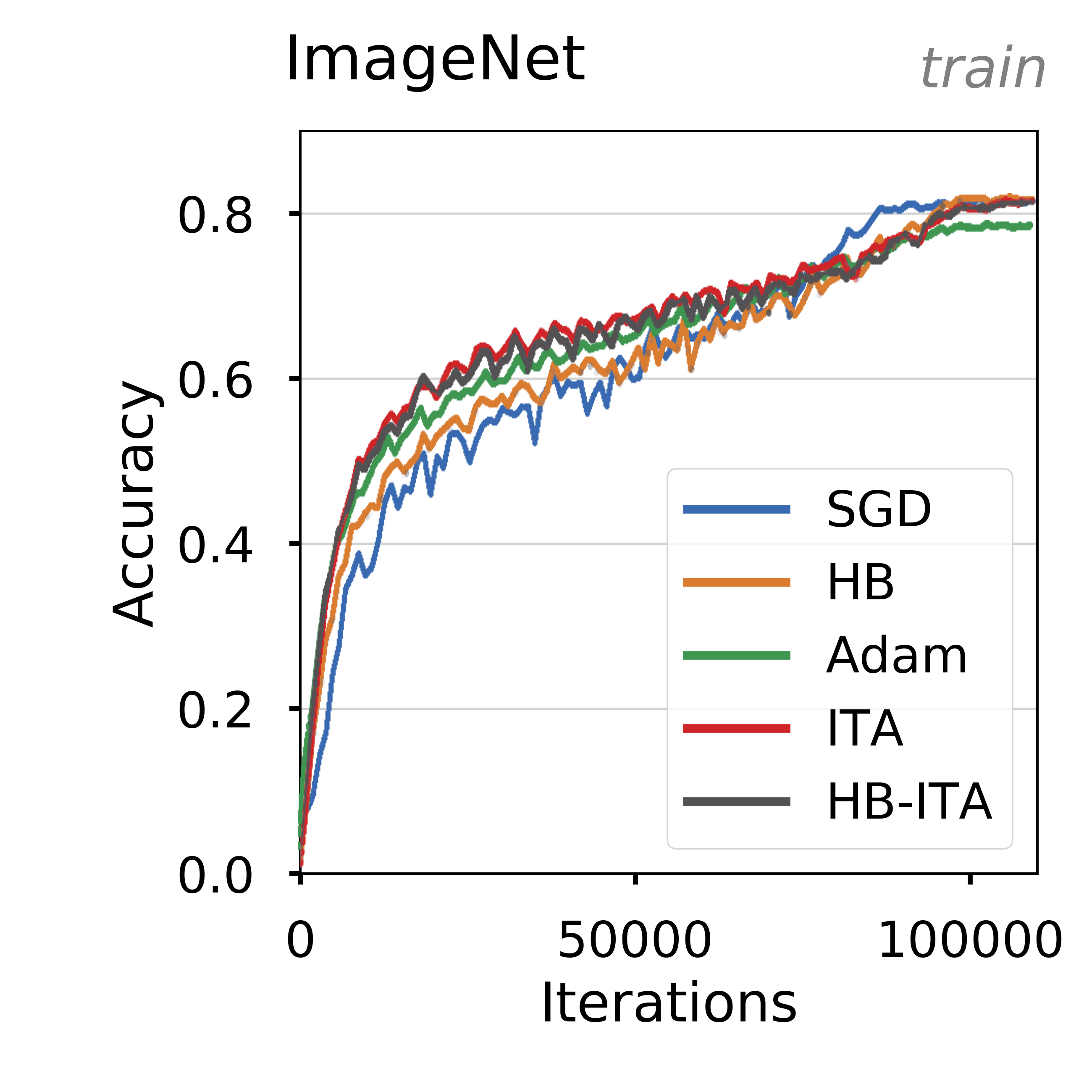}
\end{subfigure}
\begin{subfigure}{.48\textwidth}
    \includegraphics[width=\textwidth]{./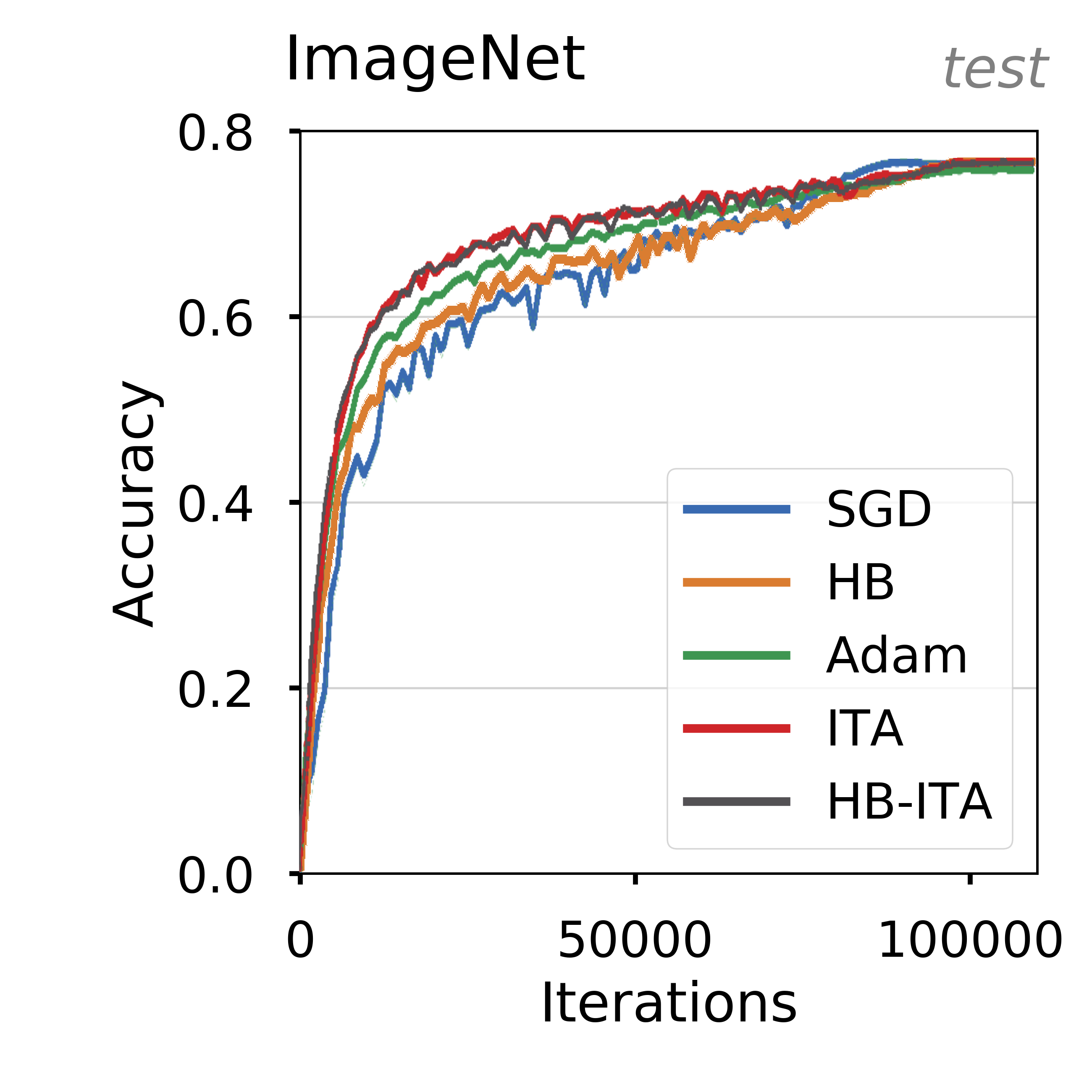}
\end{subfigure}
\end{center}
\caption{ResNet-50 on ImageNet. \textbf{Left}: Train loss. \textbf{Center}: Train accuracy.
\textbf{Right}: Test accuracy.}  \label{fig:imagenet-train-test}
\end{figure}

\paragraph{IMDb sentiment analysis}
We train a bi-directional LSTM on the IMDb Large Movie Review Dataset for 200 epochs. \cite{maas2011learning}
%We split the training set in two folds of 17'536 and 7'552 reviews, the latter one being used for validation.
%As for the baseline experiments, the hyper-parameters are chosen following a grid-search minimizing the validation error after 15 epochs, while the model is trained for 200 epochs.
We observe that while the training convergence is comparable to HB, HB-ITA performs better in terms of validation and test accuracy.
In addition to the baseline and IGT methods, we also train a variant of Adam using the ITA gradients, dubbed \textbf{Adam-ITA}, which performs similarly to Adam.

\begin{figure}
    \centering
    \includegraphics[width=0.48\textwidth]{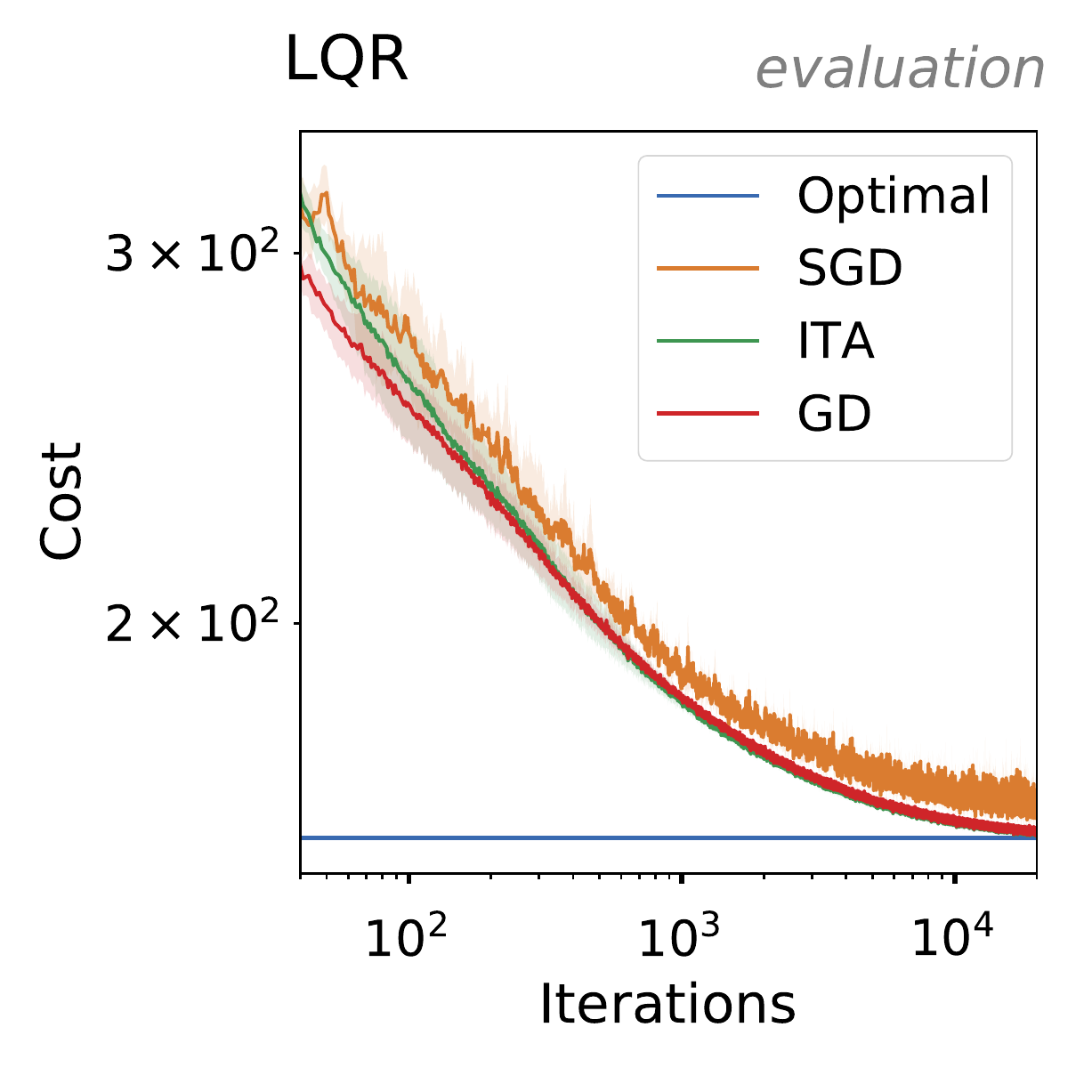}
    \includegraphics[width=0.48\textwidth]{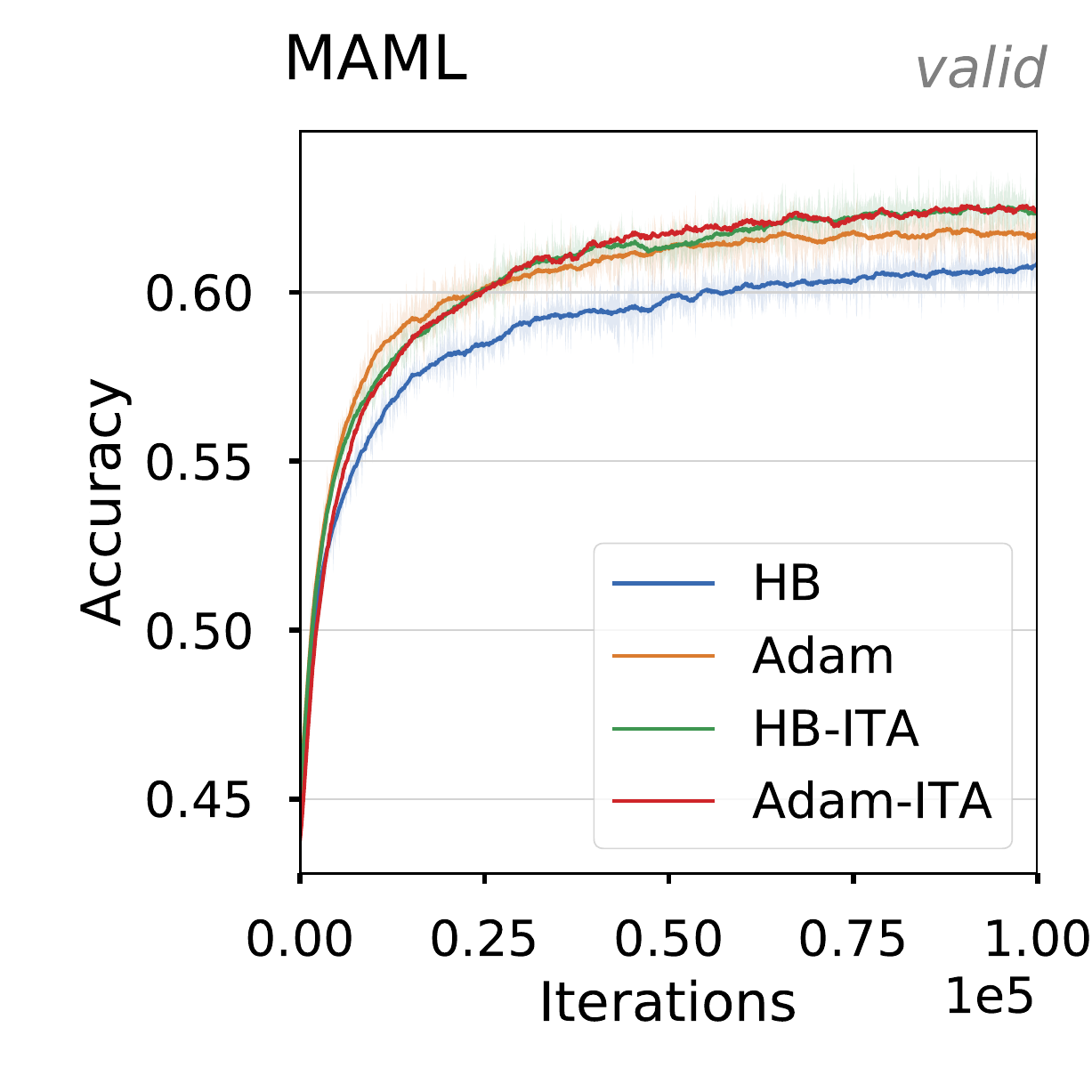}
    \caption{
        Validation curves for different large-scale machine learning settings.
        Shading indicates one standard deviation computed over three random seeds.
        \textbf{Left}: Reinforcement learning via policy gradient on a LQR system.
        \textbf{Right}: Meta-learning using MAML on Mini-Imagenet.
        \label{fig:largescale}
    }
\end{figure}

\subsection{Reinforcement learning}

%\paragraph{CartPole balancing}
%\tocheck{Add content. Cartpole results aren't great. May not be worth including.}

\paragraph{Linear-quadratic regulator}
We cast the classical linear-quadratic regulator (LQR) \cite{kwakernaak1972linear} as a policy learning problem to be optimized via gradient descent.
%In our second reinforcement learning experiment, we train a classical linear-quadratic regulator (LQR) \cite{kwakernaak1972linear} via gradient descent.
%In this case, the policy is given by $a_t = Ks_t + \epsilon_t$ for $0 \leq t < 10$ and $\epsilon_t \sim \mathcal{N}(0, I_{12})$, where $K \in \R^{12 \times 20}$ are the parameters to learn. 
%The remaining parameters of the LQR problem are randomly sampled as described in the supplementary material.
This setting is extensively described in Appendix~\ref{sec:exp-details}.
Note that despite their simple linear dynamics and a quadratic cost functional, LQR systems are notoriously difficult to optimize due to the non-convexity of the loss landscape. \cite{fazel2018global}

The left chart in Figure \ref{fig:largescale} displays the evaluation cost computed along training and averaged over three random seeds.
The first method (\textbf{Optimal}) indicates the cost attained when solving the algebraic Riccati equation of the LQR -- this is the optimal solution of the problem.
\textbf{SGD} minimizes the costs using the REINFORCE \cite{williams1992simple} gradient estimator, averaged over 600 trajectories.
\textbf{ITA} is similar to SGD but uses the ITA gradient computed from the REINFORCE estimates.
Finally, \textbf{GD} uses the analytical gradient by taking the expectation over the policy.

We make two observations from the above chart.
First, ITA initially suffers from the stochastic gradient estimate but rapidly matches the performance of GD.
Notably, both of them converge to a solution significantly better than SGD, demonstrating the effectiveness of the variance reduction mechanism.
Second, the convergence curve is smoother for ITA than for SGD, indicating that the ITA iterates are more likely to induce similar policies from one iteration to the next.
This property is particularly desirable in reinforcement learning as demonstrated by the popularity of trust-region methods in large-scale applications. \cite{schulman2015trust,openai2018learning}

\subsection{Meta-learning}

\paragraph{Model-agnostic meta-learning}
We now investigate the use of IGT in the \textit{model-agnostic meta-learning} (MAML) setting. \cite{finn2017model}
We replicate the 5 ways classification setup with 5 adaptation steps on tasks from the Mini-Imagenet dataset \cite{ravi2016optimization}.
This setting is interesting because of the many sources contributing to noise in the gradient estimates: the stochastic meta-gradient depends on the product of 5 stochastic Hessians computed over only 10 data samples, and is averaged over only 4 tasks.
We substitute the meta-optimizer with each method, select the stepsize that maximizes the validation accuracy after 10K iterations, and use it to train the model for 100K iterations.

The right graph of Figure \ref{fig:largescale} compares validation accuracies for three random seeds.
We observe that methods from the IGT family significantly outperform their stochastic meta-gradient counter-part, both in terms of convergence rate and final accuracy.
Those results are also reflected in the final test accuracies where Adam-ITA ($65.16\%$) performs best, followed by HB-ITA ($64.57\%$), then Adam ($63.70\%$), and finally HB ($63.08\%$).

\section{Conclusion and open questions}
We proposed a simple optimizer which, by reusing past gradients and transporting them, offers excellent performance on a variety of problems. While it adds an additional parameter, the ratio of examples to be kept in the tail averaging, it remains competitive across a wide range of such values. Further, by providing a higher quality gradient estimate that can be plugged in any existing optimizer, we expect it to be applicable to a wide range of problems.
As the IGT is similar to momentum, this further raises the question on the links between variance reduction and curvature adaptation. Whether there is a way to combine the two without using momentum on top of IGT remains to be seen.

\subsubsection*{Acknowledgments}
The authors would like to thank Liyu Chen for his help with the LQR experiments and Fabian Pedregosa for insightful discussions.

\bibliographystyle{plain}
\bibliography{biblio}

\begin{thebibliography}{10}

\bibitem{TfResnet}
The~TensorFlow Authors.
\newblock Tensorflow official resnet model.
\newblock 2018.

\bibitem{babanezhad2015wasting}
Reza Babanezhad, Mohamed~Osama Ahmed, Alim Virani, Mark Schmidt, Jakub Kone{\u
  c}n{\'y}, and Scott Sallinen.
\newblock Stop wasting my gradients: Practical {SVRG}.
\newblock In {\em Advances in Neural Information Processing Systems}, 2015.

\bibitem{bach2013non}
Francis Bach and Eric Moulines.
\newblock Non-strongly-convex smooth stochastic approximation with convergence
  rate o (1/n).
\newblock In {\em Advances in Neural Information Processing Systems}, pages
  773--781, 2013.

\bibitem{bubeck2015convex}
S{\'e}bastien Bubeck.
\newblock Convex optimization: Algorithms and complexity.
\newblock {\em Foundations and Trends{\textregistered} in Machine Learning},
  8(3-4):231--357, 2015.

\bibitem{defazio2014saga}
Aaron Defazio, Francis Bach, and Simon Lacoste-Julien.
\newblock Saga: A fast incremental gradient method with support for
  non-strongly convex composite objectives.
\newblock In {\em Advances in Neural Information Processing Systems}, pages
  1646--1654, 2014.

\bibitem{dieuleveut2017bridging}
Aymeric Dieuleveut, Alain Durmus, and Francis Bach.
\newblock Bridging the gap between constant step size stochastic gradient
  descent and markov chains.
\newblock {\em arXiv preprint arXiv:1707.06386}, 2017.

\bibitem{dieuleveut2017harder}
Aymeric Dieuleveut, Nicolas Flammarion, and Francis Bach.
\newblock Harder, better, faster, stronger convergence rates for least-squares
  regression.
\newblock {\em The Journal of Machine Learning Research}, 18(1):3520--3570,
  2017.

\bibitem{fazel2018global}
Maryam Fazel, Rong Ge, Sham~M. Kakade, and Mehran Mesbahi.
\newblock Global convergence of policy gradient methods for linearized control
  problems.
\newblock {\em CoRR}, abs/1801.05039, 2018.

\bibitem{finn2017model}
Chelsea Finn, Pieter Abbeel, and Sergey Levine.
\newblock Model-agnostic meta-learning for fast adaptation of deep networks.
\newblock In {\em Proceedings of the 34th International Conference on Machine
  Learning-Volume 70}, pages 1126--1135. JMLR. org, 2017.

\bibitem{flammarion2015averaging}
Nicolas Flammarion and Francis Bach.
\newblock From averaging to acceleration, there is only a step-size.
\newblock In {\em Conference on Learning Theory}, pages 658--695, 2015.

\bibitem{gower2017tracking}
Robert~M Gower, Nicolas~Le Roux, and Francis Bach.
\newblock Tracking the gradients using the hessian: A new look at variance
  reducing stochastic methods.
\newblock {\em arXiv preprint arXiv:1710.07462}, 2017.

\bibitem{DBLP:journals/corr/HeZRS15}
Kaiming He, Xiangyu Zhang, Shaoqing Ren, and Jian Sun.
\newblock Deep residual learning for image recognition.
\newblock {\em CoRR}, abs/1512.03385, 2015.

\bibitem{he2016identity}
Kaiming He, Xiangyu Zhang, Shaoqing Ren, and Jian Sun.
\newblock Identity mappings in deep residual networks.
\newblock In {\em European conference on computer vision}, pages 630--645.
  Springer, 2016.

\bibitem{hofmann2015variance}
Thomas Hofmann, Aurelien Lucchi, Simon Lacoste-Julien, and Brian McWilliams.
\newblock Variance reduced stochastic gradient descent with neighbors.
\newblock In {\em Advances in Neural Information Processing Systems}, pages
  2305--2313, 2015.

\bibitem{jain2018accelerating}
Prateek Jain, Sham~M Kakade, Rahul Kidambi, Praneeth Netrapalli, and Aaron
  Sidford.
\newblock Accelerating stochastic gradient descent for least squares
  regression.
\newblock In {\em Conference On Learning Theory}, pages 545--604, 2018.

\bibitem{JMLR:v18:16-595}
Prateek Jain, Sham~M. Kakade, Rahul Kidambi, Praneeth Netrapalli, and Aaron
  Sidford.
\newblock Parallelizing stochastic gradient descent for least squares
  regression: Mini-batching, averaging, and model misspecification.
\newblock {\em Journal of Machine Learning Research}, 18(223):1--42, 2018.

\bibitem{johnson2013accelerating}
Rie Johnson and Tong Zhang.
\newblock Accelerating stochastic gradient descent using predictive variance
  reduction.
\newblock In {\em Advances in Neural Information Processing Systems}, pages
  315--323, 2013.

\bibitem{kingma2014adam}
Diederik Kingma and Jimmy Ba.
\newblock Adam: A method for stochastic optimization.
\newblock {\em arXiv preprint arXiv:1412.6980}, 2014.

\bibitem{cifar10}
Alex Krizhevsky.
\newblock Learning multiple layers of features from tiny images.
\newblock Technical report, 2009.

\bibitem{krizhevsky2009learning}
Alex Krizhevsky.
\newblock Learning multiple layers of features from tiny images.
\newblock Technical report, Citeseer, 2009.

\bibitem{kwakernaak1972linear}
Huibert Kwakernaak.
\newblock {\em Linear optimal control systems}, volume~1.
\newblock 1972.

\bibitem{lacoste2012simpler}
Simon Lacoste-Julien, Mark Schmidt, and Francis Bach.
\newblock A simpler approach to obtaining an o (1/t) convergence rate for the
  projected stochastic subgradient method.
\newblock {\em arXiv preprint arXiv:1212.2002}, 2012.

\bibitem{leroux2019anytime}
Nicolas Le~Roux.
\newblock Anytime tail averaging.
\newblock {\em arXiv preprint arXiv:1902.05083}, 2019.

\bibitem{leroux2012stochastic}
Nicolas {Le Roux}, Mark Schmidt, and Francis Bach.
\newblock A stochastic gradient method with an exponential convergence rate for
  finite training sets.
\newblock In {\em Advances in Neural Information Processing Systems}, pages
  2663--2671, 2012.

\bibitem{lecun2010mnist}
Yann LeCun and Corinna Cortes.
\newblock Mnist handwritten digit database.
\newblock {\em AT\&T Labs [Online]. Available: http://yann. lecun.
  com/exdb/mnist}, 2010.

\bibitem{loizou2017momentum}
Nicolas Loizou and Peter Richt{\'a}rik.
\newblock Momentum and stochastic momentum for stochastic gradient, newton,
  proximal point and subspace descent methods.
\newblock {\em arXiv preprint arXiv:1712.09677}, 2017.

\bibitem{maas2011learning}
Andrew~L. Maas, Raymond~E. Daly, Peter~T. Pham, Dan Huang, Andrew~Y. Ng, and
  Christopher Potts.
\newblock Learning word vectors for sentiment analysis.
\newblock In {\em Proceedings of the 49th Annual Meeting of the Association for
  Computational Linguistics: Human Language Technologies}, pages 142--150,
  Portland, Oregon, USA, June 2011. Association for Computational Linguistics.

\bibitem{mairal2013optimization}
Julien Mairal.
\newblock Optimization with first-order surrogate functions.
\newblock In {\em Proceedings of The 30th International Conference on Machine
  Learning}, pages 783--791, 2013.

\bibitem{openai2018learning}
OpenAI, Marcin Andrychowicz, Bowen Baker, Maciek Chociej, Rafal
  J{\'{o}}zefowicz, Bob McGrew, Jakub~W. Pachocki, Jakub Pachocki, Arthur
  Petron, Matthias Plappert, Glenn Powell, Alex Ray, Jonas Schneider, Szymon
  Sidor, Josh Tobin, Peter Welinder, Lilian Weng, and Wojciech Zaremba.
\newblock Learning dexterous in-hand manipulation.
\newblock {\em CoRR}, abs/1808.00177, 2018.

\bibitem{o2015adaptive}
Brendan O’{D}onoghue and Emmanuel Candes.
\newblock Adaptive restart for accelerated gradient schemes.
\newblock {\em Foundations of computational mathematics}, 15(3):715--732, 2015.

\bibitem{pennington2014glove}
Jeffrey Pennington, Richard Socher, and Christopher Manning.
\newblock Glove: Global vectors for word representation.
\newblock In {\em Proceedings of the 2014 conference on empirical methods in
  natural language processing (EMNLP)}, pages 1532--1543, 2014.

\bibitem{polyak1964some}
Boris~T Polyak.
\newblock Some methods of speeding up the convergence of iteration methods.
\newblock {\em USSR Computational Mathematics and Mathematical Physics},
  4(5):1--17, 1964.

\bibitem{polyak1992acceleration}
Boris~T Polyak and Anatoli~B Juditsky.
\newblock Acceleration of stochastic approximation by averaging.
\newblock {\em SIAM Journal on Control and Optimization}, 30(4):838--855, 1992.

\bibitem{ravi2016optimization}
Sachin Ravi and Hugo Larochelle.
\newblock Optimization as a model for few-shot learning.
\newblock 2016.

\bibitem{robbins1951stochastic}
Herbert Robbins and Sutton Monro.
\newblock A stochastic approximation method.
\newblock {\em Annals of Mathematical Statistics}, 22(3):400--407, 1951.

\bibitem{ILSVRC15}
Olga Russakovsky, Jia Deng, Hao Su, Jonathan Krause, Sanjeev Satheesh, Sean Ma,
  Zhiheng Huang, Andrej Karpathy, Aditya Khosla, Michael Bernstein,
  Alexander~C. Berg, and Li~Fei-Fei.
\newblock {ImageNet Large Scale Visual Recognition Challenge}.
\newblock {\em International Journal of Computer Vision (IJCV)},
  115(3):211--252, 2015.

\bibitem{sandler2018mobilenetv2}
Mark Sandler, Andrew Howard, Menglong Zhu, Andrey Zhmoginov, and Liang-Chieh
  Chen.
\newblock Mobilenetv2: Inverted residuals and linear bottlenecks.
\newblock In {\em Proceedings of the IEEE Conference on Computer Vision and
  Pattern Recognition}, pages 4510--4520, 2018.

\bibitem{schmidt2014convergence}
Mark Schmidt.
\newblock Convergence rate of stochastic gradient with constant step size.
\newblock 2014.

\bibitem{Schmidt11_inexact}
Mark Schmidt, Nicolas {Le Roux}, and Francis Bach.
\newblock Convergence rates of inexact proximal-gradient methods for convex
  optimization.
\newblock In {\em Advances in Neural Information Processing Systems 24}, 2011.

\bibitem{schmidt2017minimizing}
Mark Schmidt, Nicolas Le~Roux, and Francis Bach.
\newblock Minimizing finite sums with the stochastic average gradient.
\newblock {\em Mathematical Programming}, 162(1-2):83--112, 2017.

\bibitem{schulman2015trust}
John Schulman, Sergey Levine, Philipp Moritz, Michael~I. Jordan, and Pieter
  Abbeel.
\newblock Trust region policy optimization.
\newblock {\em CoRR}, abs/1502.05477, 2015.

\bibitem{Shalev-Shwartz2013sdca}
Shai Shalev-Shwartz and Tong Zhang.
\newblock Stochastic dual coordinate ascent methods for regularized loss.
\newblock {\em Journal of Machine Learning Research}, 14(1):567--599, February
  2013.

\bibitem{shallue2018measuring}
Christopher~J Shallue, Jaehoon Lee, Joe Antognini, Jascha Sohl-Dickstein, Roy
  Frostig, and George~E Dahl.
\newblock Measuring the effects of data parallelism on neural network training.
\newblock {\em arXiv preprint arXiv:1811.03600}, 2018.

\bibitem{sutskever2013importance}
Ilya Sutskever, James Martens, George Dahl, and Geoffrey Hinton.
\newblock On the importance of initialization and momentum in deep learning.
\newblock In {\em International conference on machine learning}, pages
  1139--1147, 2013.

\bibitem{Vaswani19}
Sharan Vaswani, Francis Bach, and Mark Schmidt.
\newblock Fast and faster convergence of sgd for over-parameterized models and
  an accelerated perceptron.
\newblock In {\em Proceedings of the 22nd International Conference on
  Artificial Intelligence and Statistics}, 2019.

\bibitem{wai2017curvature}
Hoi-To Wai, Wei Shi, Angelia Nedic, and Anna Scaglione.
\newblock Curvature-aided incremental aggregated gradient method.
\newblock {\em arXiv preprint arXiv:1710.08936}, 2017.

\bibitem{williams1992simple}
Ronald~J Williams.
\newblock Simple statistical gradient-following algorithms for connectionist
  reinforcement learning.
\newblock {\em Machine learning}, 8(3-4):229--256, 1992.

\bibitem{wilson2017marginal}
Ashia~C Wilson, Rebecca Roelofs, Mitchell Stern, Nathan Srebro, and Benjamin
  Recht.
\newblock The marginal value of adaptive gradient methods in machine learning.
\newblock {\em arXiv preprint arXiv:1705.08292}, 2017.

\bibitem{zhang2017yellowfin}
Jian Zhang and Ioannis Mitliagkas.
\newblock Yellowfin and the art of momentum tuning.
\newblock In {\em SysML}, 2019.

\bibitem{zhang2013linear}
Lijun Zhang, Mehrdad Mahdavi, and Rong Jin.
\newblock Linear convergence with condition number independent access of full
  gradients.
\newblock In {\em Advances in Neural Information Processing Systems}, pages
  980--988, 2013.

\end{thebibliography}

\clearpage

\appendix

\section{Experimental Details}
\label{sec:exp-details}
This section provides additional information regarding the experiments included in the main text.

For each experimental setting we strive to follow the \textit{reproducibility checklist}, and provide:

\begin{itemize}
    \item a description and citation of the dataset,
    \item a description of pre-processing steps,
    \item training / validation / testing splits,
    \item a description of the hyper-parameter search process and chosen values for each method,
    \item the exact number of evaluation runs,
    \item a clear definition of statistics reported, and
    \item a description of the computing infrastructure.
\end{itemize}

\subsection{CIFAR10 image classification}
\label{sec:appendix_cifar}

\paragraph{Dataset}
The CIFAR10 dataset \cite{krizhevsky2009learning} consists 50k training and 10k testing images, partitioned over 10 classes.
We download and pre-process the images using the TensorFlow \texttt{models} package, available at the following URL: \url{https://github.com/tensorflow/models}

\paragraph{Model}
We use a residual convolutional network \cite{DBLP:journals/corr/HeZRS15} with 56 layers as defined in the \texttt{models} package.
Specifically, we use the second version whose blocks are built as a batch normalization, then a ReLU activation, and then a convolutional layer. \cite{he2016identity}

\paragraph{Hyper-parameters}
We use the exact setup from \url{https://github.com/tensorflow/models/officials/resnet}. As such, training is carried out with minibatches of 128 examples for 182 epochs and the training data is augmented with random crops and horizontal flips. Also note
this setup multiplies the step size by the size of the minibatch. One deviation from the setup is our use of a linearly 
decaying learning rate instead of an explicit schedule. The linearly decaying learning rate schedule is simple and
was shown to perform well \cite{shallue2018measuring}. This schedule is specified using two parameters: the decay rate,
a multiplier specifying the final step size (0.1 or 0.01), and the decay step, specifying the step at which the fully decayed rate is reached (always set to 90\% of the training steps). To factor in ease of tuning\cite{wilson2017marginal} we used Adam’s default parameter values and a value of 0.9 for HB’s parameter. We used IGT with the exponential Anytime Tail Averaging approach. For
the tail fraction, we tried two values: the number of epochs and a tenth of that number (180 and 18). We ran using the 
following learning rate: (1e0, 3e-1, 1e-1, 3e-2, 1e-2) for SGD, HB and the IGT variants and (1e-2, 3e-3, 1e-3, 3e-4, 1e-4)
for Adam. We ran a grid search over the base learning rate and its decay rate with a single run per combination.
For each optimizer we selected the hyperparameter combination that is fastest to reach a consistently attainable target train loss of 0.2 \cite{shallue2018measuring}. Note that selecting the hyperparameter combination reaching the lowest training loss yields qualitatively identical curves.

The resulting hyper-parameters are:
\begin{itemize}
	\item SGD stepsize 0.3, decay 0.01
	\item HB stepsize 0.03, decay 0.01
	\item Adam stepsize 0.001, decay 0.01
	\item ITA stepsize 0.3, decay 0.01, tail fraction 18
	\item HB-ITA stepsize 0.03, decay 0.1, tail fraction 18
\end{itemize}

\paragraph{Infrastructure and Runs}
The experiments were run using P100 GPUs (single GPU).

\paragraph{Additional Results}
We provide all learning curves for the methods comparison presented in the main manuscript in figure~\ref{fig:cifar10-comp}.

\begin{figure}
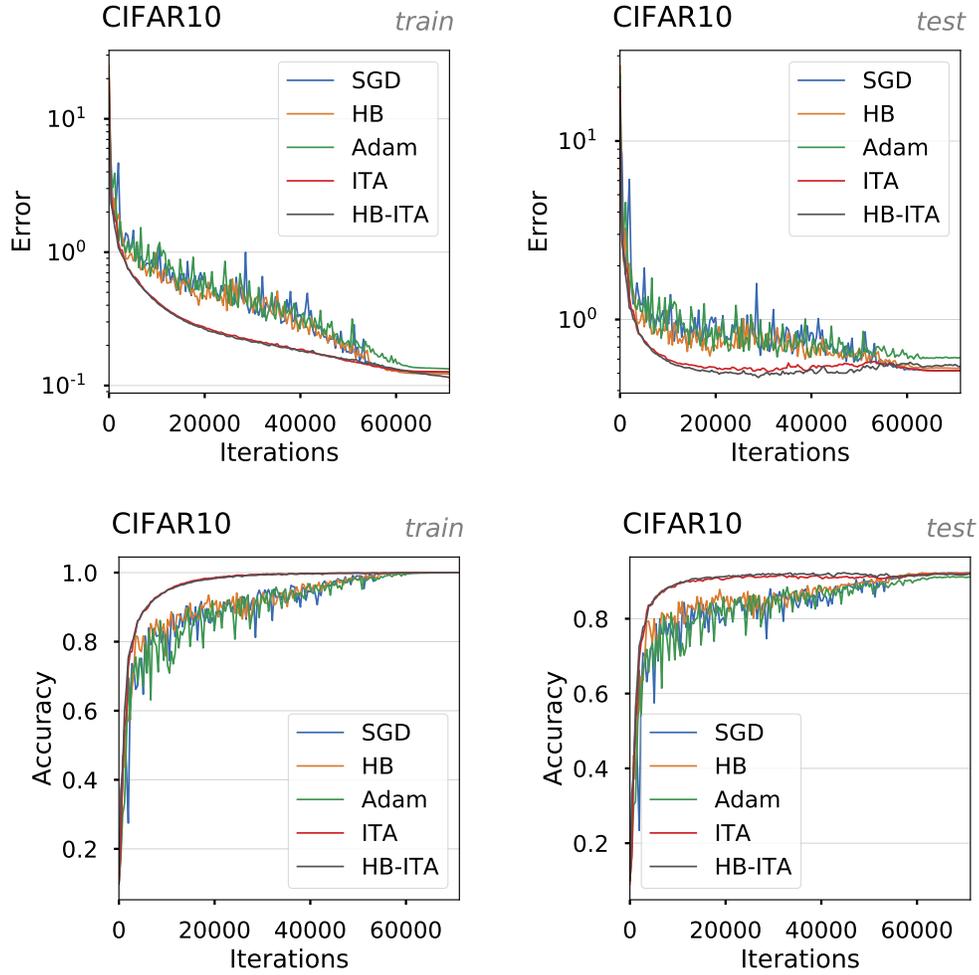
%[H]
    \centering
    \includegraphics[width=0.48\textwidth]{figs/pa_baselines/train_err.pdf}
    \includegraphics[width=0.48\textwidth]{figs/pa_baselines/test_err.pdf}
    \\                        8
    \includegraphics[width=0.48\textwidth]{figs/pa_baselines/train_acc.pdf}
    \includegraphics[width=0.48\textwidth]{figs/pa_baselines/test_acc.pdf}
    \caption{
        Convergence and accuracy curves along training for the CIFAR10 experiments comparing baseline methods to ours.
        \textbf{Left}: Training.
        \textbf{Right}: Testing.
        \label{fig:cifar10-comp}
    }
\end{figure}

\paragraph{Ablation study: importance of IGT correction}
We confirm the importance of the implicit gradient transport correction by running an experiment 
in which an increasing momentum is used without transport. The results appear in figure~\ref{fig:cifar10-avg}.

The resulting hyper-parameters are:
\begin{itemize}
	\item ATA stepsize 0.3, decay 0.01, tail fraction 18
	\item HB-ATA stepsize 0.03, decay 0.01, tail fraction 18
\end{itemize}

\begin{figure}%[H]
    \centering
    \includegraphics[width=0.48\textwidth]{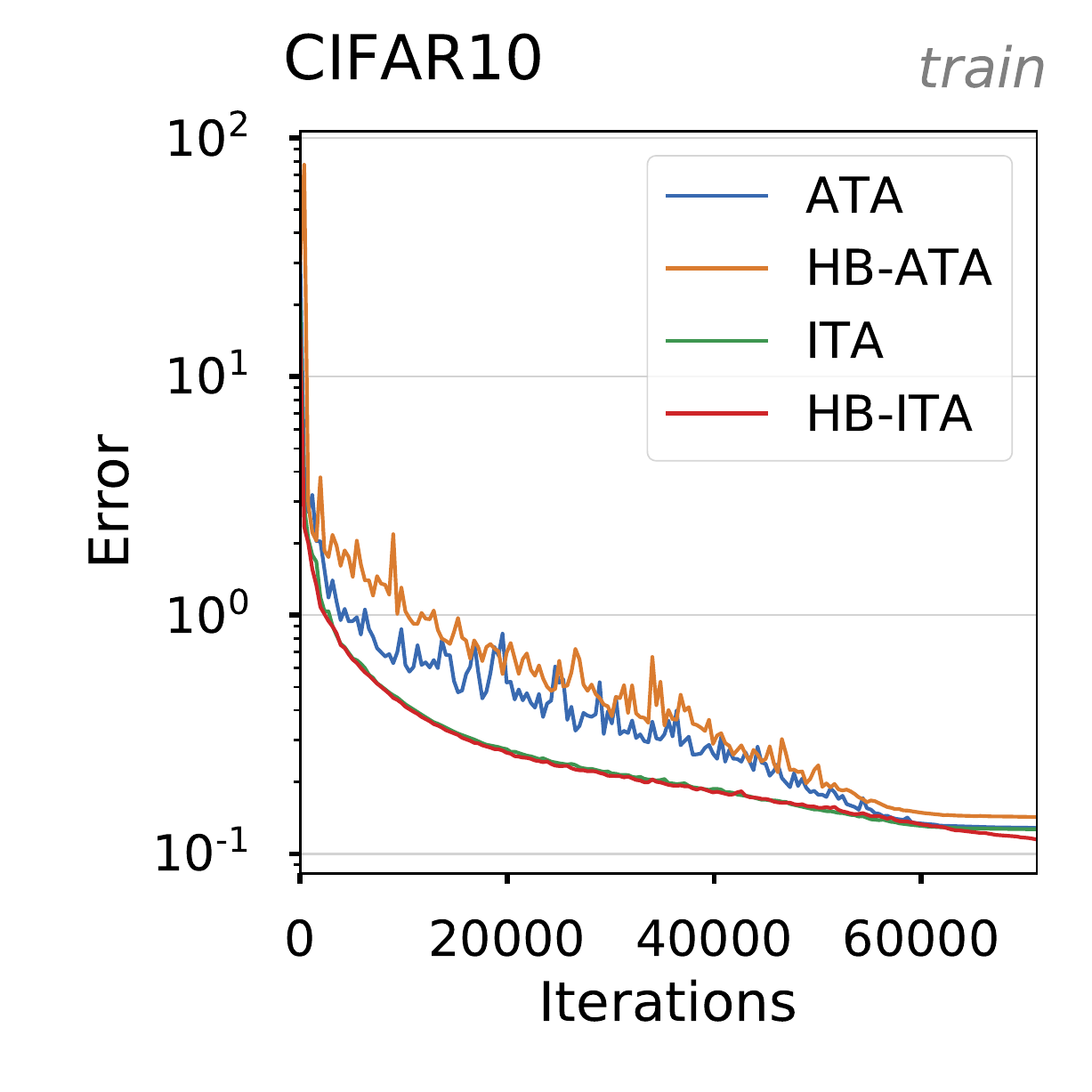}
    \includegraphics[width=0.48\textwidth]{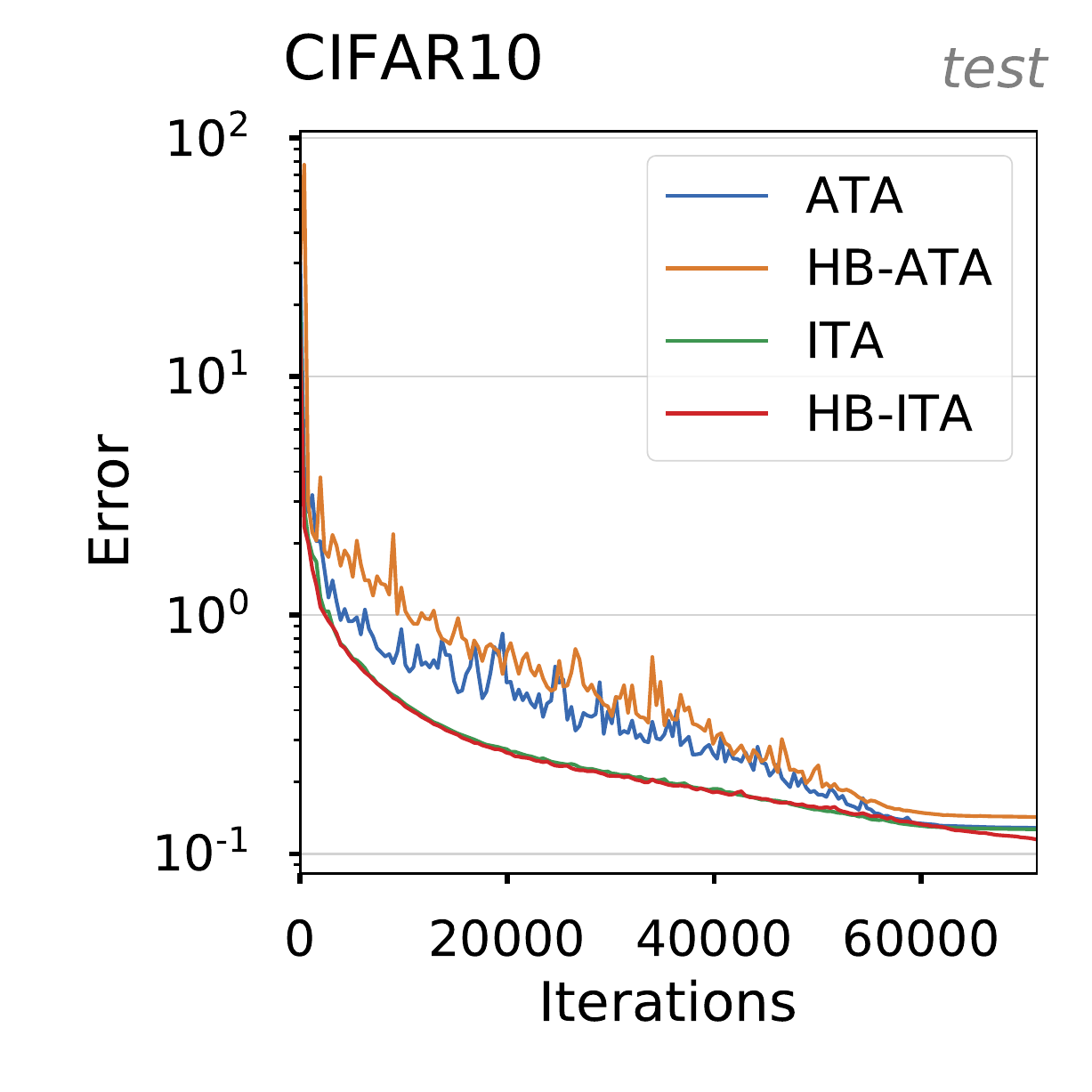}
    \\
    \includegraphics[width=0.48\textwidth]{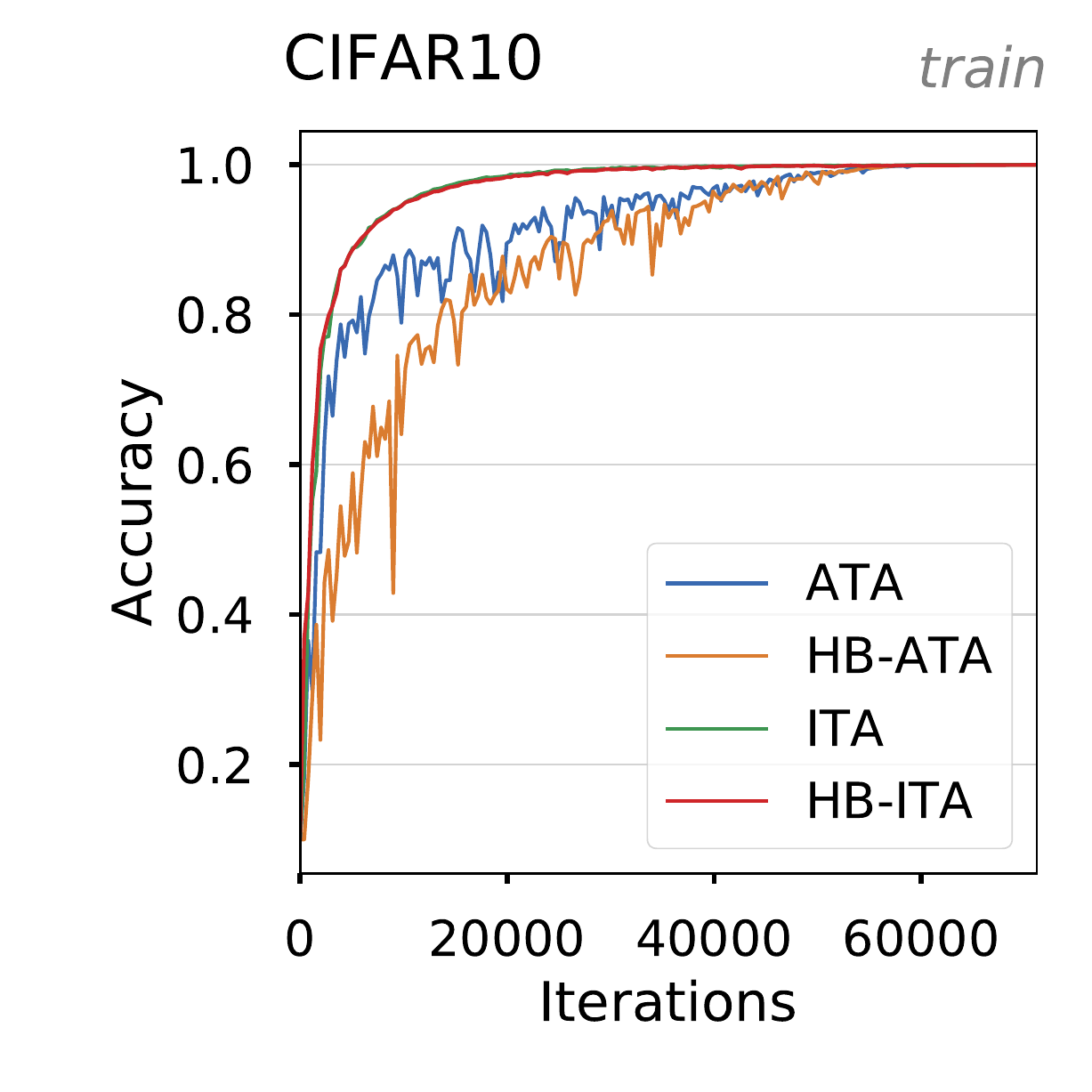}
    \includegraphics[width=0.48\textwidth]{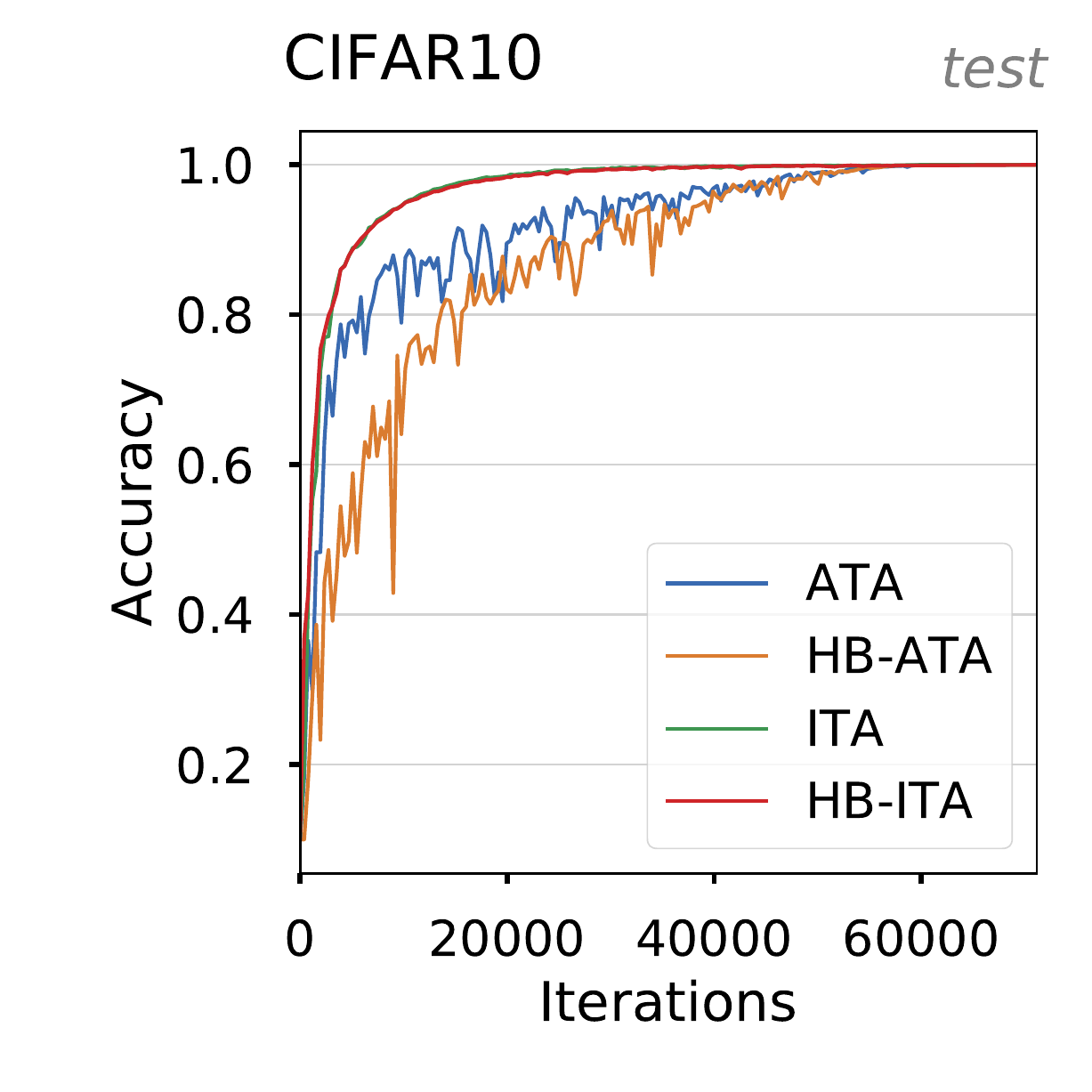}
    \caption{
        Convergence and accuracy curves along training for the CIFAR10 experiments comparing the use of ATA combined with our proposed implicit transport mechanism.
        \textbf{Left}: Training.
        \textbf{Right}: Testing.
        \label{fig:cifar10-avg}
    }
\end{figure}

\paragraph{Effect of the batch size}
We look into the effect of the batch size. To do so, we plot the number of steps
required to reach a reliably attainable training loss of 0.4 as a function of the batch size. 
We ran using the following mini-batch sizes: 32, 128, 512 and 2048. Running with larger minibatches led to out of memory errors
on our single GPU setup.
The results presented in figure~\ref{fig:analysis} show the benefit of IGT lowers as the batch size increases. Note that Adam's ability to keep benefiting from larger batch sizes
is consistent with previous observations.

\begin{figure}
    \begin{minipage}{0.5\textwidth}
    \centering
    \includegraphics[width=1.0\textwidth]{./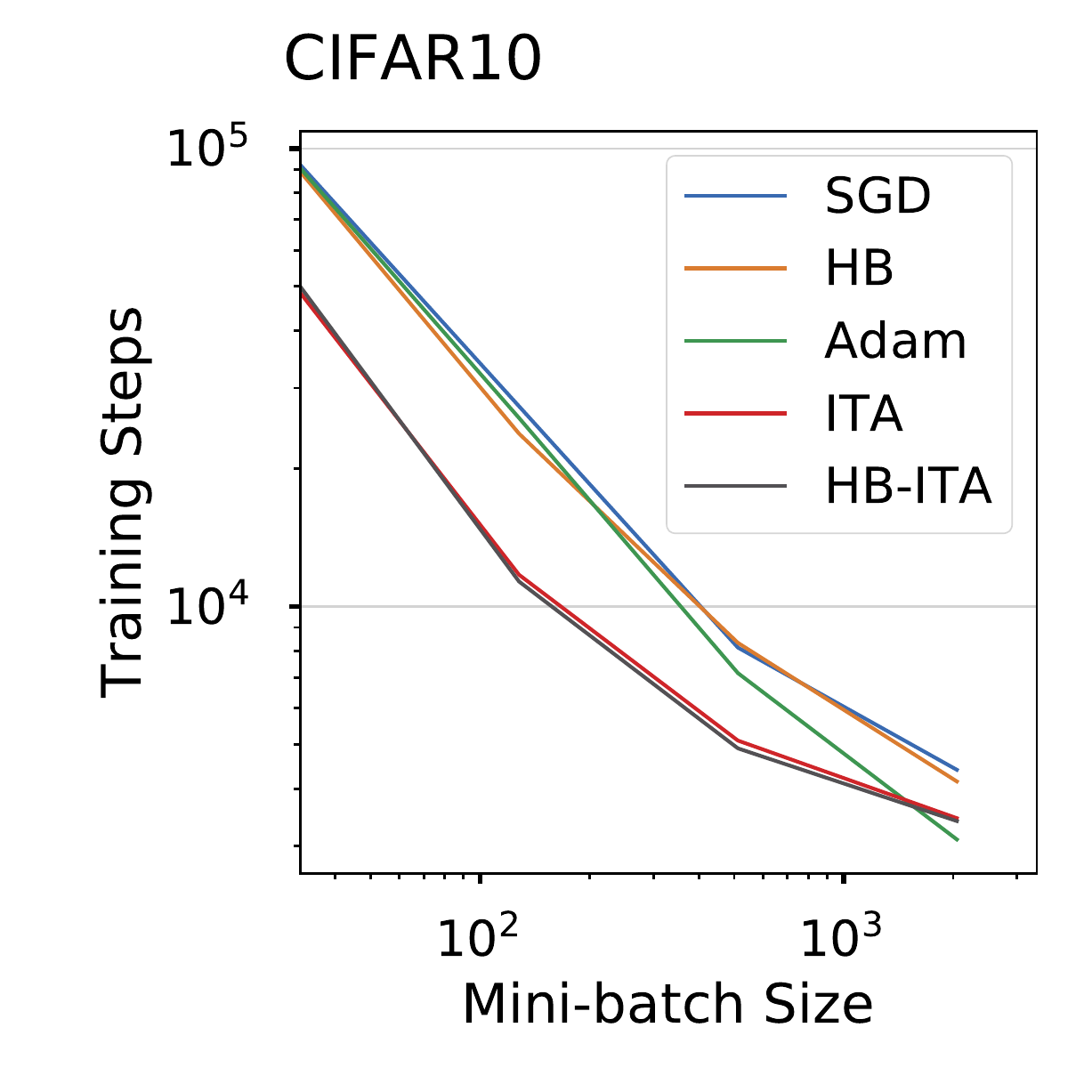}    
    \end{minipage}
    \begin{minipage}{0.49\textwidth}    
    \caption{
        Effect of mini-batch size on the number of steps to reach a target training loss.
        \label{fig:analysis}
    }
    \end{minipage}
\end{figure}

\subsection{ImageNet image classification}

\paragraph{Dataset}
We use the 2015 edition of the \textit{ImageNet Large-Scale Visual Recognition Challenge} (ImageNet) \cite{ILSVRC15} dataset.
This dataset consists of 1.2M images partitioned into 1'000 classes.
We use the pre-processing and loading utilities of the TensorFlow \texttt{models} package, available at the following URL: \url{https://github.com/tensorflow/models}

\paragraph{Model}
Our model is again a large residual network, consisting of 50 layers.
Similar to our CIFAR10 experiments above, we use the implementation described in \cite{he2016identity}.

\paragraph{Hyper-parameters}
We used the same setup and approach as for the CIFAR-10 experiments. The setup trains for 90 epochs using
mini-batches of 1024 examples. We used
a larger grid for the learning rate schedule: decay (0.1, 0.01, 0.001) and decay step fraction (0.7, 0.8, 0.9).

The resulting hyper-parameters are:
\begin{itemize}
	\item SGD stepsize 0.3, decay 0.01, decay step 0.8
	\item HB stepsize 0.03, decay 0.001, decay step 0.9
	\item Adam stepsize 0.0001, decay 0.01, decay step 0.9
	\item ITA stepsize 0.3, decay 0.01, tail fraction 90, decay step 0.9
	\item HB-ITA stepsize 0.03, decay 0.01, tail fraction 90, decay step 0.9
\end{itemize}

\paragraph{Infrastructure and Runs}
We ran these experiments using Google TPUv2.

\paragraph{Additional Results}

We include error and accuracy curves for training and testing in Figure~\ref{fig:all-ilsrvc}.

\begin{figure}
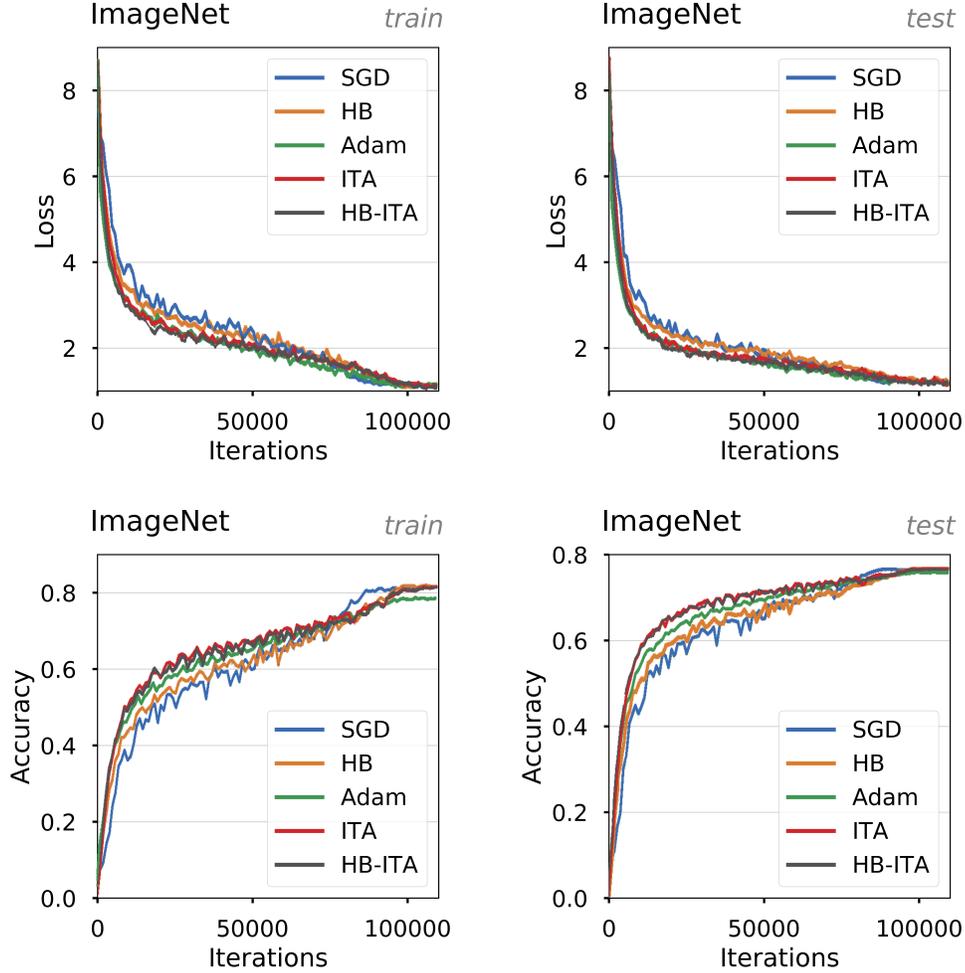
%[H]
    \centering
    \includegraphics[width=0.48\textwidth]{figs/pa_imagenet/train_err.png}
    \includegraphics[width=0.48\textwidth]{figs/pa_imagenet/test_err.png}
    \\
    \includegraphics[width=0.48\textwidth]{figs/pa_imagenet/train_acc.png}
    \includegraphics[width=0.48\textwidth]{figs/pa_imagenet/test_acc.png}
    \caption{
        Convergence and accuracy curves along training for the ImageNet experiments.
        \textbf{Left}: Training.
        \textbf{Right}: Testing.
        \label{fig:all-ilsrvc}
    }
\end{figure}

\subsection{IMDb sentiment analysis}

\paragraph{Dataset}
The \textit{Internet Movie Database} (IMDb) \cite{maas2011learning} consists of 25'000 training and 25'000 test movie reviews.
The objective is binary sentiment classification based on the review's text.
We randomly split the training set in two folds of 17'536 and 7'552 reviews, the former being used for training and the latter for testing.
The data is downloaded, splitted, and pre-processed with \texttt{torchtext} package, available at the following URL: \url{https://github.com/pytorch/text}
More specifically, we tokenize the text at the word-level using the \texttt{spaCy} package, and embed the tokens using the 100-dimensional GloVe 6B \cite{pennington2014glove} distributed representations.

\paragraph{Model}
The model consists of an embedding lookup-table, followed by a bi-directional LSTM with dropout, and then by a fully-connected layer.
The LSTM uses 256 hidden units and the dropout rate is set to 0.5.
The whole model consists of 3.2M trainable parameters, with the embedding lookup-table initilized with the GloVe vectors.
The model is trained to minimize the \texttt{BCEWithLogitsLoss} with a mini-batch size of 64.

\paragraph{Hyper-parameters}
For each method, we used a grid-search to find the stepsize minimizing validation error after 15 epochs.
The grid starts at 0.00025 and doubles until reaching 0.1024, so as to ensure that no chosen value lies on its boundaries.
When applicable, the momentum factor is jointly optimized over values 0.1 to 0.95.
The final hyper-parameters are displayed in the following table for each method.

\begin{table}[h]
    \centering
    \caption{Hyperparameters for IMDb experiments.}
    \begin{tabular}{@{}lccccc@{}}
    \toprule
             & HB        & Adam     & ASGD     & HB-IGT        & HB-ITA        \\ \midrule
    $\alpha$ & 0.032     & 0.0005   & 0.064    & 0.128         & 0.064         \\
    $\mu$    & 0.95      & 0.95     &  n/a     & 0.9           & 0.9           \\
    $\xi$    & n/a       & n/a      & 100      & n/a           & n/a           \\
    $\kappa$ & n/a       & n/a      & $10^5$   & n/a           & n/a           \\ \bottomrule
    \end{tabular}
\end{table}

\paragraph{Infrastructure and Runs}
All IMDb experiments use a single NVIDIA GTX 1080, with PyTorch v0.3.1.post2, CUDA 8.0, and cuDNN v7.0.5.
We run each final configurations with 5 different random seeds and always report the mean tendency $\pm$ one standard deviation.
Each run lasts approximately three hours and thirty minutes.

\paragraph{Additional Results}
In addition to the results reported in the main text, we include training, validation, and testing curves for each method in Figure~\ref{fig:imdb}.
Shading indicates the one standard deviation interval.
Note that our focus is explicitly on optimization: in the specific case of IMDb, training for 200 epochs is completely unnecessary from a generalization standpoint as performance degrades rapidly after 15-20 epochs.

\begin{figure}%[H]
    \centering
    \includegraphics[width=0.48\textwidth]{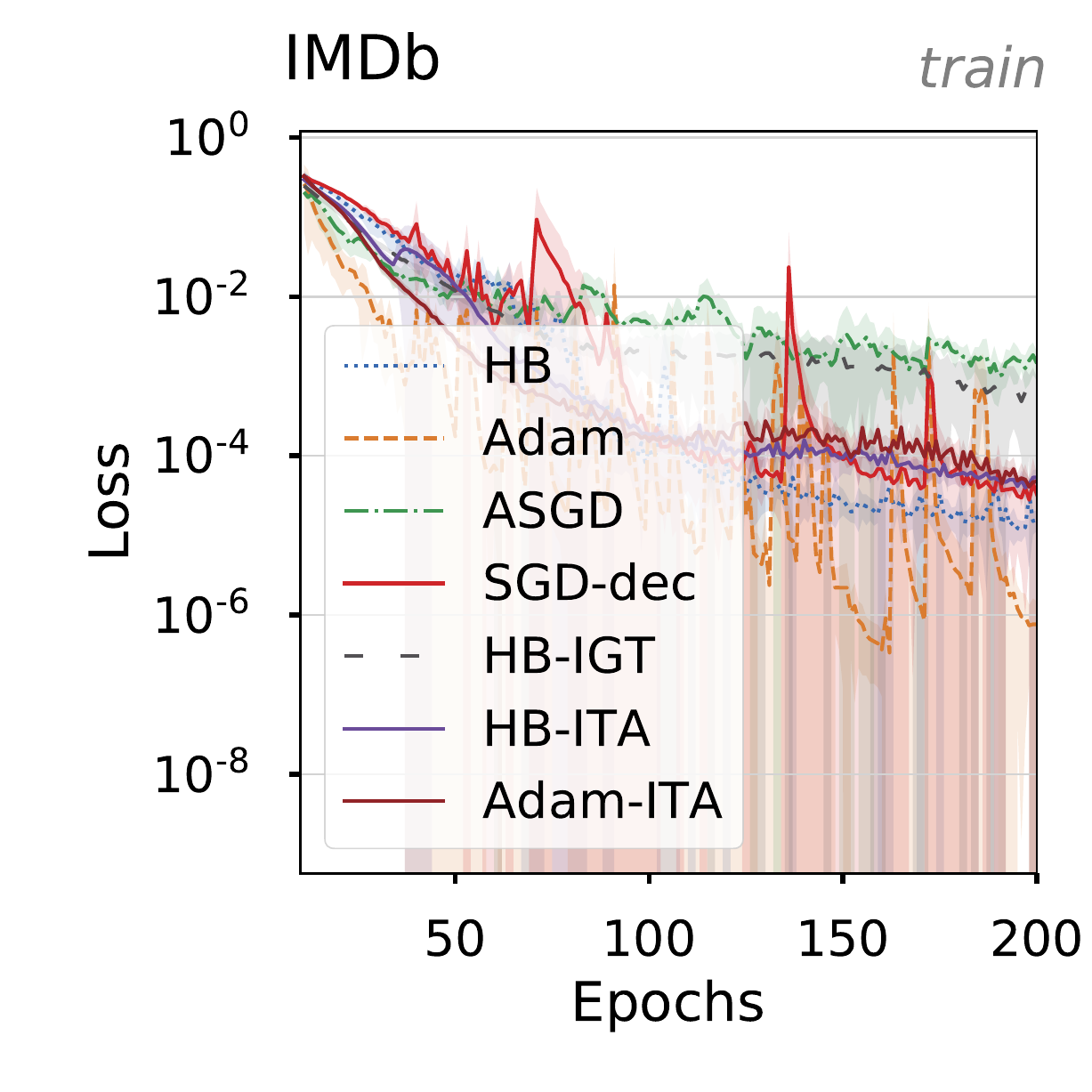}
    \includegraphics[width=0.48\textwidth]{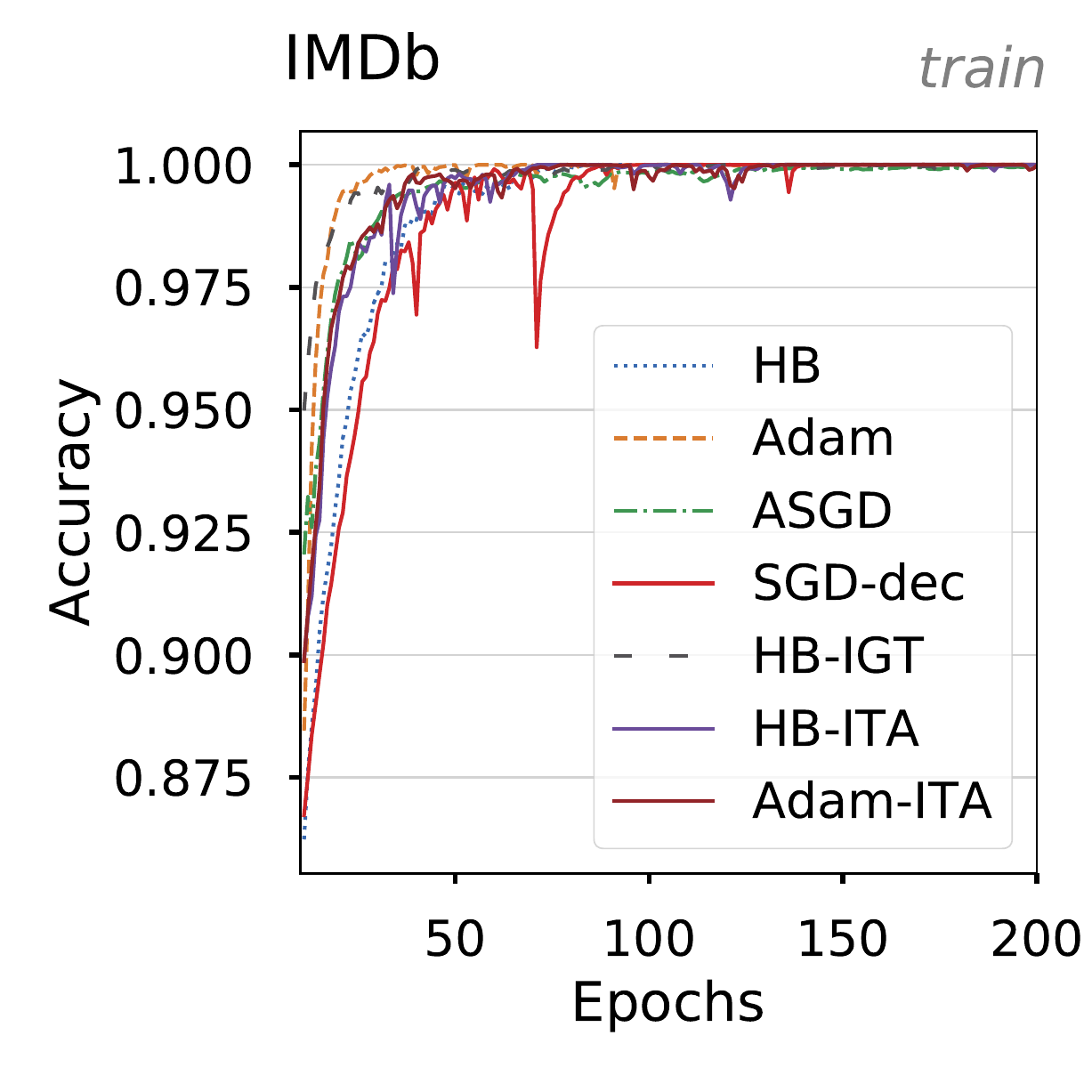}
    \\
    \includegraphics[width=0.48\textwidth]{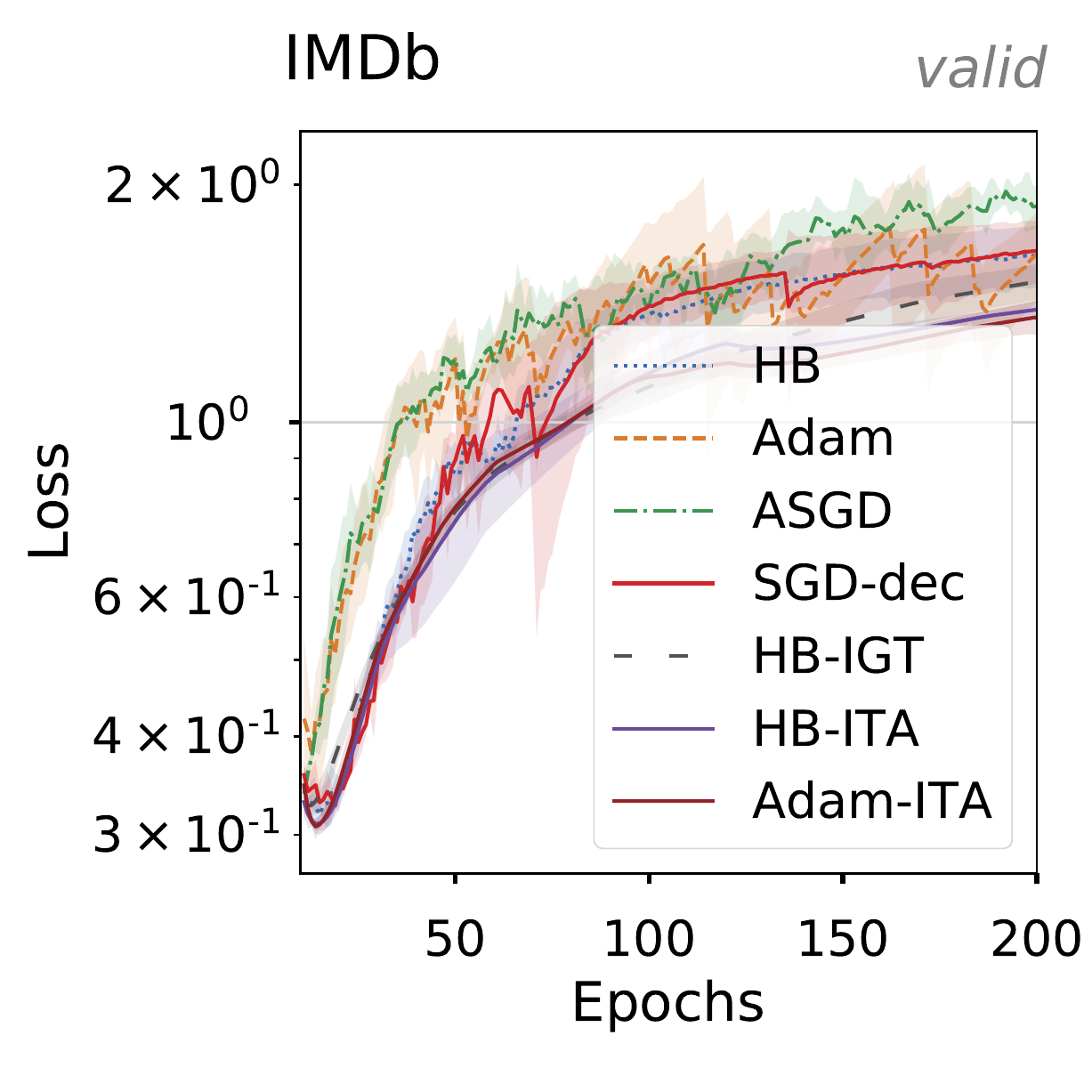}
    \includegraphics[width=0.48\textwidth]{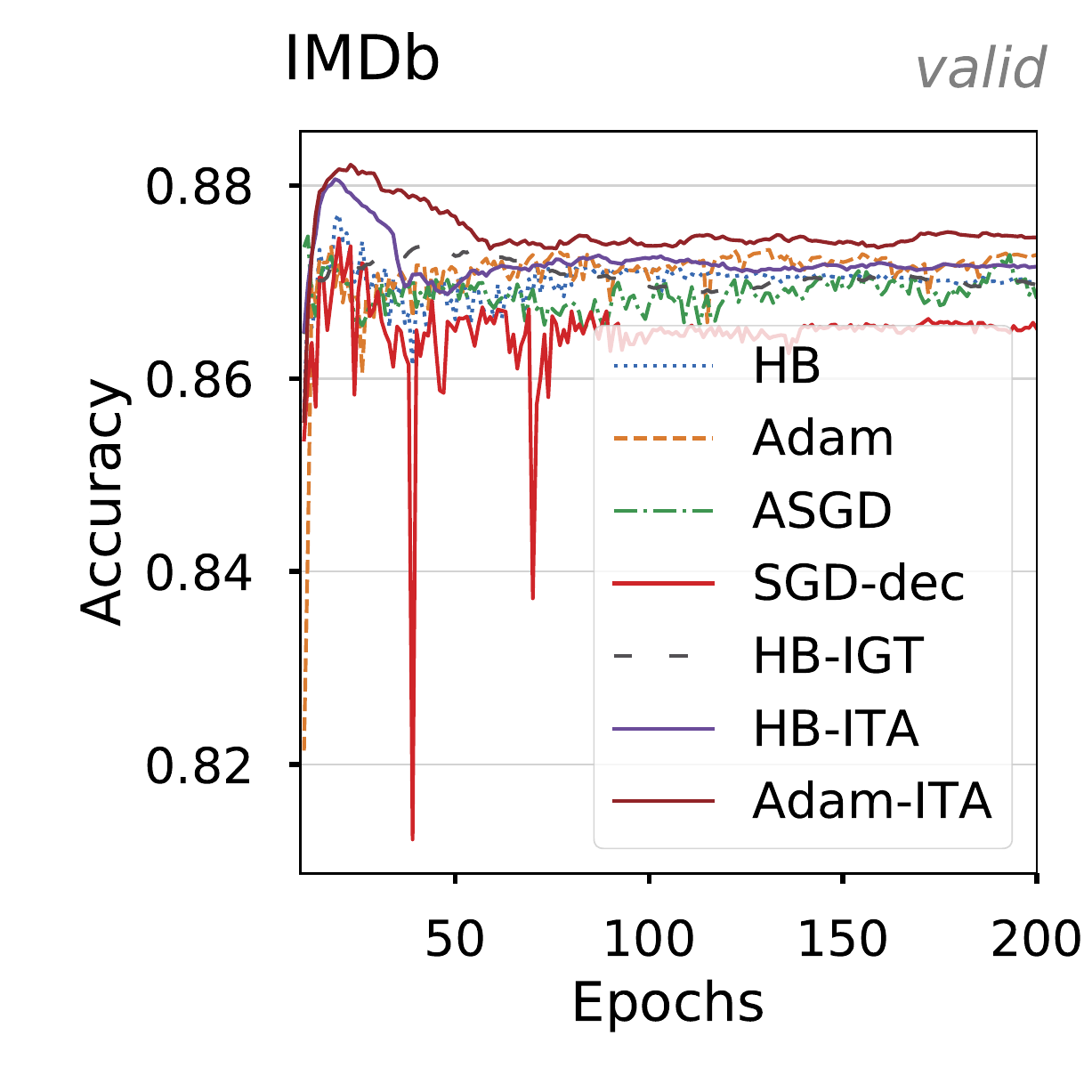}
    \\
    \includegraphics[width=0.48\textwidth]{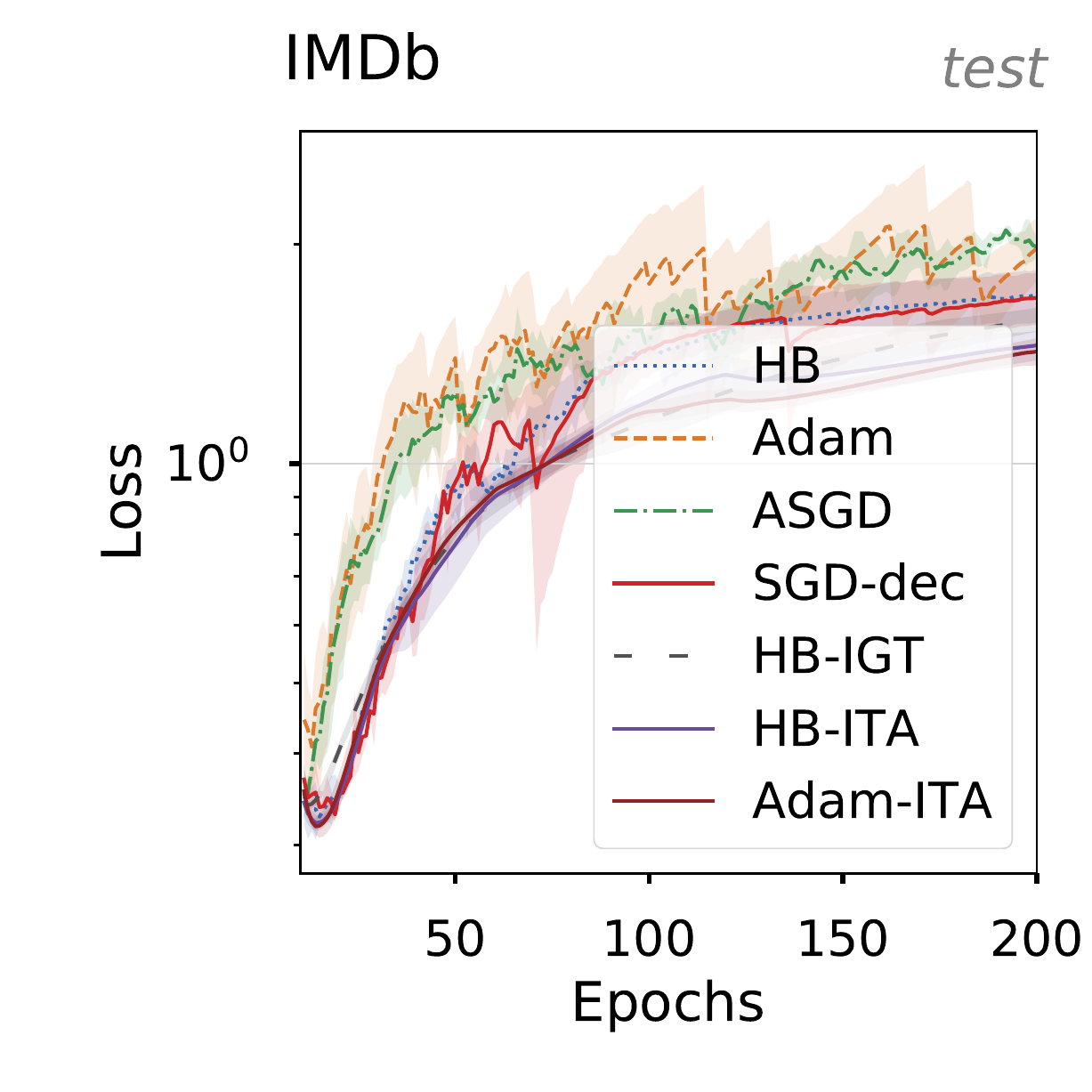}
    \includegraphics[width=0.48\textwidth]{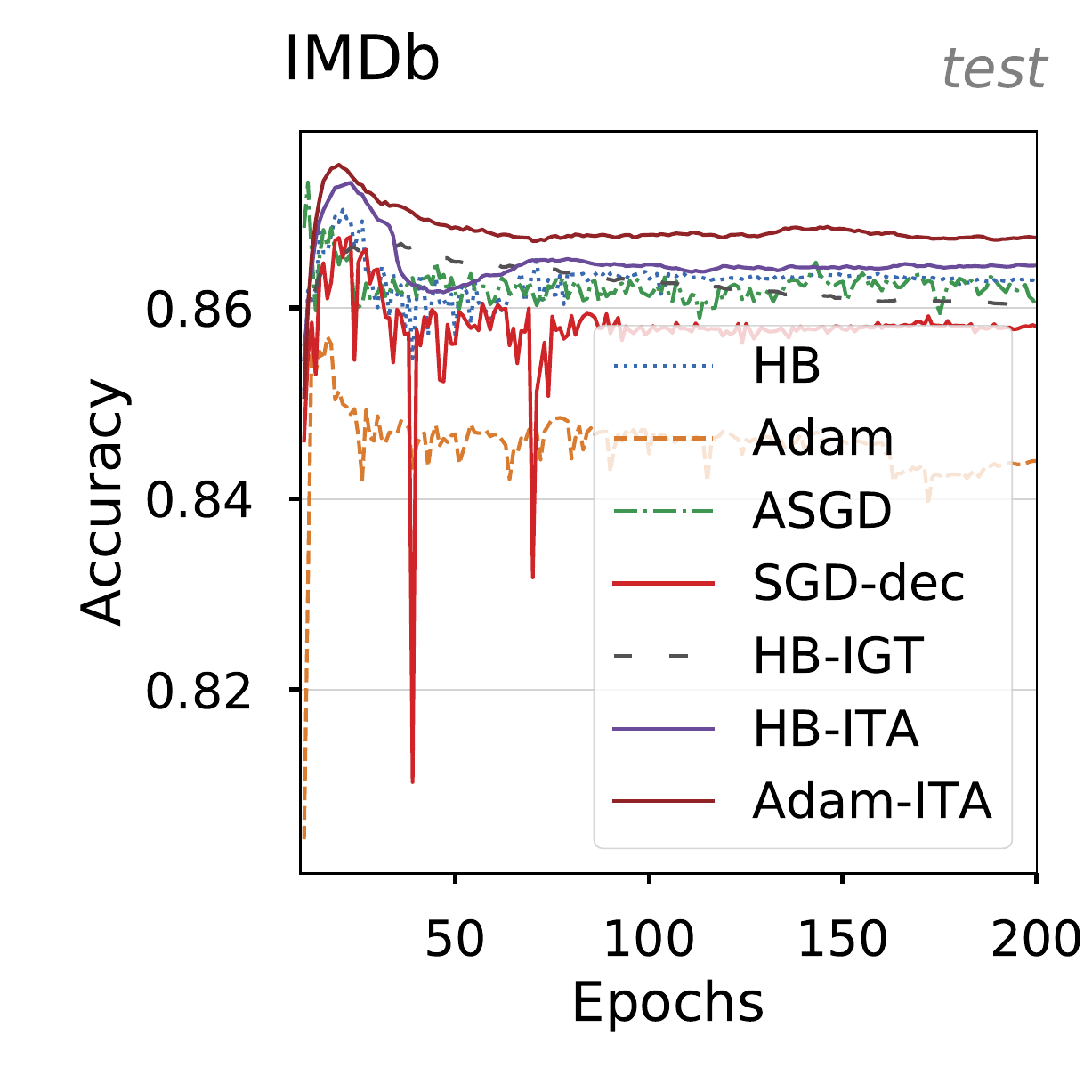}
    \caption{
        Convergence and accuracy curves along training for the IMDb experiments.
        \textbf{Left}: Convergence.
        \textbf{Right}: Accuracy.
        \label{fig:imdb}
    }
\end{figure}

\subsection{Linear-quadratic regulator}

\paragraph{Setup}
Our linear-quadratic regulator \cite{kwakernaak1972linear} implements the following equations.
The cost functional is evaluated at every timestep $h$ and is given by

\begin{align}
    C(s_h, a_h) = s_h^\top Q s_h + a_h^\top R a_h,
\end{align}

for random symmetric positive definite matrices $Q\in \R^{20 \times 20}$ and $R\in \R^{12\times 12}$ each with condition number 3.
The initial state $s_0 \sim \mathcal{N}(0, 3\cdot I_{20})$ is sampled around the origin, and the subsequent states evolve according to 

\begin{align}
    s_{h+1} = As_h + Ba_h,
\end{align}

where entries of $A\in\R^{20\times 20}, B\in\R^{20\times 12}$ are independently sampled from a Normal distribution and then scaled such that the matrix has unit Frobenius norm.
The actions are given by the linear stochastic policy $a_h = Ks_h + \epsilon_h^a$, where $\epsilon_h^a \sim \mathcal{N}(0, I)$ and $K$ are the parameters to be optimized.

Gradient methods in this manuscript optimize the sum of costs using the REINFORCE estimate \cite{williams1992simple} given by 

\begin{align}
    \nabla_K \E \sum_h^{10} C(s_h, a_h) = \E \left(\sum_h^{10} \nabla_K \log \pi_K(a_h \vert s_h)\right) \left(\sum_h^{10} C(s_h, a_h)\right).
\end{align}

In our experiments, the above expectation is approximated by the average of 600 trajectory rollouts.
Due to the noisy dynamics of the system, it is possible for the gradient norm to explode leading to numerical unstabilities -- especially when using larger stepsizes.
To remedy this issue, we simply discard such problematic trajectories from the gradient estimator.

For each training iteration, we first gather 600 trajectories used for learning and then 600 more used to report evaluation metrics.

\paragraph{Hyper-parameters}
Due to the simplicity of the considered methods, the only hyper-parameter is the stepsize.
For each method, we choose the stepsize from a logarithmically-spaced grid so as to minimize the evaluation cost after 600 iterations on a single seed.
Incidentally, the optimal stepsize for GD, SGD, and ITA is 0.0002.

\paragraph{Infrastructure and Runs}
We use an Intel Core i7-5820K CPU to run the LQR experiments.
All methods are implemented using \texttt{numpy} v1.15.4.
We present results averaged over 3 random seeds, and also report the standard deviation.
For stochastic gradient methods (SGD, ITA) training for 20K iterations takes about 3h, for full-gradient method (GD) about 10h, and computing the solution of the Riccati equation takes less than 5 seconds.

\paragraph{Additional Results}
In addition to the evaluation cost reported in the main text, we also include the cost witnessed during training (and used for optimization) in Figure~\ref{fig:lqr}.

\begin{figure}%[H]
    \centering
    \includegraphics[width=0.42\textwidth]{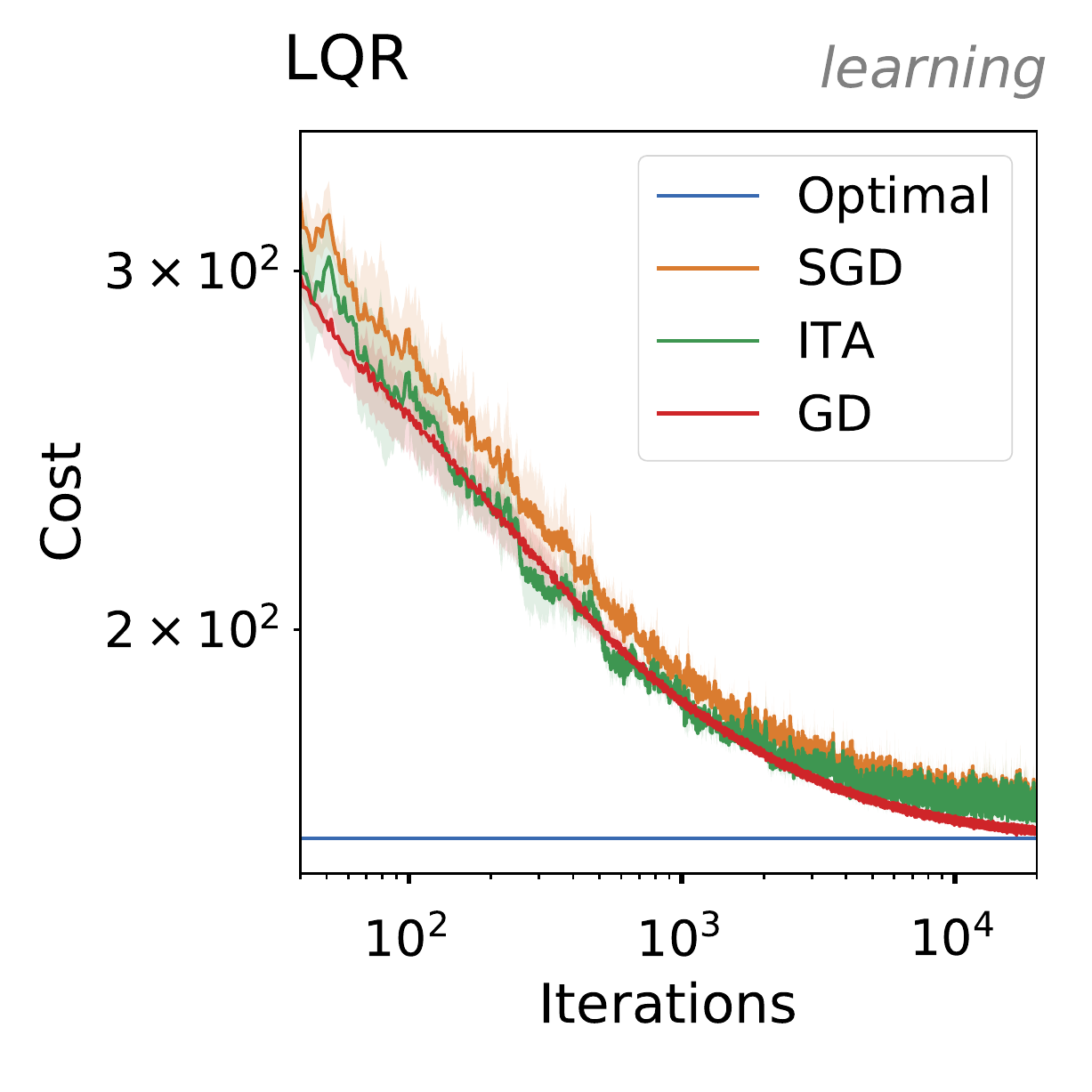}
    \includegraphics[width=0.42\textwidth]{figs/neurips19_lqr/test.pdf}
    \caption{
        LQR costs along training iterations.
        \textbf{Left}: Costs used for learning.
        \textbf{Right}: Costs used for evaluation.
        \label{fig:lqr}
    }
\end{figure}

We notice that the training cost curve of ITA is not as smooth as the evaluation one.
Similarly, the observed learning costs never reach as good a minima as the evaluation ones.
This phenomena is easily clarified: during learning, ITA esimates the gradient using the shifted parameters $K_t + \frac{\gamma_t}{1-\gamma_t}(K_t - K_{t-1})$ as opposed to the true parameters $K_t$.
Those shifted parameters are not subject to a reduced variance, hence explaining the observed noise in the cost as well as deteriorated convergence.

\subsection{Model-agnostic meta-learning}

\paragraph{Dataset}
We use the Mini-Imagenet dataset \cite{ravi2016optimization} in our model-agnostic meta-learning (MAML) \cite{finn2017model} experiments.
This dataset comprises 64 training, 12 validation, and 24 test classes.
For each of train, validation, and test sets, tasks are constructed by sampling 5 classes from their respective split, and further sampling 5 images per class.
Images are downsampled to 84x84x3 tensors of RGB values.
For more details, please refer to the official code repository -- which we carefully replicated -- at the following URL: \url{https://github.com/cbfinn/maml}

Our implementation departs in two ways from the reference.
First, we train our models for 100k iterations as opposed to 60k and only use 5 image samples to compute a meta-gradient whereas the reference implementation uses 15.
Second, we only use 5 adaptation steps at evaluation time, when the reference uses 10.

\paragraph{Model}
The model closely replicates the convolutional neural network of MAML \cite{finn2017model}.
It consists of 4 layers, each with 32 3x3 kernels, followed by batch normalization and ReLU activations.
For specific implementation details, we refer the reader to the above reference implementation.

\paragraph{Hyper-parameters}
We only tune the meta-stepsize for the MAML experiment.
We set the momentum constant to 0.9, the adaptation-stepsize to 0.01, and average the meta-gradient of 4 tasks per iterations.
Due to the reduced variance in the gradients, we found it necessary to increase the $\epsilon$ of Adam-ITA to 0.01.

For each method, we search over stepsize values on a logarithmically-spaced grid and select those values that maximize validation accuracy after 10k meta-iterations.
These values are reported in Table~\ref{tab:maml}.

\begin{table}[h]
    \centering
    \begin{tabular}{@{}lccccc@{}}
    \toprule
             & HB        & Adam     & HB-ITA  & Adam-ITA \\ \midrule
    $\alpha$ & 0.008     & 0.001    & 0.008   & 0.0005   \\\bottomrule
    \end{tabular}
    \caption{Stepsizes for MAML experiments.\label{tab:maml}}
\end{table}

\paragraph{Infrastructure and Runs}
Each MAML experiment is run on a single NVIDIA GTX TITAN X, with PyTorch v1.1.0, CUDA 8.0, and cuDNN v7.0.5.
We run each configuration with 3 different random seeds and report the mean tendency $\pm$ one standard deviation.
Each run takes approximately 36 hours, and we evaluate the validation and testing accuracy every 100 iteration.

\paragraph{Additional Results}

We complete the MAML validation curves from the main manuscript with training and testing accuracy curves in Figure~\ref{fig:maml}.
Moreover, we recall the final test accuracies for each method: Adam-ITA reaches $65.16\%$, HB-ITA $64.57\%$, Adam $63.70\%$, and HB $63.08\%$.

\begin{figure}%[H]
    \centering
    \includegraphics[width=0.32\textwidth]{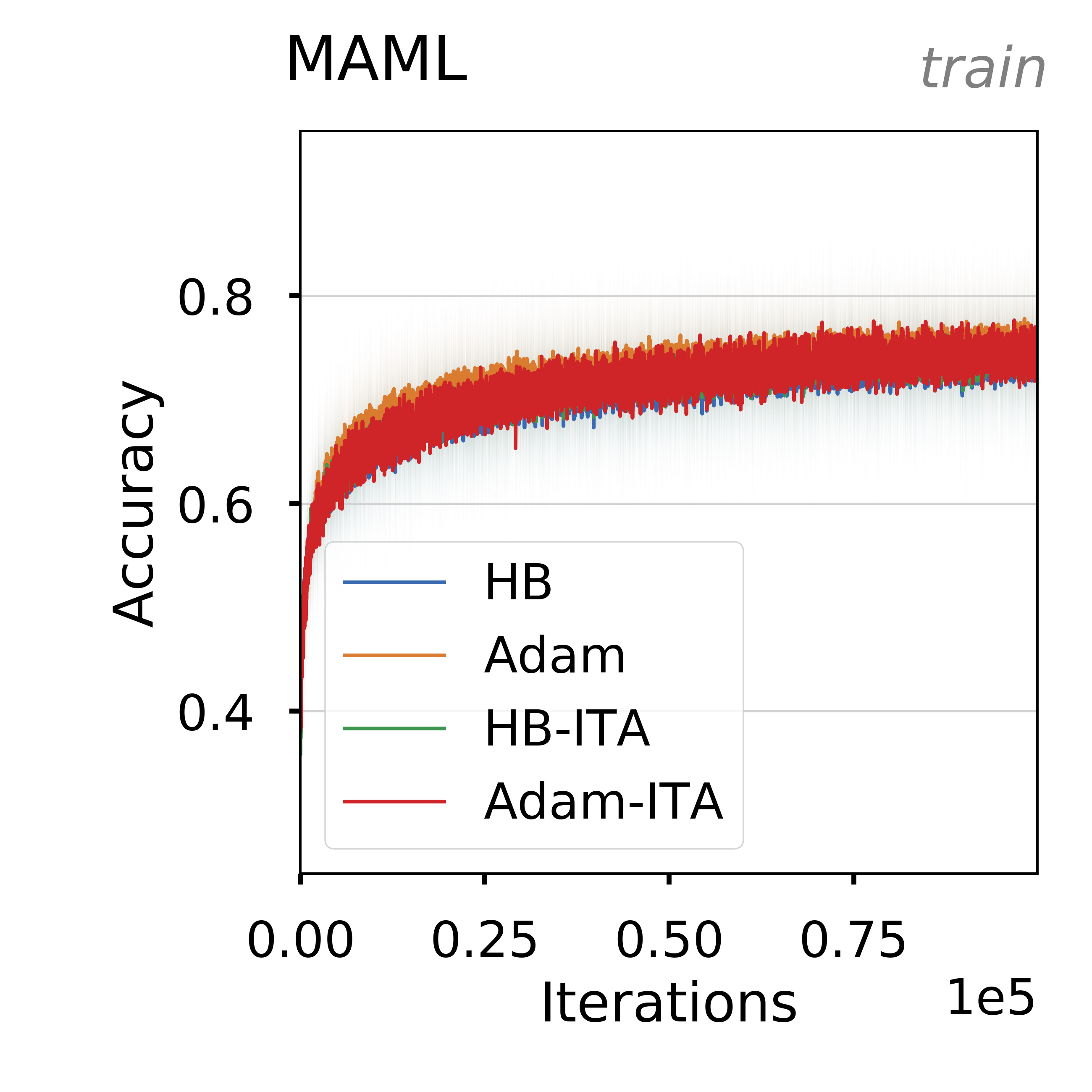}
    \includegraphics[width=0.32\textwidth]{figs/neurips19_maml/valid_acc.pdf}
    \includegraphics[width=0.32\textwidth]{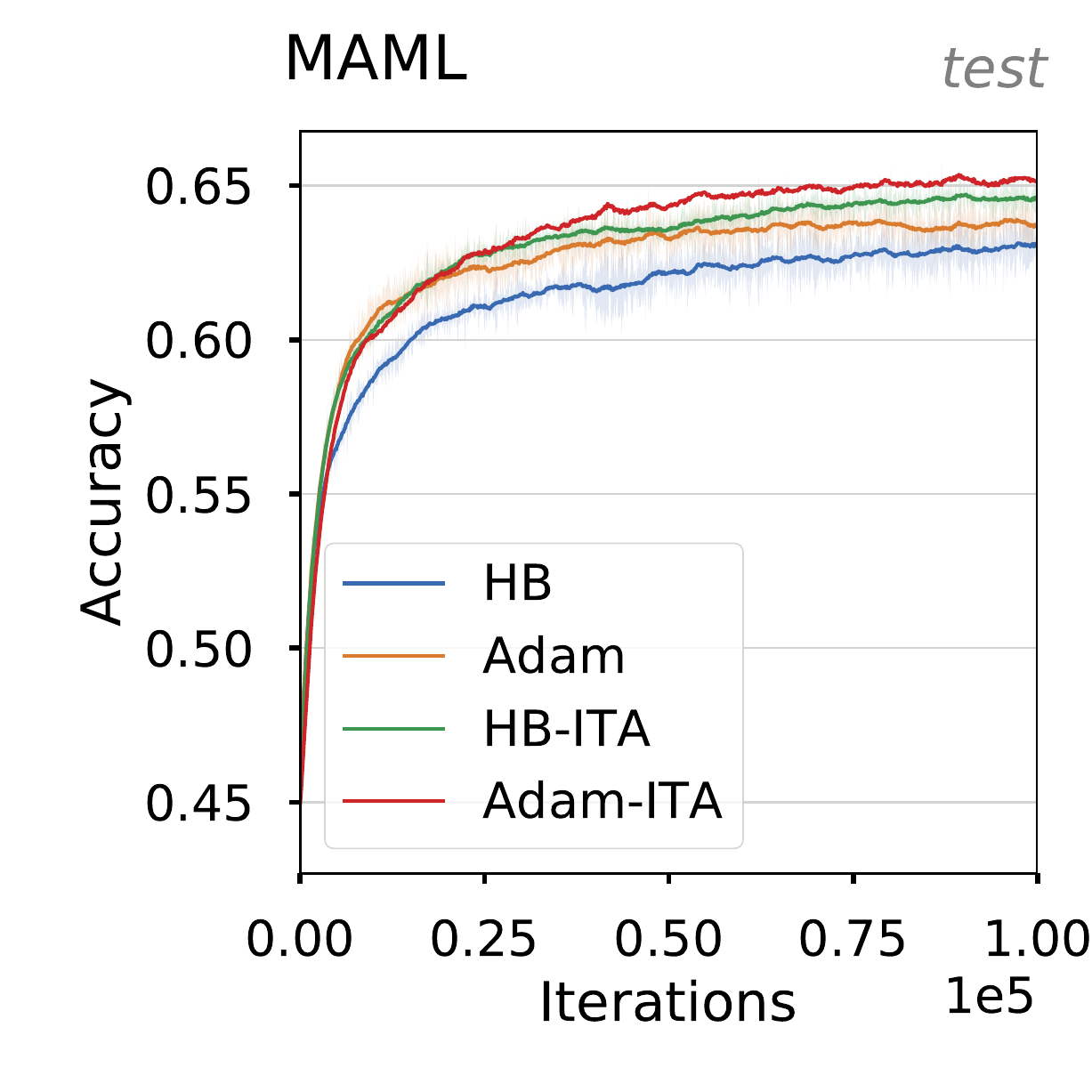}
    \caption{
        Training, validation, and testing accuracies for the MAML experiments along training.
        Shading indicates the 1 standard deviation interval.
        \textbf{Left}: Training.
        \textbf{Center}: Validation.
        \textbf{Right}: Testing.
        \label{fig:maml}
    }
\end{figure}

\section{Additional Experiments}
\label{sec:additional-experiments}
This section presents additional experiments to the ones included in the main text.

\subsection{Baselines comparisons}

While experiments in Section~\ref{sec:igt_ideal} highlighted properties of IGT and HB-IGT when the assumption of identical, constant Hessians was verified, we now turn to more realistic scenarios where individual functions are neither quadratic nor have the same Hessian to compare our proposed methods against popular baselines for the online stochastic optimization setting.
We target optimization benchmarks and focus on training loss minimization.

\begin{figure}%[H]
    \centering
    \includegraphics[width=0.32\textwidth]{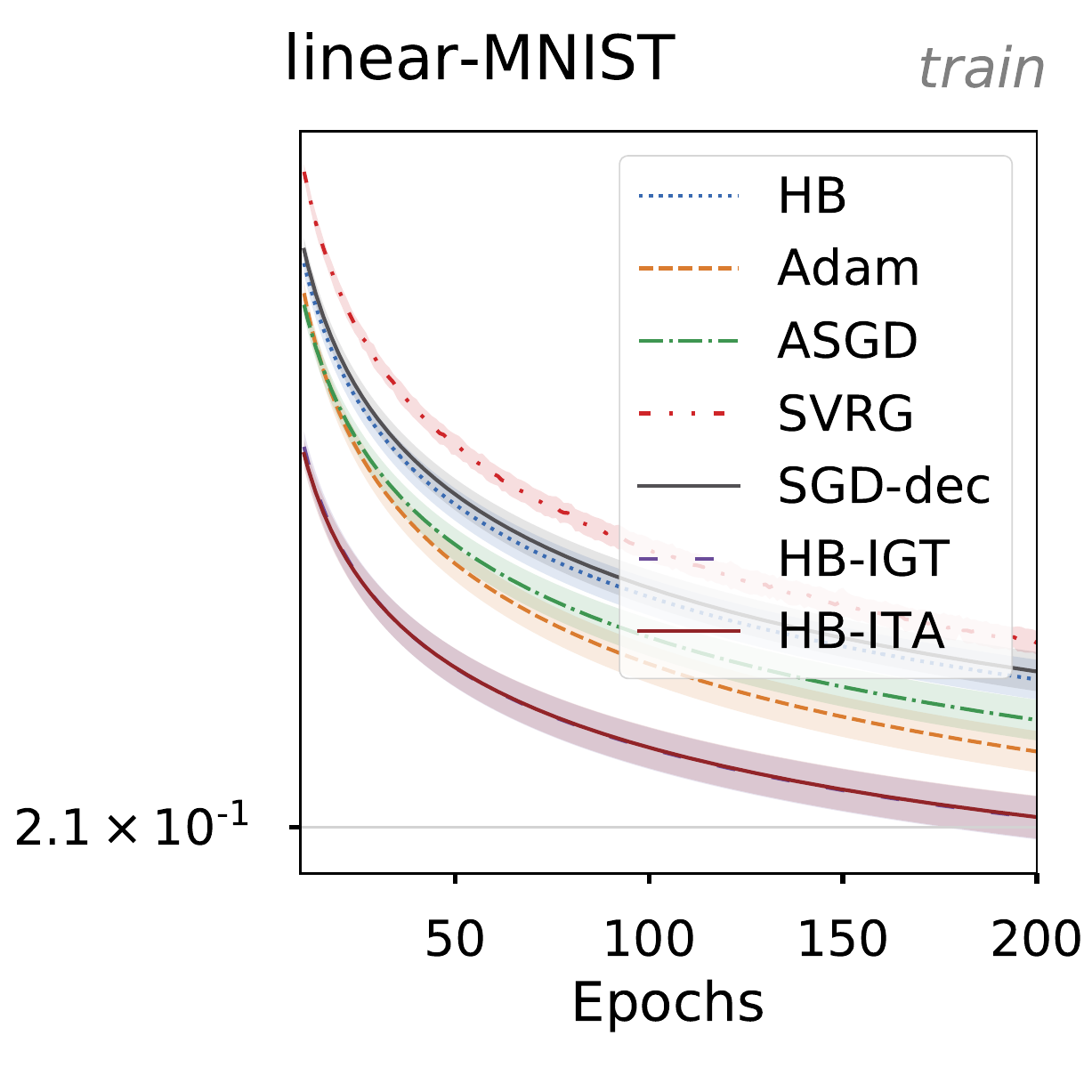}
    \includegraphics[width=0.32\textwidth]{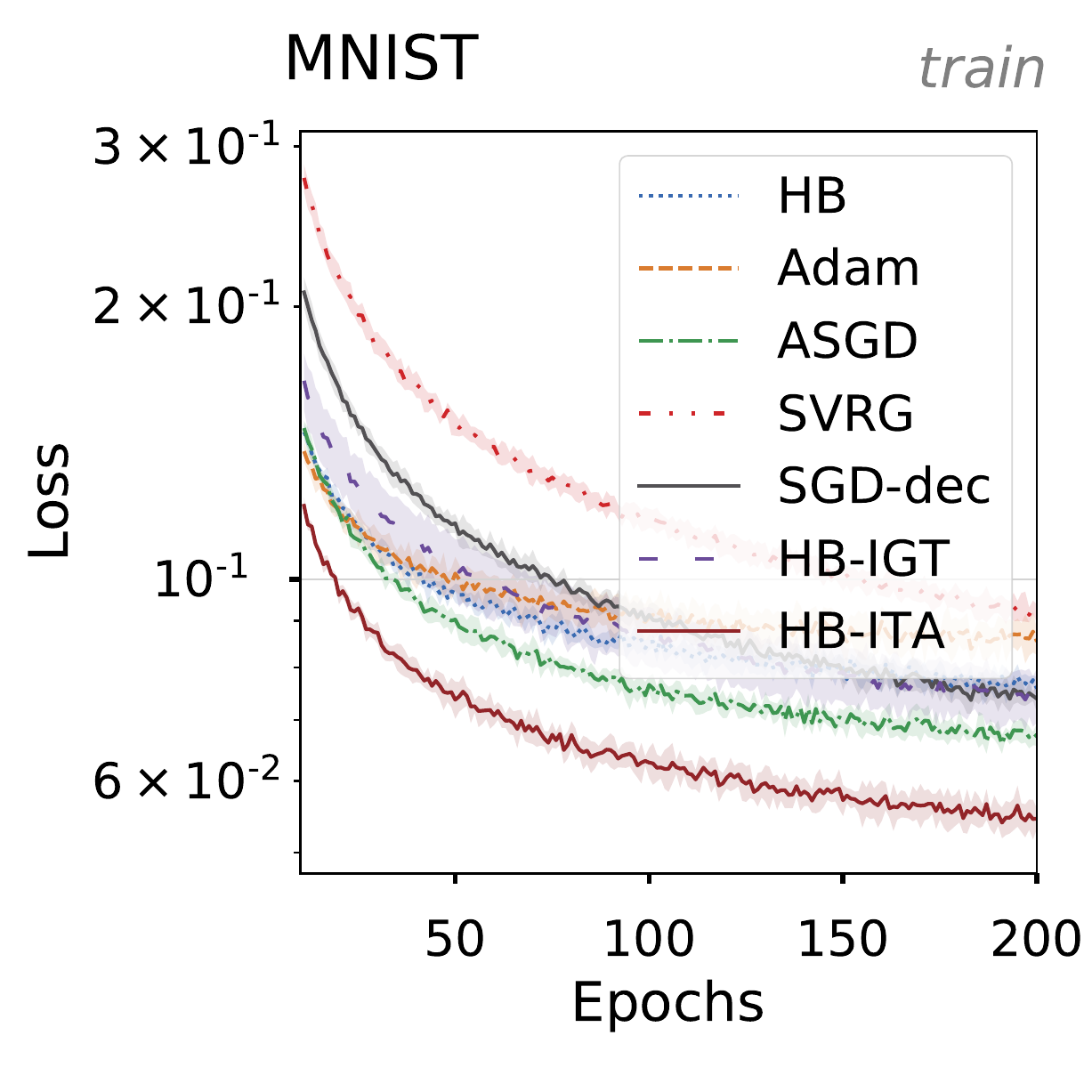}
    \includegraphics[width=0.32\textwidth]{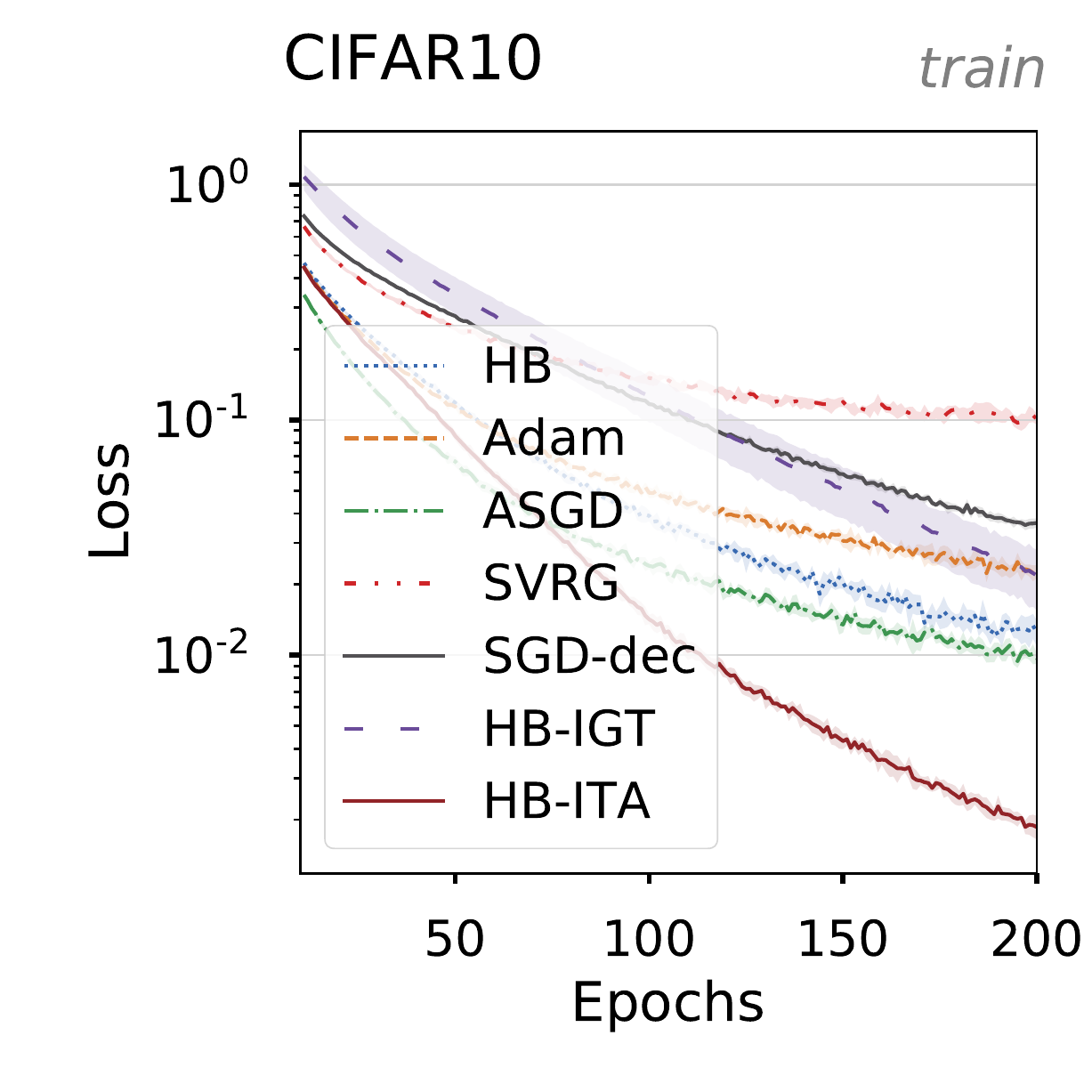}
    \caption{
        Training loss curves for different optimization algorithms on several popular benchmarks.
        For each method, the hyper-parameters are tuned to minimize the training error after 15 epochs.
        Algorithms using the IGT gradient estimates tend to outperform their stochastic gradient counter-parts.
        \textbf{Left}: Logistic regression on MNIST.
        \textbf{Center}: LeNet5 on MNIST.
        \textbf{Right}: MobileNetv2 on CIFAR10.
        \label{fig:baselines}
    }
\end{figure}

We investigate three different scenarios: (a) \textbf{linear-mnist}: a logistic regression model on MNIST, (b) \textbf{mnist}: a modified version of LeNet5 \cite{lecun2010mnist} on MNIST and (c) \textbf{cifar-small}: the MobileNetv2 architecture \cite{sandler2018mobilenetv2} consisting of 19 convolutional layers on CIFAR10.
% As a final experiment, we train a \tocheck{50}-layers residual network \cite{he2016deep} on the Imagenet dataset \cite{deng2009imagenet}.
All models are trained with a mini-batch size of 64, while the remaining hyper-parameters are available in Tables~\ref{tab:lin-mnist},~\ref{tab:mnist}, and \ref{tab:cifar_mobile}.

For each of the above tasks, models are trained for 200 epochs.
% Following the well-tuned Heavyball baseline, we train on Imagenet for 90 epochs and replace the 2-step learning rate schedule with a linear one~\cite{shallue2018measuring}.
We compare the following methods: 

\begin{itemize}
    \item \textbf{HB}: the heavy ball method~\cite{polyak1992acceleration},
    \item \textbf{Adam}~\cite{kingma2014adam},
    \item \textbf{ASGD}~\cite{jain2018accelerating},
    \item \textbf{SVRG}~\cite{johnson2013accelerating},
    \item \textbf{SGD-dec}: stochastic gradient method with an exponential learning rate schedule and exponential constant $0.999$,
    \item \textbf{HB-IGT}: the heavy ball using the IGT as a plug-in estimator, and
    \item \textbf{HB-ITA}: same as HB-IGT but using the anytime tail averaging to forget the oldest gradients.
\end{itemize}

The hyperparameters of each method, and in particular the stepsize, are tuned independently according to a logarithmic grid so as to minimize the mean training error after epoch 15 on one seed. We then use those parameters on 5 random seeds and report the mean and standard deviation of the performance.

Figure~\ref{fig:baselines} shows the training curves for the five algorithms in the three settings.

First, we observe that, for the logistic regression, HB-IGT performs on par with HB-ITA and far better than all the other methods, even though the assumption on the Hessians is violated. When using a ConvNet, however, we see that HB-IGT is not competitive with state-of-the-art methods such as Adam or ASGD. HB-ITA, on the other hand, with its smaller reliance on the assumption, once again performs much better than all other methods. In fact, HB-ITA not only converges to a lower final train error but also has a faster initial rate.

While our focus is on optimization, we also report generalization metrics in Table~\ref{table:acc}.
For each algorithm, we computed the best mean accuracy after each epoch on the test set and report this value together with its standard deviation.
The importance of the Anytime Tail-Averaging mechanism is again apparent: without it, Heavyball-IGT is unable to improve for more than a few epochs on the CIFAR10 validation set, regardless of the stepsize choice.
On the other hand, it is evident from those results that the solutions found by Heavyball-ITA are competitive with the ones discovered by other optimization algorithms.

\begin{table*}[]
    \centering
    \begin{tabular}{@{}lcccc@{}}
    \toprule
                    & Linear-MNIST          & MNIST                 & CIFAR10             & IMDb             \\ \midrule
    Heavyball       & 92.52 $\pm$ 0.04      & 99.08 $\pm$ 0.07      & 91.55 $\pm$ 0.25    & 86.90 $\pm$ 0.67 \\
    Adam            & 92.57 $\pm$ 0.10      & 98.99 $\pm$ 0.05      & 89.36 $\pm$ 0.75    & 85.62 $\pm$ 0.63 \\
    ASGD            & 92.47 $\pm$ 0.08      & 99.15 $\pm$ 0.07      & 91.45 $\pm$ 0.20    & 87.31 $\pm$ 0.21 \\
    SVRG            & 92.51 $\pm$ 0.04      & 99.06 $\pm$ 0.08      & 86.84 $\pm$ 0.17    & n/a              \\
    SGD-dec         & 92.52 $\pm$ 0.06      & 99.11 $\pm$ 0.06      & 87.53 $\pm$ 0.23    & 86.73 $\pm$ 0.34 \\
    Heavyball-IGT   & 92.47 $\pm$ 0.10      & 99.00 $\pm$ 0.05      & 12.05 $\pm$ 0.21    & 86.61 $\pm$ 0.23 \\
    Heavyball-ITA   & 92.50 $\pm$ 0.10      & 99.19 $\pm$ 0.02      & 90.37 $\pm$ 0.31    & 87.26 $\pm$ 0.24 \\ \bottomrule
    \end{tabular}
    \caption{Test accuracies from the best validation epoch. \label{table:acc}}
\end{table*}

\begin{table}[h]
    \centering
    \begin{tabular}{@{}lccccc@{}}
    \toprule
             & HB        & Adam     & ASGD     & HB-IGT        & HB-ITA \\ \midrule
    $\alpha$ & 0.0128    & 0.0002   & $0.0032$ & $0.0032$      & $0.0016$      \\
    $\mu$    & 0.1       & 0.95     &  n/a     & 0.9           & 0.1           \\
    $\xi$    & n/a       & n/a      & $10$     & n/a           & n/a           \\
    $\kappa$ & n/a       & n/a      & $10^4$   & n/a           & n/a           \\ \bottomrule
    \end{tabular}
    \caption{Hyperparameters for linear-MNIST experiments.\label{tab:lin-mnist}}
\end{table}

\begin{table}[h]
    \centering
    \begin{tabular}{@{}lccccc@{}}
    \toprule
             & HB        & Adam     & ASGD     & HB-IGT        & HB-ITA \\ \midrule
    $\alpha$ & 0.0064    & 0.0016   & 0.0128   & 0.0032        & 0.0032        \\
    $\mu$    & 0.9       & 0.95     &  n/a     & 0.95          & 0.95          \\
    $\xi$    & n/a       & n/a      & $10$     & n/a           & n/a           \\
    $\kappa$ & n/a       & n/a      & $10^4$   & n/a           & n/a           \\ \bottomrule
    \end{tabular}
    \caption{Hyperparameters for MNIST experiments. \label{tab:mnist}}
\end{table}

\begin{table}[h]
    \centering
    \begin{tabular}{@{}lccccc@{}}
    \toprule
             & HB        & Adam     & ASGD     & HB-IGT        & HB-ITA \\ \midrule
    $\alpha$ & 0.0512    & 0.0512   & 0.1024   & 0.0128        & 0.0512        \\
    $\mu$    & 0.95      & 0.9      &  n/a     & 0.9           & 0.1           \\
    $\xi$    & n/a       & n/a      & 100      & n/a           & n/a           \\
    $\kappa$ & n/a       & n/a      & $10^5$   & n/a           & n/a           \\ \bottomrule
    \end{tabular}
    \caption{Hyperparameters for MobileNetV2 on CIFAR10 experiments.\label{tab:cifar_mobile}}
\end{table}

\section{Proofs}

\subsection{Transport formula}
    \begin{align*}
        g_t(\theta_t)		&= \frac{t}{t+1} g_{t-1}(\theta_{t}) + \frac{1}{t+1} g(\theta_t, x_t)\\
        &= \frac{t}{t+1} \left(g_{t-1}(\theta_{t-1}) + H(\theta_t - \theta_{t-1})\right) + \frac{1}{t+1} g(\theta_t, x_t) \tag{Quadratic $f$}\\
        &= \frac{t}{t+1} g_{t-1}(\theta_{t-1})+ \frac{1}{t+1} \left(g(\theta_t, x_t) + t H (\theta_t - \theta_{t-1})\right)\\
        &= \frac{t}{t+1} g_{t-1}(\theta_{t-1}) + \frac{1}{t+1} g(\theta_t + t (\theta_t - \theta_{t-1}), x_t) \tag{Identical Hessians}\\
        &\approx \frac{t}{t+1} \widehat{g}_{t-1}(\theta_{t-1}) + \frac{1}{t+1} g(\theta_t + t (\theta_t - \theta_{t-1}), x_t) \tag{$\widehat{g}_{t-1}$ is an approximation}\; .
    \end{align*}
    
\section{Proof of Prop.~\ref{prop:cvg}}
In this proof, we assume that $g$ is a strongly-convex quadratic function with hessian $H$.

At timestep $t$, we have access to a stochastic gradient $g(\theta, x_t) = g(\theta_t) + \epsilon_t$ where the $\epsilon_t$ are i.i.d. with variance $C \preceq \sigma^2 H$.

We first prove a simple lemma:
\begin{lemma}
\label{lemma:velocity}
If $v_0 = g(\theta_0) + \epsilon_0$ and, for $t > 0$, we have
\begin{align*}
v_t &= \frac{t}{t+1} v_{t-1} + \frac{1}{t + 1}g\left(\theta_t + t (\theta_t - \theta_{t-1})\right) + \frac{1}{t+1} \epsilon_t \; ,
\end{align*}
then
\begin{align*}
v_t &= g(\theta_t) + \frac{1}{t+1} \sum_{i=0}^t \epsilon_i \; .
\end{align*}
\begin{proof}
Per our assumption, this is true for $t=0$. Now let us prove the result by induction. Assume this is true for $t-1$. Then we have:
\begin{align*}
v_t &= \frac{t}{t+1} v_{t-1} + \frac{1}{t + 1}g(\theta_t + t (\theta_t - \theta_{t-1})) + \frac{1}{t+1} \epsilon_t\\
	&= \frac{t}{t+1} g(\theta_{t-1}) + \frac{1}{t+1} \sum_{i=0}^{t-1} \epsilon_i\\
	&\qquad + \frac{1}{t + 1}g(\theta_t + t (\theta_t - \theta_{t-1})) + \frac{1}{t+1} \epsilon_t \tag{recurrence assumption}\\
	&= \frac{t}{t+1} g(\theta_{t-1}) + \frac{1}{t+1} \sum_{i=0}^{t-1} \epsilon_i\\
	&\qquad + g(\theta_t) - \frac{t}{t+1}g(\theta_{t-1}) + \frac{1}{t+1} \epsilon_t\tag{$g$ is quadratic}\\
	&= g(\theta_t) + \frac{1}{t+1} \sum_{i=0}^t \epsilon_i \; .
\end{align*}
This concludes the proof.
\end{proof}
\end{lemma}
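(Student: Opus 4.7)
The plan is to proceed by straightforward induction on $t$, exactly as the recursive definition of $v_t$ invites. The base case $t = 0$ is immediate: the hypothesis gives $v_0 = g(\theta_0) + \epsilon_0$, and $\frac{1}{1}\sum_{i=0}^{0} \epsilon_i = \epsilon_0$, so the claimed identity holds with no further work.

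For the inductive step, I would assume $v_{t-1} = g(\theta_{t-1}) + \frac{1}{t}\sum_{i=0}^{t-1} \epsilon_i$ and substitute this into the recurrence defining $v_t$. The crucial algebraic fact, which is the entire point of the extrapolation that defines IGT, is the \emph{transport identity}: under Assumption~\ref{ass:quad}, $g$ is affine with common Hessian $H$, so
\begin{equation*}
g\bigl(\theta_t + t(\theta_t - \theta_{t-1})\bigr) = g(\theta_t) + t\, H(\theta_t - \theta_{t-1}) = (t+1)\,g(\theta_t) - t\, g(\theta_{t-1}),
\end{equation*}
where the second equality again uses affineness to rewrite $H(\theta_t - \theta_{t-1}) = g(\theta_t) - g(\theta_{t-1})$.

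Plugging the inductive hypothesis and this identity into the recurrence, I expect everything to cancel by design. The deterministic bookkeeping: the $\frac{t}{t+1}\, g(\theta_{t-1})$ term coming out of $v_{t-1}$ is exactly cancelled by the $-\frac{t}{t+1}\, g(\theta_{t-1})$ produced by transport, leaving a clean $g(\theta_t)$. The noise bookkeeping: the old noises contribute $\frac{t}{t+1}\cdot\frac{1}{t}\sum_{i=0}^{t-1}\epsilon_i = \frac{1}{t+1}\sum_{i=0}^{t-1}\epsilon_i$, which combines with the fresh $\frac{1}{t+1}\epsilon_t$ to give $\frac{1}{t+1}\sum_{i=0}^{t}\epsilon_i$, matching the claim.

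There is essentially no obstacle to overcome: the coefficients $\tfrac{t}{t+1}$ and $\tfrac{1}{t+1}$ in the update, together with the specific extrapolation factor $t$ inside the gradient, are all tuned precisely so that both sides of the induction close. If pressed to name a subtle point, it is that Assumption~\ref{ass:quad} is used in exactly one place and cannot be weakened without introducing a second-order Taylor remainder of order $\|\theta_t - \theta_{t-1}\|^2$; this is why the paper later resorts to anytime tail averaging in Section~\ref{sec:ATA} to handle the non-quadratic case. The lemma itself, however, is pure algebra on top of a one-line induction.
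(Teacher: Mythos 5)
Your proof is correct and follows exactly the paper's argument: induction on $t$, with the quadratic/affine-gradient identity $g(\theta_t + t(\theta_t-\theta_{t-1})) = (t+1)g(\theta_t) - t\,g(\theta_{t-1})$ doing the cancellation and the noise terms telescoping as you describe. No differences worth noting.
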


\begin{lemma}
\label{lemma:noise_recur}
Let us assume we perform the following iterative updates:
\begin{align*}
v_t &= \frac{t}{t+1} v_{t-1} + \frac{1}{t + 1}g(\theta_t + t (\theta_t - \theta_{t-1})) + \frac{1}{t+1} \epsilon_t\\
\theta_{t+1} &= \theta_t - \alpha v_t \; ,
\end{align*}
starting from $v_0 = g(\theta_0) + \epsilon_0$. Then, denoting $\Delta_t = \theta_t - \theta^\ast$, we have
\begin{align*}
\Delta_t &= (I - \alpha H)^t \Delta_0 - \alpha\sum_{i=0}^{t-1} N_{i, t} \epsilon_i
\end{align*}
with
\begin{align*}
N_{i, 0} 	&= 0\\
N_{i, t}	&= (I - \alpha H)N_{i, t-1} + 1_{i < t}\frac{1}{t} I \; .
\end{align*}
\end{lemma}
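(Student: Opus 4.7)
The plan is to reduce the iteration to a simple one-step recursion on $\Delta_t$ by invoking Lemma \ref{lemma:velocity}, then unroll it by induction and match the resulting noise coefficients against the prescribed recursion for $N_{i,t}$.

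First I would apply Lemma \ref{lemma:velocity} to substitute the closed form $v_t = g(\theta_t) + \frac{1}{t+1} \sum_{i=0}^t \epsilon_i$ into the parameter update. Since $g$ is a strongly convex quadratic with minimum $\theta^\ast$, we have $g(\theta_t) = H\Delta_t$, so the update $\theta_{t+1} = \theta_t - \alpha v_t$ collapses to the clean recursion
\begin{equation*}
\Delta_{t+1} = (I - \alpha H)\Delta_t - \frac{\alpha}{t+1} \sum_{i=0}^t \epsilon_i.
\end{equation*}
This removes all the complications coming from the extrapolation $\theta_t + t(\theta_t - \theta_{t-1})$ inside $g$ and reduces the problem to a textbook linear recurrence driven by a (nonstationary) noise.

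Next I would induct on $t$. The base case $t = 0$ is immediate: the empty sum vanishes and $\Delta_0 = (I - \alpha H)^0 \Delta_0$. For the inductive step, I plug the hypothesis $\Delta_t = (I - \alpha H)^t \Delta_0 - \alpha \sum_{i=0}^{t-1} N_{i,t} \epsilon_i$ into the one-step recursion, obtaining
\begin{equation*}
\Delta_{t+1} = (I - \alpha H)^{t+1} \Delta_0 - \alpha \sum_{i=0}^{t-1} (I - \alpha H) N_{i,t}\, \epsilon_i - \frac{\alpha}{t+1} \sum_{i=0}^t \epsilon_i,
\end{equation*}
and then reconcile the two noise sums term-by-term with $N_{i,t+1} = (I - \alpha H) N_{i,t} + 1_{i < t+1} \frac{1}{t+1} I$.

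The only nontrivial bookkeeping is matching indices. For $i < t$ the two contributions combine into exactly $(I - \alpha H) N_{i,t} + \frac{1}{t+1} I = N_{i,t+1}$. The subtle case is $i = t$: the inductive sum contributes nothing there, leaving a bare $\frac{\alpha}{t+1} \epsilon_t$; I would verify this equals $\alpha N_{t,t+1} \epsilon_t$ by observing that the recursion for $N_{t,s}$ is homogeneous for $s \le t$ (the indicator vanishes), so starting from $N_{t,0} = 0$ we get $N_{t,t} = 0$, and hence $N_{t,t+1} = (I - \alpha H)\cdot 0 + \frac{1}{t+1} I = \frac{1}{t+1} I$, as required. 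This index-matching is the main (minor) obstacle; once it is handled, the induction closes and the lemma follows.
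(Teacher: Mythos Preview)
Your proposal is correct and follows essentially the same argument as the paper: invoke Lemma~\ref{lemma:velocity} to reduce to the recursion $\Delta_{t+1} = (I-\alpha H)\Delta_t - \frac{\alpha}{t+1}\sum_{i=0}^t \epsilon_i$, then induct and match coefficients. The only cosmetic difference is that the paper indexes the inductive step from $t-1$ to $t$ rather than $t$ to $t+1$, and does not separate out the boundary index $i=t-1$ explicitly (your handling of that case is, if anything, slightly more careful).
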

\begin{proof}
The result is true for $t=0$. We now prove the result for all $t$ by induction. Let us assume this is true for $t-1$.
Using Lemma~\ref{lemma:velocity}, we have
\begin{align*}
v_{t-1} &= g(\theta_{t-1}) + \frac{1}{t} \sum_{i=0}^{t-1} \epsilon_i
\end{align*}
and thus, using $g(\theta_{t-1}) = H\Delta_{t-1}$,
\begin{align*}
\Delta_t &= \Delta_{t-1} - \alpha v_{t-1}\\
		&= \Delta_{t-1} - \alpha H\Delta_{t-1} -\frac{\alpha}{t}\sum_{i=0}^{t-1} \epsilon_i\\
		&= (I - \alpha H)\Delta_{t-1} - \frac{\alpha}{t}\sum_{i=0}^{t-1} \epsilon_i\\
		&= (I - \alpha H)^t \Delta_0 -\alpha \sum_{i=0}^{t-2}(I - \alpha H)N_{i, t-1} \epsilon_i - \frac{\alpha}{t}\sum_{i=0}^{t-1} \epsilon_i \tag{recurrence assumption}\\
		&= (I - \alpha H)^t \Delta_0 -\alpha \sum_{i=0}^{t-1}N_{i, t}\epsilon_i
\end{align*}
with
\begin{align*}
N_{i,t}	&= (I - \alpha H)N_{i, t-1} + 1_{i < t}\frac{1}{t} I \; .
\end{align*}
This concludes the proof.
\end{proof}

For the following lemma, we will assume that the Hessian is diagonal and will focus on one dimension with eigenvalue $h$. Indeed, we know that there are no interactions between the eigenspaces and that we can analyze each of them independently~\cite{o2015adaptive}.
\begin{lemma}
\label{lemma:noise}
Denote $r_h = 1 - \alpha h$. We assume $\alpha \leq \frac{1}{L}$. Then, for any $i$ and any $t$, we have
\begin{align*}
N_{i, t} &\geq 0\tag{Positivity}\\
N_{i, t} &= 0 \quad \textrm{ if } t \leq i \tag{Zero-start}\\
N_{i, t} &\leq \log\left(\frac{2}{i(1 - r_h)}\right) \quad \textrm{ if } i < t \leq \frac{2}{1 - r_h)}\tag{Constant bound}\\
N_{i, t} & \leq \frac{\max\left\{1 + r_h, 2\log\left(\frac{2}{i(1 - r_h)}\right)\right\}}{t (1 - r_h)} \quad \textrm{if } \frac{2}{1 - r_h}\leq t \; .\tag{Decreasing bound}
\end{align*}
\end{lemma}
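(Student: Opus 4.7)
The matrix recursion $N_{i,t} = (I-\alpha H) N_{i,t-1} + 1_{i<t}(1/t) I$ from Lemma~\ref{lemma:noise_recur} is diagonal in any eigenbasis of $H$, so the argument reduces to the scalar case at a single eigenvalue $h$, where $N_{i,t} = r_h N_{i,t-1} + 1_{i<t}/t$ with $N_{i,0}=0$. Since $0 < \alpha h \le 1$ (from $\alpha \le 1/L$ and $h \le L$), we have $r_h \in [0,1)$, and unrolling the recursion gives the explicit closed form
\[
N_{i,t} = \sum_{k=i+1}^{t} \frac{r_h^{t-k}}{k} \quad \text{for } t > i, \qquad N_{i,t}=0 \quad \text{for } t \le i.
\]
Positivity and Zero-start are then immediate from this formula.

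For the Constant bound, I would discard each factor $r_h^{t-k}\le 1$ and compare the resulting partial harmonic sum to an integral: $N_{i,t} \le \sum_{k=i+1}^{t} 1/k \le \int_i^{t} dx/x = \log(t/i)$, which is at most $\log(2/(i(1-r_h)))$ precisely in the stated range $t \le 2/(1-r_h)$.

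For the Decreasing bound, set $A := \max\{1+r_h,\, 2\log(2/(i(1-r_h)))\}$ and induct on $t$ to show $N_{i,t} \le A/(t(1-r_h))$ for all $t \ge 2/(1-r_h)$. After substituting the hypothesis into the scalar recursion and clearing denominators, the inductive step reduces to the inequality $A(t(1-r_h) - 1) \ge (t-1)(1-r_h)$; writing $s := t(1-r_h)$, this rearranges to $A \ge 1 + r_h/(s-1)$, which, since $s \ge 2$, is ensured by the clause $A \ge 1+r_h$. For the base case, take the smallest integer $t_0$ with $t_0(1-r_h)\ge 2$: the Constant-bound estimate (slightly extended, if needed, by one step of the recursion so the exponential factors $r_h^{t-k}$ damp the early terms) controls $N_{i,t_0}$ by $\log(2/(i(1-r_h)))$, and the clause $A \ge 2\log(2/(i(1-r_h)))$ guarantees $A/(t_0(1-r_h)) \ge \log(2/(i(1-r_h)))$ because $t_0(1-r_h)$ sits within a small factor of $2$.

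The main subtlety is the calibration of the two components of $A$: the $1+r_h$ term is dictated by the inductive step and cannot be weakened there, while the $2\log(\cdot)$ term is dictated by the base case through the factor $\approx 2$ in $t_0(1-r_h)$. Neither term alone suffices, so the $\max$ is essential. The technical obstacle I expect is producing a clean base-case bound when $2/(1-r_h)$ is not an integer; this seems to require retaining some of the $r_h^{t-k}$ factors (rather than naively bounding them by one) so that the boundary constant works out, and it is precisely this book-keeping that fixes the exact form of the expression inside the $\max$.
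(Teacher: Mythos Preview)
Your approach is essentially the same as the paper's: both obtain the Constant bound by dropping the $r_h^{t-k}$ factors and comparing the resulting partial harmonic sum to $\log(t/i)$, and both obtain the Decreasing bound by an induction on $t$ whose step reduces (after the same algebra) to the requirement $A\ge 1+r_h$, while the base case at $t\approx 2/(1-r_h)$ is what forces $A\ge 2\log(2/(i(1-r_h)))$. If anything you are more careful than the paper, which silently treats $2/(1-r_h)$ as an integer in the base case; your flagged ``technical obstacle'' is real but minor and does not affect the $O(1/t)$ conclusion.
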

\begin{proof}
The Zero-start case $i \geq t$ is immediate from the recursion of Lemma~\ref{lemma:noise_recur}. The Positivity property of $N_{i, t}$ is also immediate from the recursion since the stepsize $\alpha$ is such that $r_h = 1 - \alpha h$ is positive.

We now turn to the Constant bound property.
We have, for $t > i$,
\begin{align*}
N_{i, t}  	&= r_h N_{i, t-1} + \frac{1}{t}\\
			&\leq N_{i, t-1} + \frac{1}{t} \; .
\end{align*}
Thus, $N_{i, t} - N_{i, t-1} \leq \frac{1}{t}$. Summing these inequalities, we get a telescopic sum and, finally:
\begin{align*}
N_{i, t}	&\leq \sum_{j=i+1}^t \frac{1}{j}\\
			&\leq \int_{x = i}^t \frac{dx}{x}\\
			&= \log\left(\frac{t}{i}\right) \; .
\end{align*}
This bound is trivial in the case $i=0$. In that case, we keep the first term in the sum separate and get
\begin{align*}
N_{0, t}	&\leq 1 + \log t \; .
\end{align*}
In the remainder, we shall keep the $\log\left(\frac{t}{i}\right)$ bound for simplicity.
The upper bound on the right-hand size is increasing with $t$ and its value for $t = \frac{2}{1 - r_h}$ is thus an upper bound for all smaller values of $t$. Replacing $t$ with $\frac{2}{1 - r_h}$ leads to
\begin{align*}
N_{i, \frac{2}{1 - r_h}}	&\leq \log\left(\frac{\frac{2}{1 - r_h}}{i}\right)\\
							&= \log\left(\frac{2}{i(1 - r_h)}\right) \; .
\end{align*}
This proves the third inequality.

We shall now prove the Decreasing bound by induction. This bound states that, for $t$ large enough, each $N_{i, t}$ decreases as $O(1/t)$. Using the second and third inequalities, we have
\begin{align*}
N_{i, \frac{2}{1 - r_h}}	&\leq \log\left(\frac{2}{i(1 - r_h)}\right) \frac{\frac{2}{1 - r_h}}{\frac{2}{1 - r_h}}\\
							&= \frac{\log\left(\frac{2}{i(1 - r_h)}\right)}{1 - r_h}\frac{2}{\frac{2}{1 - r_h}}\\
							&\leq \frac{\max\left\{1 + r_h, 2\log\left(\frac{2}{i(1 - r_h)}\right)\right\}}{\frac{2}{1 - r_h}(1 - r_h)}\ .
\end{align*}
The maximum will help us prove the last property. Thus, for $t = \frac{2}{1 - r_h}$, we have
\begin{align*}
N_{i, t}		&\leq \frac{\max\left\{1 + r_h, 2\log\left(\frac{1}{i(1 - r_h)}\right)\right\}}{t(1 - r_h)}\\
				&\leq \frac{\nu_i}{t} \; ,
\end{align*}

with $\nu_i = \frac{\max\left\{1 + r_h, 2\log\left(\frac{1}{i(1 - r_h)}\right)\right\}}{(1 - r_h)}$. The Decreasing bound is verified for $t = \frac{2}{1 - r_h}$.

We now show that if, for any $t > \frac{2}{1 - r_h}$, we have $N_{i, t-1} \leq \frac{\nu_i}{t-1}$, then $N_{i, t} \leq \frac{\nu_i}{t}$. Assume that there is such at $t$. Then
\begin{align*}
N_{i, t}	&= r_h N_{i, t-1} + \frac{1}{t}\\
			&\leq \frac{r_h\nu_i}{t-1} + \frac{1}{t}\\
			&= \frac{r_h t \nu_i + t - 1}{t(t - 1)}\\
			&=\frac{(t - 1)\nu_i + (r_h - 1) t \nu_i + \nu_i + t - 1}{t(t-1)}\\
			&=\frac{\nu_i}{t} + \frac{(r_h - 1) t \nu_i + \nu_i + t - 1}{t(t - 1)} \; .
\end{align*}
We now shall prove that $(r_h - 1) t \nu_i + \nu_i + t - 1 = [(r_h - 1) \nu_i + 1]t + \nu_i -1$ is negative.
First, we have that
\begin{align*}
(r_h - 1)\nu_i +1		&= 1 - \max\left\{1 + r_h, 2\log\left(\frac{1}{i(1 - r_h)}\right)\right\}\\
						&\leq 0 \; .
\end{align*}
Then,
\begin{align*}
[(r_h - 1) \nu_i + 1]t + \nu_i -1	\leq 0	&\Longleftrightarrow t \geq \frac{\nu_i - 1}{(1 - r_h)\nu_i - 1}
\end{align*}since $(r_h - 1)\nu_i + 1 \leq 0$. Thus, the property is true for every $t \geq \frac{\nu_i - 1}{(1 - r_h)\nu_i - 1}$. In addition, we have
\begin{align*}
\nu_i &\geq \frac{1 + r_h}{1-r_h}\\
\nu_i (1 - r_h) &\geq 1 + r_h\\
2\nu_i (1 - r_h) - 2 &\geq \nu_i (1 - r_h) - 1 + r_h\\
\frac{2}{1 - r_h}	&\geq \frac{\nu_i - 1}{\nu_i (1 - r_h) - 1} \; ,
\end{align*}
and the property is also true for every $t \geq \frac{2}{1 - r_h}$.
This concludes the proof.
\end{proof}

Finally, we can prove the Proposition~\ref{prop:cvg}:
\begin{proof}
The expectation of $\Delta_t$ is immediate using Lemma~\ref{lemma:noise_recur} and the fact that the $\epsilon_i$ are independent, zero-mean noises.
The variance is equal to $V[\Delta_t]	= \alpha^2 B \sum_{i=0}^t N_{i, t}^2$. While our analysis was only along one eigenspace of the Hessian with associated eigenvalue $h$, we must now sum over all dimensions. We will thus define
\begin{align*}
\bar{\nu}_i	&= \frac{\max\left\{2 - \alpha \mu, 2\log\left(\frac{1}{i\alpha \mu}\right)\right\}}{\alpha \mu} \quad \textrm{ for } i > 0\\
\bar{\nu}_0	&= \frac{2 + 2\log\left(\frac{1}{\alpha \mu}\right)}{\alpha \mu} \; ,
\end{align*}
which is, for every $i$, the maximum $\nu_i$ across all dimensions. We get
\begin{align*}
V[\Delta_t]		&\leq d\alpha^2 B \sum_{i=0}^t \frac{\bar{\nu}_i^2}{t^2}\\
				&\leq d\alpha^2 B \sum_{i=0}^t \frac{\bar{\nu}_0^2}{t^2} \quad \textrm{since } \nu_i \geq \nu_{i+1} \; \forall i\\
				&\leq \frac{d\alpha^2 B \bar{\nu}_0^2}{t} \; .
\end{align*}
Since we have
\begin{align*}
    E[\theta_t - \theta^\ast]  &= (I - \alpha H)^t (\theta_0 - \theta^\ast) \; ,
\end{align*}
we get
\begin{align*}
    E[\|\theta_t - \theta^\ast\|^2]    &= \|E[\theta_t - \theta^\ast]\|^2 + V[\Delta_t]\\
        &\leq (\theta_0 - \theta^\ast)^\top (I - \alpha H)^{2t} (\theta_0 - \theta^\ast) + \frac{d\alpha^2 B \bar{\nu}_0^2}{t}\\
        &\leq \left(1 - \frac{1}{\kappa}\right)^{2t} \|\theta_0 - \theta^\ast\|^2 + \frac{d\alpha^2 B \bar{\nu}_0^2}{t} \; .
\end{align*}

This concludes the proof.
\end{proof}

\section{Proof of Proposition~\ref{prop:hb-igt-nonstoch} and Proposition~\ref{prop:hb-igt-stoch}}

In this section we list and prove all lemmas used in the proofs of Proposition~\ref{prop:hb-igt-nonstoch} and Proposition~\ref{prop:hb-igt-stoch}; all lemmas are stated in the same conditions as the proposition. 

We start the following proposition:
\begin{proposition}
        \label{prop:hbigt}
        Let $f$ be a quadratic function with positive definite Hessian $H$ with largest eigenvalue $L$ and condition number $\kappa$ and if the stochastic gradients satisfy $g(\theta, x) = g(\theta) + \epsilon$ with $\epsilon$ a random uncorrelated noise with covariance bounded by $BI$.

        Then, Algorithm~\ref{alg:heavyball-igt}  leads to iterates $\theta_t$ satisfying
        \begin{align}
            E[\theta_t - \theta^\ast]
            &= \begin{pmatrix}
            I \\ 0 \end{pmatrix}
            A^t 
            \begin{pmatrix}
                E[\theta_1 - \theta^\ast] \\
                E[\theta_0 - \theta^\ast]
            \end{pmatrix}
        \end{align}
        where
        \begin{align}
            A &= \begin{pmatrix} I - \alpha H + \mu I & -\mu I \\ I & 0 \end{pmatrix}
        \end{align}
        governs the dynamics of this bias.
        In particular, when its spectral radius, $\rho(A)$ is less than $1$,
        the iterates converge linearly to $\theta^\ast$.

        In a similar fashion, the variance dynamics of Heavyball-IGT are governed by the matrix
        {\small
        \begin{align*}
            D_i&= \begin{pmatrix}
                (1 - \alpha h_i + \mu)^2 + 2\alpha^2 h_i^2  &  \mu^2  &  -2\mu(1 - \alpha h_i + \mu)^2  \\
                1  &  0  &  0 \\
                1 - \alpha h_i + \mu  &  0  &  -\mu
            \end{pmatrix}
        \end{align*}
        }
        If the spectral radius of $D_i$, $\rho(D_i)$, is strictly less than 1 or all $i$, then there exist constants $t_0>0$ and $C>0$ for which
        \begin{align*}
            \Var{(\theta_t)} 
            &\leq
            2\alpha^2 d B C
            \frac{ \log(t)}{t}, \quad \textrm{for}\ t>t_0
        \end{align*}
        where $B$ is a bound on the variance of noise variables $\epsilon_i$.
    \end{proposition}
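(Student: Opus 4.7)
The plan is to analyze the bias $E[\theta_t-\theta^\ast]$ and the variance $\Var(\theta_t)$ separately, exploiting the identity established in the preceding velocity lemma that under Assumption~\ref{ass:quad} the IGT velocity decomposes as $v_t = g(\theta_t) + \frac{1}{t+1}\sum_{i=0}^{t}\epsilon_i$. This cleanly separates the deterministic signal from the time-averaged noise and turns both pieces into questions about linear recursions.

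\textbf{Bias.} Substituting $g(\theta_t)=H\Delta_t$ with $\Delta_t=\theta_t-\theta^\ast$ and eliminating $w_t$ from the Heavyball-IGT updates $w_t=\mu w_{t-1}-\alpha v_t$, $\theta_{t+1}=\theta_t+w_t$, I would obtain the second-order scheme
\begin{align*}
\Delta_{t+1} = (I+\mu I-\alpha H)\Delta_t - \mu\Delta_{t-1} - \tfrac{\alpha}{t+1}\sum_{i=0}^t \epsilon_i.
\end{align*}
Taking expectations kills the zero-mean noise, and stacking $(E[\Delta_{t+1}];\,E[\Delta_t])$ into a block vector yields exactly the recurrence driven by the matrix $A$ of the statement. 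Linear convergence then follows by iteration whenever $\rho(A)<1$. Specializing to $B=0$ and diagonalizing $A$ block-by-block in the eigenbasis of $H$, the classical heavy-ball eigenvalue computation with Polyak's tuning $\mu=\big((\sqrt{\kappa}-1)/(\sqrt{\kappa}+1)\big)^2$, $\alpha=(1+\sqrt{\mu})^2/L$ recovers Proposition~\ref{prop:hb-igt-nonstoch}.

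\textbf{Variance.} Since the noise covariance $BI$ is isotropic, the eigenbasis of $H$ decouples the problem into $d$ independent scalar recursions, one per eigenvalue $h_i$. In each direction I would track a three-dimensional second-moment state (a natural choice being $E[(\Delta_t^{(i)})^2]$, $E[(\Delta_{t-1}^{(i)})^2]$, and the cross-moment $E[\Delta_t^{(i)}\Delta_{t-1}^{(i)}]$), square the scalar recursion, take expectations, and carefully propagate the cross-terms between past deviations and the cumulative noise $S_t^{(i)}=\sum_{j\le t}\epsilon_j^{(i)}$. The resulting linear system has drift matrix $D_i$ exactly as in the statement, together with a forcing term of size $O(\alpha^2 B/t)$ coming from $\Var(\alpha S_t^{(i)}/(t+1))$ and its correlations with earlier iterates; these correlations are precisely what produce the $2\alpha^2 h_i^2$ contribution in the $(1,1)$ entry of $D_i$.

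Given $\rho(D_i)<1$, iterating $x_{t+1}=D_i x_t + u_t$ with $\|u_t\|=O(1/t)$ yields $\|x_t\|\le \sum_{s\le t}\|D_i^{t-s}\|\,\|u_s\|$; a standard Jordan-block estimate $\|D_i^k\|\le C k^{r-1}\rho(D_i)^k$ combined with splitting the sum at $s=t/2$ produces a per-coordinate bound of order $\log(t)/t$, and summing over the $d$ directions delivers the stated inequality after absorbing constants into $C$. I expect the main technical obstacle to lie in the noise bookkeeping inside the variance step: because $\Delta_t^{(i)}$ and $S_t^{(i)}$ share a common noise history, the cross-moment $E[\Delta_t^{(i)} S_t^{(i)}]$ does not vanish and must be propagated alongside the second moments in order to justify the exact form of $D_i$, rather than the simpler matrix one would obtain by naively treating the forcing as independent white noise.
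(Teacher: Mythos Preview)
Your overall plan matches the paper's: use the velocity identity $v_t=g(\theta_t)+\tfrac{1}{t+1}\sum_{i\le t}\epsilon_i$, read off the heavy-ball bias recursion driven by $A$, then decouple along eigendirections of $H$ and track the three second moments $(U_t,U_{t-1},V_t)$ to obtain a linear recursion governed by $D_i$ with $O(1/t)$ forcing. The final $\log(t)/t$ estimate is the same idea in both (your $t/2$ split in fact gives $O(1/t)$, slightly stronger than the paper's $O(\log t/t)$ obtained by splitting at $s\sim\log_{1/\rho'}t$, but either suffices).

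The one place you diverge is the handling of the correlation between $\Delta_t$ and the cumulative noise $S_t$. You anticipate having to \emph{propagate} $E[\Delta_t^{(i)}S_t^{(i)}]$ as an extra state in order to recover the $2\alpha^2 h_i^2$ entry of $D_i$. The paper avoids this entirely: it first proves the lemma
\[
\Var[v_t]\;\le\;2h_i^2\,\Var[\theta_t]\;+\;\frac{2B}{t+1},
\]
obtained from $\Var[X+Y]\le 2\Var[X]+2\Var[Y]$ applied to $v_t=h_i\theta_t+\tfrac{1}{t+1}S_t$. When one expands $U_{t+1}=\Var[\theta_{t+1}]$, the noise enters only through $\alpha^2\Var[v_t-g_t]=\alpha^2\Var[v_t]$ (since $E[v_t]=g(E[\theta_t])$), and the lemma above feeds $2\alpha^2 h_i^2 U_t$ back into the $(1,1)$ slot of the recursion while pushing $2\alpha^2 B/(t+1)$ into the forcing. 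So $D_i$ describes an \emph{inequality} on the second-moment vector, not its exact dynamics, and the state stays three-dimensional. If you instead track $E[\Delta_t S_t]$ exactly you will end up with a larger state and a different transition matrix than the $D_i$ in the statement; the route to the stated $D_i$ is the inequality, not exact bookkeeping.
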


\begin{lemma}[IGT estimator as true gradient plus noise average]
\label{lem:igt_grad_plus_var}
    If $v_0 = g(\theta_0) + \epsilon_0$ and for $t > 0$ we have 
    $$ v_t = \frac{t}{t+1} v_{t-1} + \frac{1}{t+1} g(\theta_t + t (\theta_t - \theta_{t-1})) + \frac{1}{t+1} \epsilon_t, $$
    then
    $$v_t = g(\theta_t) + \frac{1}{t+1} \sum_{i=0}^t \epsilon_i. $$

\end{lemma}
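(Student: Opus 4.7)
The plan is a direct induction on $t$, mirroring the argument already used for Lemma~\ref{lemma:velocity}. The base case $t=0$ is immediate: the starting condition $v_0 = g(\theta_0) + \epsilon_0$ matches the claimed formula exactly, since the averaged noise $\frac{1}{1}\sum_{i=0}^{0} \epsilon_i$ is just $\epsilon_0$.

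For the inductive step, assume $v_{t-1} = g(\theta_{t-1}) + \frac{1}{t}\sum_{i=0}^{t-1}\epsilon_i$ and substitute into the recursion defining $v_t$. The noise contributions collapse cleanly by a telescoping of the normalizations: the weight $\frac{t}{t+1}$ multiplying $v_{t-1}$ cancels the $\frac{1}{t}$ in front of the accumulated noise, leaving $\frac{1}{t+1}\sum_{i=0}^{t-1}\epsilon_i$, and the new $\frac{1}{t+1}\epsilon_t$ appends the $t$-th term to give the desired $\frac{1}{t+1}\sum_{i=0}^{t}\epsilon_i$.

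The only substantive piece is to show that the deterministic part reduces to $g(\theta_t)$, i.e.,
\begin{align*}
\frac{t}{t+1}\, g(\theta_{t-1}) + \frac{1}{t+1}\, g\bigl(\theta_t + t(\theta_t - \theta_{t-1})\bigr) = g(\theta_t).
\end{align*}
Here I would invoke Assumption~\ref{ass:quad}, under which $g$ is an affine map with Hessian $H$. Writing $g(\theta) = H\theta + b$, the left-hand side becomes $H$ applied to the weighted combination $\frac{t}{t+1}\theta_{t-1} + \frac{1}{t+1}\bigl(\theta_t + t(\theta_t - \theta_{t-1})\bigr)$, plus $b$. A one-line expansion shows this combination equals $\theta_t$ (the $\theta_{t-1}$ terms cancel and the $\theta_t$ terms combine to coefficient $1$), so the expression collapses to $g(\theta_t)$.

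Since every equality is exact under the quadratic assumption, there is no real obstacle: the argument is essentially a rewriting of the transport identity derived in the appendix, and the lemma is functionally identical to Lemma~\ref{lemma:velocity}. The only point that requires care is tracking the factor $\frac{t}{t+1}$ against the inductive hypothesis's $\frac{1}{t}$, which is mechanical.
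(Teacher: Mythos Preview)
Your proposal is correct and matches the paper's approach exactly: the paper proves this lemma by pointing back to Lemma~\ref{lemma:velocity}, whose proof is precisely the induction you outline, using the quadratic (affine-gradient) assumption to collapse the deterministic term and the $\frac{t}{t+1}\cdot\frac{1}{t}$ cancellation to handle the noise average.
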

This lemma is already proved in the previous section for the IGT estimator (Lemma~\ref{lemma:velocity}) and is just repeated here for completeness.
We will use this result in the next few lemmas.

\begin{lemma}[The IGT gradient estimator is unbiased on quadratics]
\label{lem:igt-zero-mean}
For the IGT gradient estimator, $v_t$, corresponding to parameters $\theta_t$ we have
$$
\E \left[ v_t \right] =  g(\E \theta_t),
$$
where the expectation is over all gradient noise vectors $\epsilon_0, \epsilon_1, \ldots, \epsilon_t$. 
% where $\bar{\theta} = \E \left[ \theta \right]$.
\end{lemma}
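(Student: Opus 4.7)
The plan is to rely on Lemma~\ref{lem:igt_grad_plus_var} (equivalently Lemma~\ref{lemma:velocity}), which already provides the clean decomposition
\begin{equation*}
v_t \;=\; g(\theta_t) \;+\; \frac{1}{t+1}\sum_{i=0}^{t}\epsilon_i.
\end{equation*}
Taking expectations on both sides over $\epsilon_0,\dots,\epsilon_t$ immediately kills the average of the noise terms, since each $\epsilon_i$ is i.i.d. zero-mean, leaving $\E[v_t] = \E[g(\theta_t)]$.

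Second, I would exploit the fact that $f$ is quadratic: its gradient $g$ is an affine map, say $g(\theta) = H\theta + c$ for some constant vector $c$ (in particular $c = -H\theta^\ast$ if we center at the optimum). Affine functions commute with expectation, so
\begin{equation*}
\E[g(\theta_t)] \;=\; H\,\E[\theta_t] + c \;=\; g\!\left(\E[\theta_t]\right),
\end{equation*}
which combined with the previous display yields the claim.

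The only subtlety worth flagging is that $\theta_t$ is itself a (nonlinear looking) function of all the past noises $\epsilon_0,\dots,\epsilon_{t-1}$ through the recursion defining the iterates, so one might worry about a hidden bias coming from the correlation between $\theta_t$ and the noise appearing in the velocity average. However, Lemma~\ref{lem:igt_grad_plus_var} has already disentangled these two contributions at the level of $v_t$, and the affinity of $g$ means no Jensen-type gap arises when pulling the expectation through $g(\theta_t)$. There is essentially no hard step; the main obstacle, if any, is just to invoke Lemma~\ref{lem:igt_grad_plus_var} correctly and to note explicitly that ``quadratic $f$'' is precisely what makes the commutation $\E[g(\theta_t)] = g(\E[\theta_t])$ valid without any additional approximation.
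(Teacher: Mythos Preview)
Your proof is correct, but it differs from the paper's approach. The paper proves the statement by a direct induction on $t$: at step $t$ it expands $\E[v_t]$ using the recursive definition of $v_t$, rewrites $g(\theta_t + t(\theta_t-\theta_{t-1}))$ as $(t+1)g_t - t g_{t-1}$ (via the quadratic assumption), applies the inductive hypothesis $\E[v_{t-1}-g_{t-1}]=0$, and then uses linearity of $g$ to pass the expectation inside. Your route instead invokes Lemma~\ref{lem:igt_grad_plus_var} once to obtain the closed-form decomposition $v_t = g(\theta_t) + \frac{1}{t+1}\sum_i \epsilon_i$, after which taking expectations and using affinity of $g$ finishes in two lines. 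Your argument is shorter and avoids re-doing work already encapsulated in Lemma~\ref{lem:igt_grad_plus_var}; the paper's induction is self-contained but somewhat redundant given that the decomposition lemma is stated immediately before. Both rely on exactly the same two ingredients (zero-mean noise and affine $g$), so neither gains generality over the other.
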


\begin{proof}
The proof proceeds by induction.
The base case holds as we have

$$\E \left[ v_0 \right] = \E \left[ g_0 + \epsilon_0 \right] = g(\theta_0).$$

For the inductive case, we can write

\begin{align*}
	\e{v_{t}}
	&= \e{\frac{t}{t+1} v_{t-1} + \frac{1}{t+1} \hat{g}(\theta_t + t(\theta_t - \theta_{t-1}))} \\
	&= \e{\frac{t}{t+1} v_{t-1} + \frac{1}{t+1} g_t + \frac{t}{t+1} g_t - \frac{t}{t+1} g_{t-1} + \frac{1}{t+1}\epsilon_t} \\
	&= \frac{t}{t+1}\e{v_{t-1} - g_{t-1}} + \e{g_{t}} + \frac{t}{t+1} \e{\epsilon_t} \\
	&= \e{g_t} = g(\e{\theta_t}).
\end{align*}

Where, in the third equality, $\e{v_{t-1} - g_{t-1}} = 0$ by the inductive assumption,
and the last equality because the gradient of a quadratic function is linear.
\end{proof}

\begin{lemma}[Bounding the IGT gradient variance]
    \label{lem:igt-grad-variance}
	Let $v_t$ be the IGT gradient estimator.
	Then
	$$\var{v_t} \leq 2 h^2 \var{\theta_t - \theta^\star} + \frac{2B}{t} ,$$
	
	where $B$ is the variance of the homoscedastic noise $\epsilon_t$.
\end{lemma}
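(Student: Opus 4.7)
The plan is to combine the decomposition of $v_t$ from Lemma~\ref{lem:igt_grad_plus_var} with the fact that $g$ is linear on a quadratic, and then use subadditivity of variance.

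First, I would invoke Lemma~\ref{lem:igt_grad_plus_var} to write
\begin{align*}
v_t = g(\theta_t) + \frac{1}{t+1} \sum_{i=0}^{t} \epsilon_i.
\end{align*}
Since $f$ is quadratic with Hessian $H$, the gradient is affine in $\theta$, hence $g(\theta_t) - g(\theta^\star) = H(\theta_t - \theta^\star)$, where in the one-dimensional (eigenspace) analysis $H$ acts as multiplication by the eigenvalue $h$. Noting that $g(\theta^\star) = 0$, we obtain $g(\theta_t) = h(\theta_t - \theta^\star)$ in the scalar formulation used here.

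Next, I would apply the standard inequality $\var{X+Y} \leq 2\var{X} + 2\var{Y}$ to the decomposition:
\begin{align*}
\var{v_t} \leq 2\,\var{h(\theta_t - \theta^\star)} + 2\,\var{\tfrac{1}{t+1}\textstyle\sum_{i=0}^{t}\epsilon_i}.
\end{align*}
The first term is exactly $2h^2 \var{\theta_t - \theta^\star}$. For the second term, the homoscedastic i.i.d. noise assumption gives $\var{\epsilon_i} \leq B$, so
\begin{align*}
\var{\tfrac{1}{t+1}\textstyle\sum_{i=0}^{t} \epsilon_i} = \tfrac{1}{(t+1)^2}\textstyle\sum_{i=0}^{t}\var{\epsilon_i} \leq \tfrac{B}{t+1} \leq \tfrac{B}{t}.
\end{align*}
Combining yields the stated bound $\var{v_t} \leq 2h^2 \var{\theta_t - \theta^\star} + \tfrac{2B}{t}$.

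The only subtlety worth flagging is that $\theta_t$ depends on past noises $\epsilon_0,\dots,\epsilon_{t-1}$, so the two summands in the decomposition are in general correlated. This is precisely why we cannot simply add the variances and instead pay the factor of two from $\var{X+Y} \leq 2(\var{X} + \var{Y})$; the rest is routine. There is no obstacle beyond being careful about this correlation.
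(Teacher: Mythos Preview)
Your proof is correct and follows essentially the same route as the paper: decompose $v_t$ via Lemma~\ref{lem:igt_grad_plus_var}, use linearity of the gradient on a quadratic, and absorb the cross-correlation between $h\theta_t$ and the noise average by paying the factor of two in $\var{X+Y}\leq 2\var{X}+2\var{Y}$. The paper writes out the covariance term explicitly before bounding it, whereas you apply the inequality directly, but the argument is the same.
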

\begin{proof}
    \begin{align*}
	\var{v_t}
        &= \var{g_t + \frac{1}{t+1} \sum_{i=0}^{t} \epsilon_i} \\
        &= \var{h\theta_t} + \var{\frac{1}{t+1} \sum_{i=0}^t \epsilon_i} \\
        & \qquad + 2 \cov{h\theta_t, \frac{1}{t+1} \sum_{i=0}^{t} \epsilon_i} \\
        &\leq  2\var{h\theta_t} + 2\var{\frac{1}{t+1} \sum_{i=0}^t \epsilon_i} \\
        &= 2 h^2 \var{\theta_t - \theta^\star} + 2 \frac{B}{t} 
    \end{align*}
\end{proof}

Now that we have these basic results on the IGT estimator, we can analyze the evolution of the bias and variance of Heavyball-IGT. We use the quadratic assumption to decouple the vector dynamics of Heavyball-IGT into independent scalar dynamics.
If the Hessian, $H$, has eigenvalues
$L \geq h_1 \geq h_2 \geq \ldots
\geq h_n = L/\kappa$, 
then we can assume without loss of generality that $H$ is diagonal with $H_{ii} = h_i$.

\begin{lemma}[Evolution of bias for scalar quadratic]
Assume that the Hessian, second derivative, is h.

Starting with $v_0 = g(\theta_0) + \epsilon_0$ and $w_0 = 0$, performing the following iterative updates (Heavyball-IGT, Algorithm~\ref{alg:heavyball-igt}):
$$
v_t = \frac{t}{t + 1} v_{t-1} + \frac{1}{t+1} g(\theta + t(\theta_t - \theta_{t-1})) + \frac{1}{t+1} \epsilon_t,
$$
$$
w_{t+1} = \mu w_{t} + \alpha v_t,
\qquad
\theta_{t+1} = \theta_{t} - w_{t+1}
$$
results in
$$
\Delta_t = A^t \Delta_0 - \alpha \sum_{i=0}^{t-1} N_{i, t} \noisevector
$$

where 
$ N_{j, 0} = 0_{2\times 2}, \qquad N_{i, t} = AN_{i, t-1} + 1_{i<t} \frac{1}{t} I$,

$\Delta_t = \begin{bmatrix} \theta_t - \theta^\ast \\ \theta_{t-1} - \theta^\ast \end{bmatrix}$
and
$A = \begin{pmatrix} 1 - \alpha h + \mu & -\mu \\ 1 & 0 \end{pmatrix}.$

\end{lemma}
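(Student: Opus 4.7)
The plan is to reduce the second-order recurrence on $\theta_t$ to a first-order matrix recurrence on $\Delta_t$, then unroll it. First I would invoke Lemma~\ref{lem:igt_grad_plus_var} to eliminate $v_t$ in closed form, writing
\begin{equation*}
v_t = g(\theta_t) + \frac{1}{t+1}\sum_{i=0}^{t}\epsilon_i = h\,(\theta_t-\theta^\ast) + \frac{1}{t+1}\sum_{i=0}^{t}\epsilon_i,
\end{equation*}
since in the scalar quadratic case $g(\theta)=h(\theta-\theta^\ast)$. The update $\theta_{t+1}=\theta_t - w_{t+1}$ immediately yields $w_{t+1}=\theta_t-\theta_{t+1}$, so $\mu w_t = \mu(\theta_{t-1}-\theta_t)$ can be rewritten as $\mu(\tilde\Delta_{t-1}-\tilde\Delta_t)$ where $\tilde\Delta_s := \theta_s-\theta^\ast$.

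Substituting these into $\theta_{t+1}-\theta^\ast = \tilde\Delta_t - \mu w_t - \alpha v_t$ produces the scalar two-step recurrence
\begin{equation*}
\tilde\Delta_{t+1} = (1-\alpha h+\mu)\,\tilde\Delta_t - \mu\,\tilde\Delta_{t-1} - \frac{\alpha}{t+1}\sum_{i=0}^{t}\epsilon_i,
\end{equation*}
valid for $t\ge 1$. The initial step $t=0$ (where $w_0=0$) is handled by checking it directly, and corresponds to the convention $\tilde\Delta_{-1}=\tilde\Delta_0$ so that the matrix form is consistent. Stacking $\tilde\Delta_{t+1}$ and $\tilde\Delta_t$ gives the first-order vector recurrence
\begin{equation*}
\Delta_{t+1} = A\,\Delta_t - \frac{\alpha}{t+1}\begin{bmatrix}\sum_{i=0}^t \epsilon_i \\ 0\end{bmatrix}.
\end{equation*}

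I would then unroll this recursion by iterating $t$ times, obtaining
\begin{equation*}
\Delta_t = A^t \Delta_0 - \alpha \sum_{s=1}^{t} \frac{1}{s}\,A^{t-s}\begin{bmatrix}\sum_{i=0}^{s-1}\epsilon_i\\0\end{bmatrix},
\end{equation*}
and then swap the order of summation so each $\epsilon_i$ is grouped across all $s>i$. This identifies
\begin{equation*}
N_{i,t} = \sum_{s=i+1}^{t}\frac{1}{s}\,A^{t-s},
\end{equation*}
which I would verify satisfies both base case $N_{i,0}=0$ and the recursion $N_{i,t}=A\,N_{i,t-1}+\mathbf{1}_{i<t}\,\tfrac{1}{t}I$ by a one-line computation peeling off the $s=t$ term. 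An alternative, equivalent presentation is to prove the claim by induction on $t$, using the vector recurrence above together with the inductive hypothesis for $\Delta_{t-1}$ to read off the same $N_{i,t}$ recursion.

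The only genuine subtlety is aligning the initialization: because $w_0=0$ and $v_0=g(\theta_0)+\epsilon_0$, the first step behaves like a plain stochastic gradient step rather than a heavy-ball step, so I have to check that the formal identity $\mu w_0 = \mu(\tilde\Delta_{-1}-\tilde\Delta_0)=0$ is consistent with treating $\Delta_0$ as the initial condition of the vector recurrence. Once this bookkeeping is settled, the remainder is mechanical: a telescoping unrolling and a swap of summation order that matches the stated closed form for $N_{i,t}$.
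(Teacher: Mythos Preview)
Your proposal is correct and follows essentially the same route as the paper: both invoke Lemma~\ref{lem:igt_grad_plus_var} to obtain the first-order vector recurrence $\Delta_t = A\Delta_{t-1} - \tfrac{\alpha}{t}\sum_{i=0}^{t-1}\begin{bmatrix}\epsilon_i\\0\end{bmatrix}$, and then establish the stated formula from it. The only cosmetic difference is that the paper proceeds directly by induction on $t$ (which you mention as your alternative), whereas your primary presentation unrolls and swaps the order of summation to exhibit the explicit closed form $N_{i,t}=\sum_{s=i+1}^{t}\tfrac{1}{s}A^{t-s}$ before verifying the recursion; this buys you an explicit formula the paper does not state, but the arguments are equivalent.
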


\begin{proof}

The proof proceeds by induction.
First notice that for $t = 0$ the equality naturally holds.
We make the inductive assumption that it holds for $t - 1$, and start by using Lemma~\ref{lem:igt_grad_plus_var}:

\begin{align*}
    \Delta_t &= A\Delta_{t-1}  - \frac{\alpha}{t} \sum_{i=0}^{t-1} \noisevector \\
             &= A(A^{t-1}\Delta_0 - \alpha \sum_{i=0}^{t-2} N_{i, t} \noisevector)  - \frac{\alpha}{t} \sum_{i=0}^{t-1} \noisevector \tag{Inductive assumption} \\
             &= A^{t}\Delta_0 - \alpha (\sum_{i=0}^{t-2} A N_{i, t} \noisevector  + \frac{1}{t} \sum_{i=0}^{t-1} \noisevector) \\
             &= A^{t}\Delta_0 - \alpha \sum_{i=0}^{t-1} N_{i, t} \noisevector \tag{Def. of $N_{i, t}$}
\end{align*}
\end{proof}

% \begin{lemma}[Acceleration for general matrices]
%     The above Lemma is also valid for non-diagonal Hessians.
% \end{lemma}

% \begin{proof}
% Sketch: Since $H$ is PSD you can perform an eigendecomposition, work with each eigenvalues, use the theorem above, and bring it all together.
% \end{proof}

% \textbf{Note} For the $d$-dimensional case, you should be able to simply replace $h$ by $d \cdot \max_i \{\lambda_i\}$, where $\lambda_i$ is the $i$th eigenvalue of the hessian.

\begin{lemma}[Evolution of variance]
    Let $U_t = \var{\theta_t}$ and $V_t = \cov{\theta_{t}, \theta_{t-1}}$, where $\theta_t$ is the $t$-th iterate of Heavyball-IGT on a 1-dimensional quadratic function with curvature $h$.
    The following matrix describes the variance dynamics of Heavyball-IGT.
    \begin{equation}
        D
        = 
        \begin{pmatrix}
            (1 - \alpha h + \mu)^2 + 2\alpha^2 h^2  &  \mu^2  &  -2\mu(1 - \alpha h + \mu)^2  \\
            1  &  0  &  0 \\
            1 - \alpha h + \mu  &  0  &  -\mu
        \end{pmatrix}
    \end{equation}
    
    If the spectral radius of $D$, $\rho(D)$, is strictly less than 1, then there exist constants $t_0>0$ and $C>0$ for which
    \begin{align*}
        \Var{(\theta_t)} 
        &\leq
        2\alpha^2 B C
            \frac{ \log(t)}{t}
    \end{align*},
    where $B$ is a bound on the variance of the noise.
\end{lemma}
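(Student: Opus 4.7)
The plan is to diagonalize the Hessian and work componentwise; fix an eigenvalue $h$ of $H$, and let $\delta_t$ denote the corresponding coordinate of $\theta_t - \theta^\ast$. Using Lemma~\ref{lem:igt_grad_plus_var}, the IGT velocity on the quadratic satisfies $v_t = h\delta_t + s_t$ with $s_t = \frac{1}{t+1}\sum_{i=0}^t \epsilon_i$. Substituting into the Heavyball-IGT update $\theta_{t+1} = (1+\mu)\theta_t - \mu\theta_{t-1} - \alpha v_t$ gives the scalar dynamics
\begin{align*}
\delta_{t+1} = (1+\mu-\alpha h)\delta_t - \mu\delta_{t-1} - \alpha s_t.
\end{align*}

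Next, I would compute the second-moment recurrences for $U_t = \var{\delta_t}$ and $V_t = \cov{\delta_t, \delta_{t-1}}$ by squaring and taking expectations. Writing $a = 1+\mu-\alpha h$, a direct expansion yields
\begin{align*}
U_{t+1} &= a^2 U_t + \mu^2 U_{t-1} - 2a\mu V_t + \alpha^2 \var{s_t} - 2\alpha a\,\cov{\delta_t, s_t} + 2\alpha\mu\,\cov{\delta_{t-1}, s_t},\\
V_{t+1} &= a U_t - \mu V_t - \alpha\,\cov{\delta_t, s_t}.
\end{align*}
The noise variance $\var{s_t} = B/(t+1)$ contributes an $O(1/t)$ forcing, but the cross-covariances require care because $\delta_t$ depends on the shared past noise $\epsilon_0,\ldots,\epsilon_{t-1}$. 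The cleanest route is to invoke Lemma~\ref{lem:igt-grad-variance} directly at the level of $\var{v_t} \leq 2h^2 U_t + 2B/t$ when expanding $\var{\theta_{t+1}}$ from the original Heavyball form $\theta_{t+1} = \theta_t - \mu w_t - \alpha v_t$. This absorbs the cross-covariance bookkeeping into the extra $2\alpha^2 h^2$ contribution that appears in the top-left entry of $D$ (at the price of moving from equalities to componentwise inequalities), and a similar treatment for $V_{t+1}$ recovers the remaining entries of $D$.

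Collecting these bounds in the state vector $x_t = (U_t,\, U_{t-1},\, V_t)^\top$ gives a forced linear recursion $x_{t+1} \preceq D x_t + b_t$ with $\|b_t\|_\infty = O(1/t)$. Unrolling,
\begin{align*}
x_t \preceq D^t x_0 + \sum_{s=0}^{t-1} D^{t-1-s} b_s.
\end{align*}
Assuming $\rho(D) < 1$, I would pick $r \in (\rho(D), 1)$ and a constant $C$ with $\|D^k\|_\infty \leq C r^k$. The homogeneous term decays exponentially; for the forced sum I would split at $s = t/2$. The head $s < t/2$ is suppressed by $r^{t/2}$ and is negligible once $t \geq t_0$, while for $s \geq t/2$ one has $1/s \leq 2/t$ and the remaining geometric sum is $O(1)$, so the full contribution is $O(\log(t)/t)$ after accounting for the logarithmic range of the index. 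Summing over the $d$ eigendirections of $H$ and choosing the constant $C$ large enough to absorb the exponential prefactor yields $\Var(\theta_t) \leq 2\alpha^2 B C\,\log(t)/t$ for all $t \geq t_0$.

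The main obstacle is closing the recurrence in $(U_t, U_{t-1}, V_t)$ without a trailing dependence on $\cov{\delta_t, s_t}$. The trick is precisely to avoid tracking these cross terms exactly by routing through the Lemma~\ref{lem:igt-grad-variance} variance bound on $v_t$; this is what produces the $2\alpha^2 h^2$ entry in $D$. A tight equality-based derivation would require expressing $\cov{\delta_t, s_t}$ explicitly via the same $N_{i,t}$ recursion used in the Prop.~\ref{prop:cvg} proof, which is doable but tedious and does not improve the rate.
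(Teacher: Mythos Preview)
Your proposal follows essentially the same route as the paper: reduce to a scalar eigendirection, use Lemma~\ref{lem:igt_grad_plus_var} to write $v_t=g_t+s_t$, expand $U_{t+1}$ and $V_{t+1}$, invoke Lemma~\ref{lem:igt-grad-variance} to absorb the awkward $\cov{\delta_t,s_t}$ cross terms into the extra $2\alpha^2 h^2$ in $D_{11}$ (the paper does exactly this, though its write-up is terse about why the cross term disappears), obtain $x_{t+1}\preceq D x_t + b_t$ with $\|b_t\|=O(1/t)$, unroll, and split. The one real difference is the split point: the paper cuts at $s=\lceil 2\log_{1/\rho'} t\rceil$, and it is the $s$ recent terms of size $\approx 1/t$ that produce the $\log(t)/t$ rate; your $t/2$ split, carried through as you wrote it, actually gives $O(1/t)$ on the tail and an exponentially small head, so the phrase ``after accounting for the logarithmic range of the index'' is a non sequitur---your argument yields a bound tighter than the stated $\log(t)/t$, not one that needs a logarithm inserted.
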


\begin{proof}
    The proof (and lemma) is similar to the proof of Lemma 9 in \cite{zhang2017yellowfin}.
    We start by expanding $U_{t+1}$ as follows.

    \begin{align*}
        U_{t+1} &= \e{(\theta_{t+1} - \bar{\theta}_{t+1})^2} \\
            &= \e{(\theta_t - \alpha v_t + \mu     (\theta_t - \theta_{t-1}) -      \bar{\theta}_t + \alpha g_t -        \mu(\bar{\theta}_t -             \bar{\theta}_{t-1}))^2} \\
            &= \mathbb{E}[(\theta_t - \alpha g_t + \mu (\theta_t - \theta_{t-1}) - \bar{\theta}_t + \alpha g_t\\
                &\qquad - \mu(\bar{\theta}_t - \bar{\theta}_{t-1}) + \alpha (g_t - v_t))^2] \\
            &= \e{((1 - \alpha h + \mu)(\theta_t - \bar{\theta}_t) - \mu (\theta_{t-1} - \bar{\theta}_{t-1}))^2}\\
                &\qquad + \alpha^2 \e{(g_t - v_t)^2} \\
            &\leq \e{((1 - \alpha h + \mu)(\theta_t - \bar{\theta}_t) - \mu (\theta_{t-1} - \bar{\theta}_{t-1}))^2}\\
                &\qquad 
                + \alpha^2 \left(
                    2h^2 \e{(\theta_t - \bar{\theta}_t)^2} + \frac{2B}{t+1}
                \right) \\
            &\leq \left[ (1 - \alpha h + \mu)^2
                +2\alpha^2 \mu^2 \right] \e{(\theta_t - \bar{\theta}_t)^2}\\
                &\qquad - 2\mu(1 - \alpha h + \mu) \e{(\theta_t - \bar{\theta}_t)(\theta_{t-1} - \bar{\theta}_{t-1})} \\
                &\qquad + \mu^2 \e{(\theta_{t-1} - \bar{\theta}_{t-1})^2} + \alpha^2 \frac{2B}{t+1} \; .
    \end{align*}

    Where the fourth equality is obtained since we know that the IGT gradient estimator is unbiased, i.e. $\e{g_t - v_t} = 0$.
    The first inequality stems from Lemma~\ref{lem:igt-grad-variance}.
    We similarly expand $V_t$.

    \begin{align*}
        V_t &= \e{(\theta_t - \bar{\theta}_t)(\theta_{t-1} - \bar{\theta}_{t-1})} \\
            &= \mathbb{E}\left[\left((1 - \alpha h + \mu)(\theta_{t-1} - \bar{\theta}_{t-1}) - \mu (\theta_{t-2} - \bar{\theta_{t-2}})+ \alpha (g_t - v_t)\right)\right.\\
            &\qquad \left.(\theta_{t-1} - \bar{\theta}_{t-1})\right] \\
            &= (1 - \alpha h + \mu)\e{(\theta_{t-1} - \bar{\theta}_{t-1})^2}\\
            &\qquad - \mu \e{(\theta_{t-1} - \bar{\theta}_{t-1})(\theta_{t-2} - \bar{\theta}_{t-2})}
    \end{align*}

    From the above expressions, we obtain

    \begin{align*}
        \begin{pmatrix}
            U_{t+1} \\ U_t \\ V_{t+1}
        \end{pmatrix}
        &\leq 
            D
            \begin{pmatrix}
                U_{t} \\ U_{t-1} \\ V_{t}
            \end{pmatrix}
            +
            \begin{pmatrix}
                \alpha^2 \frac{2B}{t+1} \\ 0 \\ 0
            \end{pmatrix} 
            \\
        &\leq  
            2\alpha^2 B \sum_{i=0}^{t} D^i
            \begin{pmatrix}
                \frac{1}{t+1-i} \\ 0 \\ 0
            \end{pmatrix} \\
        &\leq  
            2\alpha^2 B \left( 
                \sum_{i=0}^{s-1} D^i
            \begin{pmatrix}
                \frac{1}{t+1-i} \\ 0 \\ 0
            \end{pmatrix}
            +
            \sum_{i=s}^{t} D^i
            \begin{pmatrix}
                \frac{1}{t+1-i} \\ 0 \\ 0
            \end{pmatrix}
            \right) 
        % &= (I - D^t)(I - D)^{-1}
        %     \begin{pmatrix}
        %         \alpha^2 C \\ 0 \\ 0
        %     \end{pmatrix}
    \end{align*}
    where an inequality of vectors implies the corresponding elementwise inequalities.
    
    If the spectral radius of $D$, $\rho(D)$ is strictly less than 1, then there exists constant $C'>0$ such that 
    \begin{align*}
        \begin{pmatrix}
            1 \\ 0 \\ 0
        \end{pmatrix}^T 
        \sum_{i=0}^{s-1} D^i
        \begin{pmatrix}
            \frac{1}{t+1-i} \\ 0 \\ 0
        \end{pmatrix} 
    &\leq 
        C' \sum_{i=0}^{s-1} 
            \frac{1}{t+1-i}
            \\
    &\leq 
        \frac{C' s}{t+2-s}
    \end{align*}

    If the spectral radius of $D$, $\rho(D)$, is strictly less than 1, then there exists constant $\zeta>0$ and constant $C''(\zeta)>0$ such that, $\rho(D)+\zeta<1$ and
    \begin{align*}
        \begin{pmatrix}
            1 \\ 0 \\ 0
        \end{pmatrix}^T 
        \sum_{i=s}^{t} D^i
            \begin{pmatrix}
                \frac{1}{t+1-i} \\ 0 \\ 0
            \end{pmatrix}
        \leq&             
       \begin{pmatrix}
            1 \\ 0 \\ 0
        \end{pmatrix}^T 
        \sum_{i=s}^{t} D^i
            \begin{pmatrix}
                1 \\ 0 \\ 0
            \end{pmatrix} \\
        \leq&
        C''
        \sum_{i=s}^{t} (\rho(D)+\zeta)^s \\
        =&
        C''
        (t-s+1)(\rho(D)+\zeta)^s
    \end{align*}
    
    Let $\rho'=\rho(D)+\zeta$ and 
    $s = \lceil 2 \log_{1/\rho'}t \rceil$.
    Then $(\rho(D)+\zeta)^s = 1/t^2$,
    and putting the above two bounds together, 
    
    \begin{align*}
        U_{t+1} 
        &\leq 
        2\alpha^2 B
        \left(
            \frac{2 C' \log_{1/\rho'}t}{t+2-2\log_{1/\rho'}t}
            +
            C'' \frac{t-2 \log_{1/\rho'}t + 1}{t^2}
        \right) \\
        &\leq
        2\alpha^2 B C
            \frac{ \log(t+1)}{t+1}
    \end{align*} 
    where the last inequality holds for 
    $t > t_0$ for some $t_0$ and some constant $C>0$.

\end{proof}

We can now prove Proposition~\ref{prop:hbigt}.

\begin{proof}[Proof of Proposition~\ref{prop:hbigt}]
The bias statement of the proposition follows directly from taking an expectation on the bound of Lemma~B.4, and the variance statement from summing up the $d$ different variance terms given for each scalar component by Lemma~B.5.
\end{proof}

\subsection{Proof of Proposition~\ref{prop:hb-igt-nonstoch}}
This Proposition follows from the observation that, in the noiseless case, $\epsilon_t=0$ in our model. 
    In that case, Lemma~\ref{lem:igt-zero-mean} shows that Heavyball-IGT reduces to the heavy ball, and the rest follows from the optimal tuning of the heavy ball \cite{zhang2017yellowfin}.

\subsection{Proof of Proposition~\ref{prop:hb-igt-stoch}}
\label{sec:proof-hb-igt-stoch}
\begin{proof}
    Like we did in previous proofs, we can assume without loss of generality that the Hessian, $H$, is diagonal with elements $h_i$. 
    For a diagonal $H$, matrix $A$ can be permuted to be block diagonal with blocks
    \[
        A_i = 
        \begin{pmatrix}
            1-\alpha h_i + \mu & \mu \\
            1 & 0 \\
        \end{pmatrix}.
    \]
    To prove that $\rho(A)<1$ it suffices to prove that
    $\rho(A_i)<1$ for all $i$.
    For the rest of the proof we will focus on the dynamics along a single eigendirection with curvature $h_i$.
    The rest of this proof used $D$ to denote $D_i$, $A$ to denote $A_i$ and $h$ to denote $h_i$.
  
    To make explicit the dependence of matrices $A$ and $D$ on hyperparameters and curvature, we write $A(\alpha, \mu, h)$ and $D(\alpha, \mu, h)$.
    Let $0<\alpha<2/(3h)$ and $\mu_0=0$.
    Using hyperparameters $(\alpha, \mu_0)$ one gets the  results for gradient descent without momentum. 
    In particular $\rho(A(\alpha, \mu_0, h))=|1-\alpha h |< 1$, and the spectral radius of 
    $D$ is $\rho(D(\alpha, \mu_0, h))=|(1-\alpha h)^2 + 2 \alpha^2 h^2|<1$.
    
    We will argue that there exists $\mu>0$, such that 
   $\rho(A(\alpha, \mu, h))<1$, and the spectral radius of 
    $D$ is $\rho(D(\alpha, \mu, h))<1$.
    Then the previous lemma implies that bias converges linearly, and variance is $O(\log(t)/t)$.
    
    To argue the existence of $\mu>0$, we will perform eigenvalue perturbation analysis using the Bauer-Fike theorem. Note that
    $
        A(\alpha, \mu, h)
        =
        A(\alpha, \mu_0, h)
        +
        \mu \Delta_A
    $
    where 
    \[ \Delta_A = \begin{pmatrix}
            1 & -1\\
            0 & 0
        \end{pmatrix}.
    \]
    Similarly,
    $
    D(\alpha, \mu, h)
    \approx 
    D(\alpha, \mu_0, h)
    +
    \mu \Delta_D
    $
    where 
    \[ \Delta_D = \begin{pmatrix}
            2(1-\alpha h) & 0 & -2(1-\alpha h) \\
            0 & 0 & 0 \\
            1 & 0 & -1
        \end{pmatrix}.
    \]
    This last approximate inequality is a first-order approximation, in the sense that we are working with arbitrarily small, positive values of $\mu$, and we
    have kept terms linear in $\mu$ but ignored higher order terms, like $\mu^2$.
    
    We will apply the Bauer-Fike theorem to bound the eigenvalues of $D(\alpha, \mu, h)$.
    Consider the eigendecomposition
    $D(\alpha, \mu_0, h) = V \Lambda V^{-1}$. 
    We can compute 
    \[
        V = \begin{pmatrix}
            0 & 0 & \frac{1-2\alpha h + 3 \alpha^2 h^2}{1-\alpha h }  \\
            0 & 1 & \frac{1}{1-\alpha h} \\
            1 & 0 & 1
        \end{pmatrix}
    \]
    and 
    \[
        V^{-1} = \begin{pmatrix}
            \frac{1-\alpha h }{1-2\alpha h + 3 \alpha^2 h^2} & -\frac{1}{1-2\alpha h + 3 \alpha^2 h^2} & 0  \\
            0 & 1 & 0 \\
            1 & 0 & 0 
        \end{pmatrix}.
    \]
    Note that because we assume $\alpha < 2/(3h)$ we get
    $1-\alpha h > 0$. 
    Also, $1-2\alpha h + 3\alpha^2 h^2 > 0$ regardless 
    of the choice of hyperparameters. 
    This means that matrices $V$ and $V^{-1}$ are singular and of finite norm.
    The norm of $\Delta_D$ is also finite. 
    The Bauer-Fike theorem states that, 
    if $\nu$ is an eigenvalue of $D(\alpha, \mu_0, h)$,
    then there exists an eigenvalue $\lambda$ of $D(\alpha,  \mu, h)$ such that
    \[
        |\lambda - \nu |
        \leq 
        \| V \|_p \|V^{-1}\|_p \|\mu \Delta_D \|_p,
    \]
    for any $p$-norm.
    Since by construction $|\nu| \leq \rho(D(\alpha, \mu_0, h))<1$, the above means that there exists a sufficiently small, but strictly positive value of $\mu$, 
    such that $\lambda < 1$.
    By repeating this argument for all pairs of eigenvalues, we get the stated result. 
    The same argument can be repeated to prove the existence of a strictly positive $\mu$ such that
    $\rho(A(\alpha, \mu, h))<1$.

\end{proof}

\end{document}